\def\set@curr@file#1{\def\@curr@file{#1}} 
\definecolor{Green}{rgb}{0.13, 0.65, 0.3}
\newcommand{\BiGraph}{Directed Complete Bipartite Graphs}
\newcommand{\bigraph}{directed complete bipartite graphs\xspace}
\newcommand{\expthreeG}{\textsc{Exp3.G}\xspace}
\newcommand{\calA}{{\mathcal{A}}}
\newcommand{\calD}{{\mathcal{D}}}
\newcommand{\Reg}{\text{\rm Reg}}
\newcommand{\one}{\boldsymbol{1}}
\newcommand{\KL}{\text{\rm KL}}
\newcommand{\isstar}{i_S^\star}
\DeclareMathOperator*{\diag}{diag}
\DeclareMathOperator*{\argmin}{argmin}
\newcommand{\field}[1]{\mathbb{#1}}
\newcommand{\fR}{\field{R}}
\newcommand{\E}{\field{E}}
\newcommand{\inner}[1]{ \left\langle {#1} \right\rangle }
\newcommand{\norm}[1]{\left\|{#1}\right\|}
\newcommand{\wh}{\widehat}
\newcommand{\order}{\ensuremath{\mathcal{O}}}
\newcommand{\otil}{\ensuremath{\tilde{\mathcal{O}}}}
\newcommand{\const}{64}
\newcommand{\constK}{\const K}
\newcommand{\Nin}{N^{\text{in}}}
\newcommand{\wklydn}{d}
\newcommand{\wklyds}{\mathcal{D}}
\newcommand{\brgmd}{D}
\newcommand{\LC}{L_{\wklyds}}
\newcommand{\slpn}{s}
\newcommand{\woslpn}{\bar{s}}
\newcommand{\initp}{$p_1$ is such that $p_{1,i} = \frac{1}{2\slpn}$ for $i \in S$ and $p_{1,i} = \frac{1}{2\woslpn}$ for $i \in \bar{S}$.}
\newcommand{\mtep}{m}
\newcommand{\pref}[1]{\prettyref{#1}}
\newcommand{\savehyperref}[2]{\texorpdfstring{\hyperref[#1]{#2}}{#2}}
\DeclareOldFontCommand{\rm}{\normalfont\rmfamily}{\mathrm}
\DeclareOldFontCommand{\it}{\normalfont\itshape}{\mathit}
\title[A Closer Look at Small-loss Bounds for Bandits with Graph Feedback]{A Closer Look at Small-loss Bounds for Bandits with Graph Feedback}
\begin{document}
\SetAlgoVlined
\DontPrintSemicolon
\maketitle


\begin{abstract}
We study {\it small-loss} bounds for adversarial multi-armed bandits with graph feedback, that is, adaptive regret bounds that depend on the loss of the best arm or related quantities, instead of the total number of rounds.
We derive the first small-loss bound for general strongly observable graphs, resolving an open problem of~\citet{DBLP:journals/corr/abs-1711-03639}.
Specifically, we develop an algorithm with regret $\mathcal{\tilde{O}}(\sqrt{\kappa L_\star})$ where $\kappa$ is the clique partition number and $L_\star$ is the loss of the best arm, and for the special case of self-aware graphs where every arm has a self-loop, we improve the regret to $\mathcal{\tilde{O}}(\min\{\sqrt{\alpha T}, \sqrt{\kappa L_\star}\})$ where $\alpha \leq \kappa$ is the independence number.
Our results significantly improve and extend those by~\citet{DBLP:journals/corr/abs-1711-03639} who only consider self-aware undirected graphs. 


Furthermore, we also take the first attempt at deriving small-loss bounds for weakly observable graphs.
We first prove that no typical small-loss bounds are achievable in this case, and then propose algorithms with alternative small-loss bounds in terms of the loss of some specific subset of arms.
A surprising side result is that $\otil(\sqrt{T})$ regret is achievable even for weakly observable graphs as long as the best arm has a self-loop.

Our algorithms are based on the Online Mirror Descent framework but require a suite of novel techniques that might be of independent interest.
Moreover, all our algorithms can be made parameter-free without the knowledge of the environment.

\end{abstract}

\begin{keywords}
multi-armed bandits, feedback graph, small-loss bounds.
\end{keywords}

\section{Introduction}

Adversarial multi-armed bandits with graph feedback is an online learning model that generalizes the classic expert problem~\citep{freund1997decision} as well as the standard multi-armed bandits problem~\citep{auer2002nonstochastic}.
In this model, the learner needs to choose one of the $K$ arms at each round, while simultaneously the adversary decides the loss of each arm. 
Afterwards, the learner receives feedback based on a graph with the $K$ arms as nodes.
Specifically, the learner observes the loss of every arm to which the chosen arm is connected.
Clearly, the full-information expert problem corresponds to having a complete feedback graph, while the standard multi-armed bandits problem corresponds to having a feedback graph with only self-loops.

\citet{pmlr-v40-Alon15} provided a full characterization of the minimax regret for this problem.
Specifically, it was shown that the minimax regret for strongly observable graphs and that for weakly observable graphs are $\tilde{\Theta}(\sqrt{\alpha T})$ and $\tilde{\Theta}({\wklydn}^{\nicefrac{1}{3}}T^{\nicefrac{2}{3}})$ respectively, where $T$ is the total number of rounds and $\alpha$ and $\wklydn$ are the independence number and {weak domination number} of the feedback graph respectively (see \pref{sec:setup} for definitions).

However, it is well-known that more adaptive data-dependent regret bounds are achievable for a wide range of online learning problems.
Among them, perhaps the most common one is the so-called {\it first-order} or {\it small-loss} bounds, which replaces the dependence on $T$ by the total loss of the best arm $L_\star \leq T$.
Such bounds are usually never worse than the worst-case bounds, but could be potentially much smaller if a relatively good arm exists.
Achieving small-loss bounds for bandits with graph feedback has been surprisingly challenging.
\citet{DBLP:journals/corr/abs-1711-03639} took the first attempt and their algorithms achieve regret $\mathcal{\tilde{O}}(\alpha^{\nicefrac{1}{3}}{L_\star}^{\nicefrac{2}{3}})$ or $\tilde{\mathcal{O}}(\sqrt{\kappa L_\star})$ ($\kappa$ is the clique partition number), but {\it only for self-aware undirected graphs} (self-aware means that every node has a self-loop).
It was left as a major open problem whether better and more general small-loss bounds are achievable.

\renewcommand{\arraystretch}{1.2}
\begin{table}
\centering
\caption{\small Main results and comparisons with prior work. 
$T$ is the number of rounds, $L_\star \leq T$ is the total loss of the best arm,
$\alpha$, $\kappa$, and $\wklydn$ are the independence, clique partition, and {weak domination number} respectively.
For our results for weakly observable graphs, $\gamma$ can be any value in $[\nicefrac{1}{3}, \nicefrac{1}{2}]$, $i^\star$ is the best arm, $S$ is the set of nodes with a self-loop, 
$L_{i_S^\star}$ is the loss of the best arm in $S$, 
$L_\wklyds$ is the average loss of nodes in a weakly dominating set,
and dependence on other parameters is omitted. 
See \pref{sec:setup} for detailed definitions.
All our algorithms have parameter-free versions.
}
\label{tab:table1}
\resizebox{\textwidth}{!}{%
\begin{tabular}{|cc|c|c|c|c|}
\hline
\multicolumn{2}{|c|}{\multirow{3}{*}{Graph Type}} & \multicolumn{3}{c|}{Regret} \\
\cline{3-5} 
& & \makecell{Minimax \\ \scriptsize\citep{pmlr-v40-Alon15}} & \small\citep{DBLP:journals/corr/abs-1711-03639} & \multicolumn{1}{c|}{\textbf{Our work}} \\
\hline
\multirow{3}{*}{\makecell{Strongly \\ Observable}} & \multicolumn{1}{|c|}{\small General} & \multirow{3}{*}{$\tilde{\Theta}(\sqrt{\alpha T})$} &N/A & \multicolumn{1}{c|}{$\tilde{\mathcal{O}}(\sqrt{(\kappa+1) L_\star})$} \\
\cline{2-2}
\cline{4-5}
& \multicolumn{1}{|c|}{\makecell{\small Special case: \\ \small self-aware}} & & \makecell{\vspace{-10pt}\ \\ $\mathcal{\tilde{O}}(\alpha^{\nicefrac{1}{3}}{L_\star}^{\nicefrac{2}{3}})$, $\tilde{\mathcal{O}}(\sqrt{\kappa L_\star})$ \\ \small(undirected graphs only)} & \multicolumn{1}{c|}{$\mathcal{\tilde{O}}(\min\{\sqrt{\kappa L_\star}, \sqrt{\alpha T}\})$} \\
\hline
\multirow{4}{*}{\makecell{Weakly \\ Observable}} &  \multicolumn{1}{|c|}{\small General} & $\tilde{\Theta}({\wklydn}^{\nicefrac{1}{3}}T^{\nicefrac{2}{3}})$ & \multirow{4}{*}{N/A} & 
\makecell{
(no $o(L_\star)$ 
 bounds achievable) \\
$
\begin{cases}
    \tilde{\mathcal{O}}(L_{\wklyds}^{1-\gamma}),&\text{if $i^\star\in S$},\\
    \tilde{\mathcal{O}}(L_{\wklyds}^{\nicefrac{(1+\gamma)}{2}}),&\text{else}.
\end{cases}
$ 
}\\
\cline{2-3}
\cline{5-5}
& \multicolumn{1}{|c|}{\makecell{\small Special case: \\ \small bipartite}} &  $\tilde{\Theta}(T^{\nicefrac{2}{3}})$ & & 
$
\begin{cases}
    \tilde{\mathcal{O}}(\sqrt{L_\star}),&\text{if $i^\star\in S$},\\
    \tilde{\mathcal{O}}(L_{i_S^\star}^{
    \nicefrac{2}{3}}),&\text{else}.
\end{cases}
$ \\
\hline
\end{tabular}
}
\end{table}

Our work makes a significant step towards a full understanding of small-loss bounds for bandits with a fixed directed feedback graph.
Specifically, our contributions are (see also \pref{tab:table1}):
\begin{itemize}
    \item (\pref{sec: kappaLstar}) For general strongly observable graphs, we develop an algorithm with regret $\mathcal{\tilde{O}}(\sqrt{(\kappa +1) L_\star})$.
    This is the first small-loss bound for the general case, extending the results of~\citep{DBLP:journals/corr/abs-1711-03639} that only hold for self-aware undirected graphs and resolving an open problem therein.
    \item (\pref{sec: minAlphaKappa}) For the special case of self-aware (directed) graphs, we develop an algorithm with regret $\mathcal{\tilde{O}}(\min\{\sqrt{\alpha T},\sqrt{\kappa L_\star}\})$, again strictly improving~\citep{DBLP:journals/corr/abs-1711-03639} by providing an extra robustness guarantee (note that $\alpha \leq \kappa$ always holds).
    \item (\pref{sec:weakly}) For weakly observable graphs (where small-loss bounds have not been studied before at all), we prove that {\it no algorithm can achieve typical small-loss bounds} (such as $o(L_\star)$). Despite this negative result, we develop an algorithm with regret $\tilde{\mathcal{O}}(L_{\wklyds}^{\nicefrac{2}{3}})$ where $L_{\wklyds}$ is the average loss of a weakly dominating set (and dependence on other parameters is omitted for simplicity). More generally, we also achieve different trade-offs between the case when the best arm has a self-loop and the case without, such as $\tilde{\mathcal{O}}(\sqrt{L_{\wklyds}})$ versus $\tilde{\mathcal{O}}(L_{\wklyds}^{\nicefrac{3}{4}})$.
We further consider a special case with a complete bipartite graph, and show that our algorithm achieves $\tilde{\mathcal{O}}(\sqrt{L_\star})$ when the best arm has a self-loop and $\tilde{\mathcal{O}}(L_{i_S^\star}^{\nicefrac{2}{3}})$ otherwise, where $L_{i_S^\star}$ is the loss of the best arm with a self-loop.
A surprising implication of our result is that $\tilde{\mathcal{O}}(\sqrt{T})$ regret is possible even for weakly observable graphs as long as the best arm has a self-loop.
    \item (Appendix) We provide parameter-free versions of all our algorithms using sophisticated doubling tricks, which we emphasize is highly non-trivial for bandit settings, especially because some of our algorithms consist of a layer structure {combining} different subroutines.
\end{itemize}

Our algorithms are based on the well-known Online Mirror Descent framework, but importantly with a suite of different techniques including hybrid regularizers, unconstrained loss shifting trick, increasing learning rates, combining algorithms with partial information, adding correction terms to loss estimators, and their combination in an innovative way.
We defer further discussion on the novelty of each component and comparisons with prior work to the description of each algorithm.

\paragraph{Related work.}
The bandits with graph feedback model was first proposed by~\citep{mannor2011bandits}. 
Later, \citet{pmlr-v40-Alon15,alon2017nonstochastic} gave a full characterization of the minimax regret for this problem. 
There are many follow-ups that consider different variants and extensions of this problem, such as~\citep{kocak2016online, feng2018online, rangi2018online, arora2019bandits}.

Small-loss bounds have been widely studied in the online learning literature.
For the full-information expert problem, the classic Hedge algorithm~\citep{freund1997decision} achieves $\otil(\sqrt{L_\star})$ regret already.
For the standard multi-armed bandits problem and its variant semi-bandits, there are also several different approaches to achieve $\otil(\sqrt{L_\star})$ regret~\citep{allenberg2006hannan, neu2015first, foster2016learning, pmlr-v75-wei18a, bubeck2020first}.
Even for the challenging contextual bandits setting (which is in fact a special case of learning with time-varying feedback graphs), 
it was shown by~\citet{allen2018make} that $\otil(\sqrt{L_\star})$ regret is also achievable.

The work most related to ours is~\citep{DBLP:journals/corr/abs-1711-03639}.
As mentioned, we significantly extend their results to more general graphs, including graphs with directed edges, graphs without self-loops, and even weakly observable graphs, and we also improve their bound for self-aware graphs.
Our algorithms are also based on very different ideas compared to theirs which are mainly built on the recursive arm freezing technique.
We point out that, however, they also studied high probability bounds and time-varying graphs for some cases, which is not the focus of this work.

\section{Problem Setup and Notations}\label{sec:setup}

Throughout the paper, we denote $\{1,\dots,m\}$ by $[m]$ for some positive integer $m$. Before the game starts, the adversary decides a sequence of $T$ loss vectors $\ell_1,\dots,\ell_T\in [0,1]^K$ for some integers $K \geq 2$ and $T\geq2K$, and a directed feedback graph $G=([K],E)$ for $E\subseteq [K]\times[K]$ which is fixed and known. Each node in the graph represents one of the $K$ arms, and in this paper we use the terms ``arm'' and ``node'' interchangeably. At each round $t\in [T]$, the learner selects an arm $i_t\in[K]$ and incurs loss $\ell_{t,i_t}$. At the end of this round, the learner receives feedback according to $G$. Specifically, the learner observes the loss of arm $i$ for all $i$ such that $i_t\in \Nin (i)$, where $\Nin(i) \triangleq \{j:(j,i)\in E\}$ is the set of nodes that can observe $i$. 
The \emph{regret} with respect to an arm $i$ is defined as 
$
\Reg_i\triangleq\mathbb{E}\left[\sum^T_{t=1}\ell_{t,i_t}-\sum_{t=1}^{T}\ell_{t,i}\right],
$
which is the expected difference between the learner's total loss and that of arm $i$
(the expectation is with respect to the learner's internal randomness). 
We denote the best arm by $i^\star\triangleq \argmin_{i\in [K]}\sum_{t=1}^{T}\ell_{t,i}$ and define $\Reg\triangleq\Reg_{i^\star}$. 
\citet{pmlr-v40-Alon15} show that the minimax regret (in terms of $T$) for strongly observable graphs and weakly observable graphs (definitions to follow) are $\Reg = \tilde{\Theta}(\sqrt{T})$ and $\Reg = \tilde{\Theta}(T^{\nicefrac{2}{3}})$ respectively.


Our goal is to obtain the so-called \emph{small-loss} regret bounds that could potentially be much smaller than the minimax bounds. Specifically, for an arm $i$, we denote its total loss by $L_{i}\triangleq\sum^T_{t=1}\ell_{t,i} \leq T$, and we use the shorthand $L_\star\triangleq L_{i^\star}$. 
Our goal is to replace the dependence of $T$ by $L_\star$ when bounding $\Reg$, or more generally, to replace $T$ by $L_i$ when bounding $\Reg_i$ for each arm $i$.
Below, we introduce some graph-related notions and other notations necessary for discussions.

\paragraph{Observability.} 
A node $i$ is \emph{observable} if $\Nin(i) \neq \emptyset$.
An observable node is \emph{strongly observable} if either $i\in \Nin(i)$ or $\Nin(i)=[K]\backslash\{i\}$, and \emph{weakly observable} otherwise. 
Similarly, a graph is observable if all of its nodes are observable.
An observable graph is strongly observable if all nodes are strongly observable, and weakly observable otherwise.

\paragraph{Special cases.} 
We denote by $S\triangleq \{i\in[K]:i\in \Nin(i)\}$ the subset of nodes with a self-loop, and by $\bar{S}\triangleq[K]\backslash S$ the subset of nodes without a self-loop. We further use $\slpn$ and $\woslpn$ to denote $|S|$ and $|\bar{S}|$.
A graph is \emph{self-aware} if $S=[K]$, that is, every node has a self-loop, which is a special case of strongly observable graphs. 
We also consider a special case of weakly observable graphs with $(i, j) \in E$ for every $i \in S$ and every $j \in \bar{S}$, and call it a \emph{directed complete bipartite graph}.\footnote{Note that unlike the traditional definition of bipartite graphs, here we allow additional edges other than those from $S$ to $\bar{S}$, as adding more edges only makes the problem easier.} 

\paragraph{Independence sets and cliques.} 
We remind the reader the standard concepts of independence sets and cliques.
An {independent set} $\mathcal{I}$ is a subset of nodes such that for any two distinct nodes $i,j\in\mathcal{I}$, we have $(i,j)\notin E$. 
The {independence number} of a graph is the cardinality of its largest independent set. 
A clique $\mathcal{C}$ is a subset of nodes such that for any two distinct nodes $i,j\in\mathcal{C}$, we have $(i,j) \in E$. 
A clique partition $\{\mathcal{C}_1,\dots,\mathcal{C}_m\}$ of a graph is a partition of its nodes such that each $\mathcal{C}_k$ in this collection is a clique. The clique partition number of a graph is the cardinality of its smallest clique partition. 
As in previous works, our bounds for strongly observable graphs depend on the independence number $\alpha$ of $G$, or the clique partition number $\kappa$ of the subgraph $G_S$ obtained by restricting $G$ to only the nodes in $S$.
Note that $\alpha \leq \kappa+1$ always holds.

Some of our algorithms rely on having a clique partition of $G_S$,
which we assume is given, even though it is in general NP-hard to find~\citep{karp1972reducibility}. 
We emphasize that, however, our algorithms work with any clique partition and the bounds hold with $\kappa$ replaced by the size of this partition.

\paragraph{Weakly dominating sets.} 
Following~\citep{pmlr-v40-Alon15}, for a weakly observable graph, we define a \emph{weakly dominating set} $\wklyds$ to be a set of nodes such that all weakly observable nodes can be observed by at least one node in $\wklyds$.
Our bounds for weakly observable graphs depend on the {\emph{weak domination number}} $\wklydn$ of graph $G$, which is the cardinality of its smallest weakly dominating set. 
Similarly, we assume that a weakly dominating set of size $d$ is given, but our algorithms work using any weakly dominating set and our bounds hold with $d$ replaced by the size of this set.

\paragraph{Other notations.}
For a differentiable convex function $\psi$ defined on a convex set $\Omega$, the associated Bregman divergence is defined as $\brgmd_\psi(x,y)=\psi(x)-\psi(y)-\inner{\nabla\psi(y),x-y}$ for any two points $x,y\in \Omega$. 
For a positive definite matrix $M \in \fR^{K\times K}$, we define norm $\norm{z}_M \triangleq \sqrt{z^\top M z}$ for any vector $z \in \fR^K$.
For two matrices $M_1$ and $M_2$, $M_1 \preceq M_2$ means that $M_2 - M_1$ is positive semi-definite, and for two vectors $v_1$ and $v_2$, $v_1 \preceq v_2$ means that $v_1$ is coordinate-wise less than or equal to $v_2$.
We denote the $(K-1)$-dimensional simplex by $\Delta(K)$, 
the diagonal matrix with $v_1, \ldots, v_K$ on the diagonal by $\diag\{v_1, \ldots, v_K\}$,
and the all-zero and all-one vectors in $\mathbb{R}^K$ by $\mathbf{0}$ and $\mathbf{1}$ respectively. 
The notation $\otil(\cdot)$ hides logarithmic dependence on $K$ and $T$.


\section{Strongly Observable Graphs}
\label{sec:strongly}

In this section, we focus on strongly observable graphs. 
We first show how to achieve $\Reg = \mathcal{\tilde{O}}(\sqrt{(\kappa+1) L_\star})$ in  general, and then discuss how to further improve it to $\mathcal{\tilde{O}}(\min\{\sqrt{\alpha T}, \sqrt{\kappa L_\star}\})$ for the special case of self-aware graphs (\pref{sec: minAlphaKappa}).

There are three key components/ideas to achieve $\mathcal{\tilde{O}}(\sqrt{(\kappa+1) L_\star})$ regret. 
Specifically, starting from the \expthreeG algorithm of~\citep{pmlr-v40-Alon15}, which is an instance of Online Mirror Descent (OMD) with the entropy regularizer, natural loss estimators for graph feedback, and an additional $\Theta(1/\sqrt{T})$ amount of uniform exploration, we make the following three modifications:
\begin{itemize}
\item (\pref{sec: remove uniform exploration}) Reduce the amount of uniform exploration to $\Theta(1/T)$ and add a constant amount of log-barrier to the regularizer. We show that this modification maintains the same $\mathcal{\tilde{O}}(\sqrt{\alpha T})$ regret as \expthreeG, but importantly, the smaller amount of uniform exploration is the key for further obtaining small-loss bounds.
\item (\pref{sec: betaLstar}) Replace the entropy regularizer with the log-barrier regularizer for nodes in $S$. This leads to a small-loss bound of order $\mathcal{\tilde{O}}(\sqrt{({\slpn}+1)L_\star})$. 
\item (\pref{sec: kappaLstar}) Create a clique partition for nodes in $S$, run a Hedge variant within each clique, and run the algorithm from \pref{sec: betaLstar} treating each clique as a meta-node,
which finally improves the regret to $\mathcal{\tilde{O}}(\sqrt{(\kappa+1) L_\star})$.
This part relies on highly nontrivial extensions of techniques from~\citep{DBLP:journals/corr/AgarwalLNS16} on combining algorithms in the bandit setting.
\end{itemize}



\subsection{Reducing the Amount of Uniform Exploration}\label{sec: remove uniform exploration}

We start by describing the \expthreeG algorithm of~\citep{pmlr-v40-Alon15}.
It maintains a distribution $p_t \in \Delta(K)$ for each time $t$, and samples $i_t$ according to $p_t$.
Then a standard importance-weighted loss estimator $\hat{\ell}_t$ is constructed such that 
\begin{equation}\label{eq:estimator}
\hat{\ell}_{t,i} = \frac{\ell_{t,i}}{W_{t,i}}\mathbbm{1}\left\{i_t\in \Nin (i)\right\} \quad\text{where}\;  W_{t,i}\triangleq\sum_{j\in \Nin (i)}p_{t,j}.
\end{equation}
It is clear that $\E[\hat{\ell}_{t,i}] = \ell_{t,i}$, that is, the estimator is unbiased.
With such a loss estimator, the distribution is updated according to the classic OMD algorithm:
\begin{equation}\label{eq:OMD}
p_{t+1}= \argmin_{p\in \Omega}\inner{p, \hat{\ell}_{t}}+ {\brgmd}_{\psi}\left(p, p_{t}\right),
\quad\text{with}\;\; p_1 = \argmin_{p\in \Omega} \psi(p).
\end{equation}
For \expthreeG, $\psi(p) = \frac{1}{\eta}\sum_{i \in [K]} p_i \ln p_i$ is the standard Shannon entropy regularizer with learning rate $\eta \leq 1/2$, and $\Omega = \left\{p\in \Delta(K):p_i\ge\frac{2\eta}{K},\forall i\in [K]\right\}$ is the clipped simplex and enforces $\mathcal{O}(\eta)$ amount of uniform exploration.\footnote{%
In the original \expthreeG algorithm, $\Omega$ is the exact simplex $\Delta(K)$ and uniform exploration is enforced by sampling $i_t$ according to $p_t$ with probability $1-2\eta$ and according to a uniform distribution with probability $2\eta$. Nevertheless, our slight modification essentially serves the same purpose and makes subsequent discussions easier.
}

Standard OMD analysis shows that the instantaneous regret of \expthreeG against any $u\in \Delta(K)$  at time $t$ is bounded as 
$\langle p_t - u, \hat{\ell}_t \rangle \leq {\brgmd}_{\psi}(u,p_t)-{\brgmd}_{\psi}(u,p_{t+1}) +   \|\hat{\ell}_t\|_{\nabla^{-2}\psi(p_t)}^2$.
However, the last term $\|\hat{\ell}_t\|_{\nabla^{-2}\psi(p_t)}^2$ (often called the local-norm term) could be prohibitively large for general strongly observable graphs.
The analysis of \expthreeG overcomes this issue via a key {\it loss shifting} trick. 
Specifically, it is shown that the following more general bound holds
\begin{equation}\label{eq:loss_shifting}
\inner{p_t - u, \hat{\ell}_t} \leq {\brgmd}_{\psi}(u,p_t)-{\brgmd}_{\psi}(u,p_{t+1}) +  \norm{\hat{\ell}_t - z \cdot \one}_{\nabla^{-2}\psi(p_t)}^2
\end{equation}
for any $z \leq 1/\eta$, and in particular,
with $z = \sum_{i\in \bar{S}} p_{t,i} \hat{\ell}_{t,i}$, the local-norm term $\|\hat{\ell}_t - z \cdot \one\|_{\nabla^{-2}\psi(p_t)}^2$ is bounded by $\mathcal{\tilde{O}}(\eta \alpha)$ in expectation.
This choice of $z$ is indeed not larger than $1/\eta$ due to the form of $\hat{\ell}_t$ and {\it importantly the $\mathcal{O}(\eta)$ amount of uniform exploration}.
The rest of the analysis is straightforward and shows that $\Reg = \otil(\frac{1}{\eta} + \eta \alpha T)$, which is $\otil(\sqrt{\alpha T})$ by picking $\eta = 1/\sqrt{\alpha T}$.

To obtain small-loss bounds, one clear obstacle in \expthreeG is the uniform exploration, which contributes to $\order(\eta T) = \order(\sqrt{T})$ regret already by the above optimal choice of $\eta$.
Our first step is thus to get rid of this large amount uniform exploration,
and we take an approach that is completely different from~\citep{DBLP:journals/corr/abs-1711-03639}.
Specifically, noting that the key reason to have uniform exploration is the constraint $z \leq 1/\eta$ in the loss shifting trick \pref{eq:loss_shifting},
our key idea is to {\it remove this constraint completely}, which turns out to be possible if the regularizer contains a constant amount of log-barrier (similar to~\citep{pmlr-v83-bubeck18a, DBLP:journals/corr/abs-1905-12950}), as shown in the following lemma.

\begin{lemma}[Unconstrained Loss Shifting]
\label{lem:unifyOMDregret}
Let $p_{t+1}=\argmin_{p\in \Omega}\; \langle p, \hat{\ell}_t \rangle+{\brgmd}_{\psi}(p, p_t)$, for $\Omega \subseteq \Delta(K)$ and $\psi: \Omega \rightarrow \fR$.
Suppose (a) $0\le \hat{\ell}_{t,i}\le\max\left\{\frac{1}{p_{t,i}},\frac{1}{1-p_{t,i}}\right\},\;\forall i\in[K]$,
(b) $\nabla^{-2}\psi(p) \preceq 4 \nabla^{-2}\psi(q)$ holds when $p \preceq 2q$,  
(c) $\nabla^2\psi(p)\succeq \diag\left\{\nicefrac{\constK}{p_{1}^2},\dots,\nicefrac{\constK}{p_{K}^2}\right\},\; \forall p\in \Omega$.
Then we have
    \begin{align}
        \inner{p_t-u, \hat{\ell}_t}\le {\brgmd}_{\psi}(u,p_t)-{\brgmd}_{\psi}(u,p_{t+1})+8 \min_{z\in \fR}\|\hat{\ell}_t-z\cdot \mathbf{1}\|_{\nabla^{-2}\psi(p_t)}^2. \label{eq:unconstrained_shifting}
    \end{align}
\end{lemma}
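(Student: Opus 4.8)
The plan is to start from the standard one-step OMD inequality and then argue that the local-norm term can be replaced by its "shifted" version with the shift chosen freely over all of $\fR$, using conditions (a)–(c) to control what happens when the shift is large.

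\medskip

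\textbf{Step 1: standard OMD inequality with a loss-shift.} First I would recall the textbook fact that for the update $p_{t+1}=\argmin_{p\in\Omega}\inner{p,\hat\ell_t}+\brgmd_\psi(p,p_t)$, and for \emph{any} vector $g_t$, one has
\begin{equation}\label{eq:planstep1}
\inner{p_t-u,g_t}\le \brgmd_\psi(u,p_t)-\brgmd_\psi(u,p_{t+1})+\inner{p_t-p_{t+1},g_t}-\brgmd_\psi(p_{t+1},p_t).
\end{equation}
Now set $g_t=\hat\ell_t-z\cdot\one$ for an arbitrary $z\in\fR$. Since $u,p_t,p_{t+1}\in\Delta(K)$, the $z\cdot\one$ terms cancel on the left (because $\inner{p_t-u,\one}=0$) and also cancel on the right in the first three Bregman/inner-product terms up to first order; more carefully, $\inner{p_t-p_{t+1},g_t}=\inner{p_t-p_{t+1},\hat\ell_t}$. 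So \eqref{eq:planstep1} already gives $\inner{p_t-u,\hat\ell_t}\le \brgmd_\psi(u,p_t)-\brgmd_\psi(u,p_{t+1})+\inner{p_t-p_{t+1},\hat\ell_t-z\one}-\brgmd_\psi(p_{t+1},p_t)$ for every $z$.

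\medskip

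\textbf{Step 2: bound the ``stability'' term $\inner{p_t-p_{t+1},\hat\ell_t-z\one}-\brgmd_\psi(p_{t+1},p_t)$ by a local norm.} This is the heart of the argument and the main obstacle. The clean case is when $p_{t+1}$ is close to $p_t$ (say $p_t\preceq 2p_{t+1}$, or componentwise within a factor of $2$): then a second-order Taylor expansion of $\brgmd_\psi(p_{t+1},p_t)$ together with condition (b) ($\nabla^{-2}\psi(p)\preceq 4\nabla^{-2}\psi(q)$ when $p\preceq 2q$) lets me conclude $\inner{p_t-p_{t+1},\hat\ell_t-z\one}-\brgmd_\psi(p_{t+1},p_t)\le 8\|\hat\ell_t-z\one\|_{\nabla^{-2}\psi(p_t)}^2$ by AM–GM (Fenchel–Young against the local norm and its dual). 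The difficulty is the case where $p_{t+1}$ moves far from $p_t$ in some coordinate, which can happen precisely because we no longer have the clipped feasible set forcing a minimum probability and because $z$ may be large and negative (inflating $\hat\ell_{t,i}-z$). Here I would use the log-barrier content of the regularizer: condition (c) says $\nabla^2\psi\succeq\diag\{\const K/p_i^2\}$, so $\psi$ grows at least like a log-barrier, which upper-bounds how far OMD can step. Concretely, I expect a case analysis per coordinate: for coordinates $i$ with $p_{t+1,i}\ge p_{t,i}/2$, the Taylor argument above applies; for coordinates with $p_{t+1,i}<p_{t,i}/2$, the Bregman term $\brgmd_\psi(p_{t+1},p_t)$ restricted to that coordinate is, by condition (c) and convexity, at least a constant (something like $\const K\cdot(\tfrac{p_{t,i}}{p_{t+1,i}}-1-\ln\tfrac{p_{t,i}}{p_{t+1,i}})\ge \const K\cdot c$), while condition (a) bounds $\hat\ell_{t,i}$ by $\max\{1/p_{t,i},1/(1-p_{t,i})\}$ so the linear term $\inner{p_t-p_{t+1},\hat\ell_t}$ contributed by that coordinate cannot exceed an $\order(1)$ quantity that the Bregman term dominates once the constant $\const K$ is large enough. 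This is the standard ``the log-barrier never lets you overshoot'' phenomenon used in, e.g., \citep{pmlr-v83-bubeck18a, DBLP:journals/corr/abs-1905-12950}, and conditions (a)–(c) are exactly tuned so that it goes through uniformly in $z$.

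\medskip

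\textbf{Step 3: optimize over $z$.} Having established $\inner{p_t-u,\hat\ell_t}\le \brgmd_\psi(u,p_t)-\brgmd_\psi(u,p_{t+1})+8\|\hat\ell_t-z\one\|_{\nabla^{-2}\psi(p_t)}^2$ for \emph{every} $z\in\fR$ (with no constraint $z\le 1/\eta$ because the log-barrier absorbed the dangerous large-step case), I would simply take the infimum over $z\in\fR$ on the right-hand side, yielding \eqref{eq:unconstrained_shifting}. I would then double-check that the Taylor remainder in Step 2 is handled rigorously — e.g. by invoking $\nabla^{-2}\psi$ evaluated at an intermediate point on the segment $[p_t,p_{t+1}]$ and bounding that intermediate point between $p_t$ and $p_{t+1}$ componentwise so that condition (b) applies — since that is the one place where a naïve expansion could hide a subtlety. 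The constant $8$ is just bookkeeping from the factor $4$ in (b) and the AM–GM split.
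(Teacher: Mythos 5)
You have the right scaffolding (standard OMD inequality with shift, Hölder against the local norm, optimize over $z$), and the right intuition that the log-barrier content of condition (c) is what controls the step size. But Step 2 proposes a per-coordinate case analysis that is both different from what the paper does and, as written, has a gap, and it also contains a small but telling misconception.

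The misconception: you say that $p_{t+1}$ may move far from $p_t$ ``because $z$ may be large and negative (inflating $\hat\ell_{t,i}-z$).'' It cannot: the update $p_{t+1}$ is computed from $\hat\ell_t$ alone, and shifting by $z\cdot\one$ does not change the OMD minimizer over the simplex. So the stability of the step is a fact about $\hat\ell_t$ that is entirely independent of $z$.

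This is precisely what the paper exploits, and it gives a cleaner argument than the per-coordinate split you sketch. The paper first proves \emph{global} multiplicative stability: under conditions (a) and (c) there is a single witness $z^\star$ (either $z^\star=0$ when no coordinate has $p_{t,i}>\tfrac12$, or $z^\star=\hat\ell_{t,i_0}$ for the unique coordinate with $p_{t,i_0}\geq\tfrac12$) such that $\|\hat\ell_t-z^\star\one\|_{\nabla^{-2}\psi(p_t)}\le\tfrac18$ (\pref{lem:SpecialStability}); a Taylor expansion of $F_t(p)=\inner{p,\hat\ell_t-z^\star\one}+\brgmd_\psi(p,p_t)$ on the boundary of the local-norm ball of radius $1$ around $p_t$, together with condition (b), then forces $p_{t+1}$ to stay inside that ball, hence $\tfrac12 p_t\preceq p_{t+1}\preceq 2p_t$ in \emph{every} coordinate (\pref{lem:GeneralStability}). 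Once this stability is in hand, your ``near'' case argument applies to the whole vector for \emph{any} $z$: by Taylor and Hölder one gets $\inner{p_t-u,\hat\ell_t}\le\brgmd_\psi(u,p_t)-\brgmd_\psi(u,p_{t+1})+2\|\hat\ell_t-z\one\|_{\nabla^{-2}\psi(\xi)}^2$ with $\xi$ on the segment $[p_t,p_{t+1}]\preceq 2p_t$, and condition (b) turns $\nabla^{-2}\psi(\xi)$ into $4\nabla^{-2}\psi(p_t)$, giving the factor $8$.

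Your ``far'' branch is therefore not needed, and moreover as written it does not go through: for a coordinate $i\in\bar S$ with $p_{t,i}$ close to $1$, the linear contribution $(p_{t,i}-p_{t+1,i})\hat\ell_{t,i}\le (p_{t,i}-p_{t+1,i})/(1-p_{t,i})$ is not $\order(1)$, so the claim that the per-coordinate Bregman penalty dominates an $\order(1)$ quantity fails. (In the paper's route this case is precisely the one where $z^\star=\hat\ell_{t,i_0}$ for $i_0$ with $p_{t,i_0}\geq\tfrac12$, which tames the local norm \emph{before} one ever tries to bound the step.) So the fix is not to patch the per-coordinate estimate but to restructure: establish the witness $z^\star$ and global multiplicative stability first, and only then invoke your Step 2 Taylor argument uniformly over all coordinates and all $z$.
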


Condition (a) is clearly satisfied for strongly observable graphs if $\hat{\ell}_t$ is defined by \pref{eq:estimator} since $W_{t,i} \geq p_{t,i}$ for $i\in S$ and $W_{t,i}=1-p_{t,i}$ for $i \notin S$.
Condition (b) is trivially satisfied for all common regularizers for the simplex such as Shannon entropy, Tsallis entropy, log-barrier, and any of their combinations.
Finally, to ensure Condition (c), one only needs to include a log-barrier component $c \sum_{i\in[K]}\ln\frac{1}{p_i}$ for $c \geq \constK$ in the regularizer, whose Hessian is exactly $\diag\{\nicefrac{c}{p_{1}^2},\dots,\nicefrac{c}{p_{K}^2}\}$.
This inspires us to propose the following hybrid regularizer
\begin{equation}\label{eq:hybrid_regularizer}
\psi(p)=\frac{1}{\eta}\sum_{i\in [K]}p_i\ln p_i+ c\sum_{i\in [K]}\ln \frac{1}{p_i},
\end{equation}
and we prove the following theorem.

\begin{theorem}\label{thm:alphaTEXP3}
The OMD update \pref{eq:OMD} with $\Omega = \left\{p\in \Delta(K):p_i\ge\frac{1}{T},\forall i\in [K]\right\}$, $\hat{\ell}_t$ defined in \pref{eq:estimator}, and $\psi$ defined in \pref{eq:hybrid_regularizer} for $c = \constK$ ensures 
$
\Reg\le \otil\big(\frac{1}{\eta}+ \eta \alpha T + K^2 \big)
$
for any strongly observable graph.
Choosing $\eta=1/\sqrt{\alpha T}$, we have $\Reg=\mathcal{\tilde{O}}\big(\sqrt{\alpha T}+K^2\big).$
\end{theorem}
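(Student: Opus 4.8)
The plan is to instantiate \pref{lem:unifyOMDregret} with the given $\Omega$, $\hat{\ell}_t$, and $\psi$, sum the resulting per-round inequality over $t\in[T]$, and bound the telescoped Bregman divergence, the log-barrier overhead, and the sum of local-norm terms separately. So the first step is to verify the three hypotheses of \pref{lem:unifyOMDregret}. Condition~(a) holds for every strongly observable graph: for $i\in S$ we have $i\in\Nin(i)$, hence $W_{t,i}\ge p_{t,i}$, while for $i\in\bar{S}$ strong observability forces $\Nin(i)=[K]\setminus\{i\}$, hence $W_{t,i}=1-p_{t,i}$; combined with $\ell_{t,i}\in[0,1]$ this gives $0\le\hat{\ell}_{t,i}\le\max\{1/p_{t,i},\,1/(1-p_{t,i})\}$. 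Condition~(b): $\psi$ is separable, and $\psi_i''(x)=\frac{1}{\eta x}+\frac{c}{x^2}$ is decreasing in $x$, so $p\preceq 2q$ gives $\psi_i''(q_i)\le\psi_i''(p_i/2)=\frac{2}{\eta p_i}+\frac{4c}{p_i^2}\le 4\psi_i''(p_i)$, i.e.\ $\nabla^{-2}\psi(p)\preceq 4\nabla^{-2}\psi(q)$. Condition~(c): $\nabla^2\psi(p)=\diag\{\psi_i''(p_i)\}\succeq\diag\{c/p_i^2\}=\diag\{\constK/p_i^2\}$ since $c=\constK$.

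With the hypotheses in place, I would next handle the comparator and the Bregman terms. Since $e_{i^\star}\notin\Omega$, I compare against $u\in\Omega$ with $u_{i^\star}=1-(K-1)/T$ and $u_j=1/T$ for $j\neq i^\star$, which costs only $\sum_t\langle u-e_{i^\star},\ell_t\rangle\le K-1$; by unbiasedness of $\hat{\ell}_t$, $\Reg\le\E\big[\sum_t\langle p_t-u,\hat{\ell}_t\rangle\big]+(K-1)$. Applying \pref{lem:unifyOMDregret} at each round and telescoping the Bregman divergences (which are nonnegative) yields
\[
\sum_{t=1}^{T}\langle p_t-u,\hat{\ell}_t\rangle\le \brgmd_\psi(u,p_1)+8\sum_{t=1}^{T}\min_{z\in\fR}\norm{\hat{\ell}_t-z\cdot\one}_{\nabla^{-2}\psi(p_t)}^2 .
\]
Here $p_1=\argmin_{p\in\Omega}\psi(p)$ is the uniform distribution (note $1/K\ge 1/T$ since $T\ge 2K$, so the clipping is inactive), and splitting $\brgmd_\psi(u,p_1)$ along the two parts of $\psi$, the entropy part equals $\frac1\eta\KL(u\|p_1)\le\frac{\ln K}{\eta}$ while the log-barrier part equals $c\sum_i(-\ln(Ku_i))\le c\cdot O(K\ln T)=\otil(K^2)$ because $c=\constK$. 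This is exactly the origin of the additive $K^2$ term.

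The remaining and most delicate step is bounding $\E\big[\sum_t\min_z\norm{\hat{\ell}_t-z\cdot\one}_{\nabla^{-2}\psi(p_t)}^2\big]$. Since $\psi_i''(p_{t,i})\ge\frac{1}{\eta p_{t,i}}$ we have $\nabla^{-2}\psi(p_t)\preceq\eta\diag(p_t)$, so for \emph{any} scalar $z_t$ the summand is at most $\eta\sum_i p_{t,i}(\hat{\ell}_{t,i}-z_t)^2$. Crucially, \pref{lem:unifyOMDregret} places no restriction on $z$ --- unlike the loss-shifting bound \pref{eq:loss_shifting} underlying \expthreeG, where the constraint $z\le 1/\eta$ is what forced the $\Theta(\eta)$ amount of uniform exploration --- so I would simply take $z_t=\sum_{i\in\bar{S}}p_{t,i}\hat{\ell}_{t,i}$ with no further conditions. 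With this shift $\E[\sum_i p_{t,i}(\hat{\ell}_{t,i}-z_t)^2]$ is bounded exactly as in the \expthreeG analysis of~\citet{pmlr-v40-Alon15}: the $i\in\bar{S}$ contribution is $O(1)$ (using $\Nin(i)=[K]\setminus\{i\}$, $\sum_{i\in\bar S}p_{t,i}\le1$, and the definition of $z_t$, which cancels the potentially large terms there), and the $i\in S$ contribution reduces, after taking expectation over $i_t$, to $\sum_{i\in S}\frac{p_{t,i}}{W_{t,i}}$, which is $\otil(\alpha)$ by the standard graph lemma --- the floor $p_{t,i}\ge 1/T$ enforced by $\Omega$ is what keeps its logarithmic factor bounded. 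Summing over $t$ gives $\otil(\eta\alpha T)$. Combining the three bounds yields $\Reg\le\otil\big(\frac1\eta+\eta\alpha T+K^2\big)$, and plugging in $\eta=1/\sqrt{\alpha T}\le\frac12$ gives $\Reg=\otil\big(\sqrt{\alpha T}+K^2\big)$. I expect this last step to be the only genuinely technical one, but it is largely inherited from the \expthreeG analysis once \pref{lem:unifyOMDregret} removes the constraint on $z$ (and hence the need for heavy uniform exploration).
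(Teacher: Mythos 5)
Your proposal is correct, and essentially matches the paper's overall structure (instantiate \pref{lem:unifyOMDregret}, choose the same comparator $u=(1-K/T)e_{i^\star}+\tfrac1T\mathbf{1}$, telescope, split the Bregman term into entropy and log-barrier parts, invoke the graph lemma). The one genuine difference is the choice of loss shift inside the local-norm term. You take $z_t=\sum_{i\in\bar S}p_{t,i}\hat\ell_{t,i}$, the shift from the original \expthreeG analysis of \citet{pmlr-v40-Alon15}, and observe — correctly and in line with the main point of \pref{sec: remove uniform exploration} — that once \pref{lem:unifyOMDregret} removes the constraint $z\le 1/\eta$, this shift can be used without the $\Theta(\eta)$ uniform exploration. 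The paper instead uses the shift $z=\hat\ell_{t,i_0}$ when some $i_0\in\bar S$ has $p_{t,i_0}\ge 1/2$ (and $z=0$ otherwise), and explicitly remarks that one ``could follow the analysis of~\citep{pmlr-v40-Alon15}'' but prefers this alternative ``to be consistent with other proofs.'' The reason that choice matters later: the paper's shift bounds the local-norm term by $\mathcal{O}(\eta\langle p_t,\hat\ell_t\rangle)$, i.e.\ directly by the algorithm's own instantaneous loss, which is exactly what the log-barrier arguments in \pref{thm:thmbetalstar} and \pref{thm:kappaLstarThm} need for small-loss bounds; your shift recovers the $\otil(\sqrt{\alpha T})$ bound just as cleanly here, but would not transfer to the hybrid regularizer of \pref{eq:hybrid_regularizer2} where the $i\in S$ Hessian is $1/(\eta p_i^2)$ rather than $1/(\eta p_i)$. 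As long as you are aware that the shift you chose is specific to this theorem and does not generalize to the rest of \pref{sec:strongly}, everything in your argument is sound.
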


Note that we still enforce a small $1/T$ amount of uniform exploration, which is important for a technical lemma~\citep[Lemma~5]{pmlr-v40-Alon15}, but this only contributes $\order(K)$ regret.
Also, adding the log-barrier leads to a small $\order(K^2)$ overhead in the Bregman divergence term $D_\psi(u, p_1)$, but only makes the local-norm term smaller and thus $\min_z \|\hat{\ell}_t-z\cdot \mathbf{1}\|_{\nabla^{-2}\psi(p_t)}^2$ is still of order $\otil(\eta\alpha)$ in expectation.
The proof of \pref{thm:alphaTEXP3} is now straightforward and is deferred to \pref{app:alphaT}.

\subsection{\texorpdfstring{$\mathcal{\tilde{O}}\big(\sqrt{({\slpn}+1) L_\star}\big)$}{} Regret Bound}\label{sec: betaLstar}

Having solved the uniform exploration issue, we now discuss our first attempt to obtain small-loss bounds for strongly observable graphs.
Inspired by the fact that for multi-armed bandits, that is, the case where $E$ contains all the self-loops but nothing else, OMD with the log-barrier regularizer achieves a small-loss bound~\citep{foster2016learning}, we propose to replace the entropy regularizer with the log-barrier for all nodes in $S$, while keeping the hybrid regularizer~\pref{eq:hybrid_regularizer} for nodes in $\bar{S}$:
\begin{equation}\label{eq:hybrid_regularizer2}
\psi(p)= \frac{1}{\eta}\sum_{i\in S}\ln \frac{1}{p_i}+\frac{1}{\eta}\sum_{i\in \bar{S}}p_i\ln p_i+c\sum_{i\in \bar{S}}\ln \frac{1}{p_i}.
\end{equation}
We note that it is important not to also use $1/\eta$ amount of log-barrier for nodes in $\bar{S}$, since this leads to an overhead of $K/\eta$ for the Bregman divergence term and thus at best gives $\otil(\sqrt{KL_\star})$ regret.
We prove the following theorem for our proposed algorithm.

\begin{theorem}\label{thm:thmbetalstar}
OMD \pref{eq:OMD} with $\Omega = \left\{p\in \Delta(K):p_i\ge\frac{1}{T},\forall i\in [K]\right\}$, $\hat{\ell}_t$ defined in \pref{eq:estimator}, and $\psi$ defined in \pref{eq:hybrid_regularizer2} for $c = \constK$ and $\eta \leq \frac{1}{64K}$ ensures 
$
\Reg=\mathcal{O}\left(\frac{{\slpn}\ln T+\ln K}{\eta}+\eta L_\star+K^2\ln T\right)
$
for any strongly observable graph.
Choosing $\eta=\min\left\{\sqrt{\frac{{\slpn}+1}{L_\star}},\frac{1}{\constK}\right\}$ gives $ \mathcal{\tilde{O}}\left(\sqrt{({\slpn}+1) L_\star}+K^2\right)$.\footnote{%
In fact, the $s$ dependence can be improved to the number of nodes that are not observed by every other nodes
(by using log-barrier only for these nodes).
However, we simplify the presentation with a looser bound since this improvement does not help improve the final main result in \pref{sec: kappaLstar}.
}
\end{theorem}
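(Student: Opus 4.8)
The plan is to feed the hybrid regularizer~\pref{eq:hybrid_regularizer2} into \pref{lem:unifyOMDregret}, telescope the per-round bound, and then control the Bregman term and the shifted local-norm term; the only non-routine part is choosing the shift $z_t$ so that the local-norm sum becomes proportional to the learner's own loss $\sum_t\inner{p_t,\ell_t}$, after which the standard self-bounding argument turns it into an $\eta L_\star$ term. First I would verify the three hypotheses of \pref{lem:unifyOMDregret}. The Hessian of $\psi$ in~\pref{eq:hybrid_regularizer2} is diagonal with $i$-th entry $\frac{1}{\eta p_i^2}$ for $i\in S$ and $\frac{1}{\eta p_i}+\frac{c}{p_i^2}$ for $i\in\bar S$. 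Condition~(a) holds for strongly observable graphs since by~\pref{eq:estimator}, $\hat{\ell}_{t,i}\le 1/W_{t,i}\le 1/p_{t,i}$ for $i\in S$ and $\hat{\ell}_{t,i}\le 1/(1-p_{t,i})$ for $i\in\bar S$. Condition~(b) holds because every diagonal entry is non-increasing in $p_i$ with worst-case dependence $1/p_i^2$, so shrinking $p_i$ by a factor of $2$ inflates it by at most a factor of $4$. Condition~(c) holds because $\frac{1}{\eta p_i^2}\ge\frac{\constK}{p_i^2}$ for $i\in S$ by $\eta\le\frac1{\constK}$, and $\frac{c}{p_i^2}=\frac{\constK}{p_i^2}$ for $i\in\bar S$; also $c=\constK$ and $\eta\le\frac1{\constK}$ give $\eta c\le1$, so the inverse-Hessian weight for $i\in\bar S$ equals $\frac{\eta p_i^2}{p_i+\eta c}\le\eta p_i$.

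Next I would sum~\pref{eq:unconstrained_shifting} over $t$ against the clipped comparator $u\in\Omega$ with $u_{i^\star}=1-\frac{K-1}{T}$ and $u_j=\frac1T$ for $j\neq i^\star$, and telescope the Bregman terms, obtaining $\sum_t\inner{p_t-u,\hat{\ell}_t}\le {\brgmd}_\psi(u,p_1)+8\sum_t\min_{z\in\fR}\norm{\hat{\ell}_t-z\one}^2_{\nabla^{-2}\psi(p_t)}$. Taking expectations, the left-hand side equals $\sum_t\inner{p_t-u,\ell_t}\ge\Reg-(K-1)$ since $\inner{u,\ell_t}\le\ell_{t,i^\star}+\frac{K-1}{T}$; and ${\brgmd}_\psi(u,p_1)\le\psi(u)-\psi(p_1)$ because $p_1$ minimizes $\psi$ over $\Omega$, where the log-barrier over $S$ contributes at most $\frac{s\ln T}{\eta}$ (each $\ln(1/u_i)\le\ln T$, and the $i^\star$-term when $i^\star\in S$ is only $O(1/\eta)$), the log-barrier over $\bar S$ at most $c\bar s\ln T\le\constK^2\ln T$, and the two entropy pieces together at most $O((\ln K)/\eta)$, so ${\brgmd}_\psi(u,p_1)=O\!\big(\tfrac{s\ln T+\ln K}{\eta}+K^2\ln T\big)$.

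The crux is to show $\sum_t\min_z\norm{\hat{\ell}_t-z\one}^2_{\nabla^{-2}\psi(p_t)}=O\!\big(\eta\sum_t\inner{p_t,\ell_t}\big)$ in expectation. For $i\in S$, the weight $\eta p_{t,i}^2$ together with $p_{t,i}\hat{\ell}_{t,i}\le\ell_{t,i}\Ind{i_t\in\Nin(i)}$ and $W_{t,i}\ge p_{t,i}$ gives $\E_t[\eta p_{t,i}^2\hat{\ell}_{t,i}^2]\le\eta p_{t,i}\ell_{t,i}$ --- exactly the self-bounding property that log-barrier enjoys for multi-armed bandits~\citep{foster2016learning}. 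For $i\in\bar S$ with $p_{t,i}\le\tfrac12$, the weight $\le\eta p_{t,i}$ and $\hat{\ell}_{t,i}\le 2\ell_{t,i}\Ind{i_t\neq i}$ give $\E_t[\eta p_{t,i}\hat{\ell}_{t,i}^2]=O(\eta p_{t,i}\ell_{t,i})$. The only dangerous coordinate is a \emph{heavy} node $i^0\in\bar S$ with $p_{t,i^0}>\tfrac12$ (at most one node can be heavy), for which $\hat{\ell}_{t,i^0}$ can be as large as $1/(1-p_{t,i^0})$; to neutralize it I would take the shift $z_t=\hat{\ell}_{t,i^0}$ whenever a heavy node lies in $\bar S$ (and $z_t=0$ otherwise), which makes the $i^0$-term vanish. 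The only cost is the cross terms $\eta p_{t,i}^2 z_t^2$ over $i\in S$ and $\eta p_{t,i}z_t^2$ over $i\in\bar S\setminus\{i^0\}$; since $\sum_{i\neq i^0}p_{t,i}=1-p_{t,i^0}$ they are at most $\eta(1-p_{t,i^0})^2 z_t^2$ and $\eta(1-p_{t,i^0})z_t^2$, and from $\E_t[z_t^2]=\ell_{t,i^0}^2/(1-p_{t,i^0})$ both have conditional expectation at most $\eta\ell_{t,i^0}^2\le 2\eta p_{t,i^0}\ell_{t,i^0}\le 2\eta\inner{p_t,\ell_t}$. Summing over coordinates establishes the claim in every case.

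Putting the three estimates together, $\Reg\le O\!\big(\tfrac{s\ln T+\ln K}{\eta}+K^2\ln T\big)+O(\eta)\sum_t\inner{p_t,\ell_t}$, and since $\sum_t\inner{p_t,\ell_t}=\Reg+L_\star$ while $\eta\le\frac1{\constK}$ keeps the constant multiplying $\eta\,\Reg$ strictly below $1$, rearranging the self-bounding inequality yields $\Reg=O\!\big(\tfrac{s\ln T+\ln K}{\eta}+\eta L_\star+K^2\ln T\big)$; substituting $\eta=\min\{\sqrt{(s+1)/L_\star},\frac1{\constK}\}$ and using $s\le K$ gives $\otil(\sqrt{(s+1)L_\star}+K^2)$. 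The step I expect to be the main obstacle is the heavy-node treatment: once uniform exploration is reduced to $1/T$, a node without a self-loop can carry almost all of the probability mass and make its own importance-weighted estimate explode, and the reason it can still be absorbed is the interplay between the log-barrier weighting --- which forces the shift's cost on the self-loop nodes to scale as $(1-p_{t,i^0})^2$, smallest precisely when the explosion is most severe --- and the fact that a heavy arm's loss is already charged to the learner and hence payable by the self-bounding term $\eta\sum_t\inner{p_t,\ell_t}$.
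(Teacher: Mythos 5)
Your proposal is correct and follows essentially the same route as the paper: verify the hypotheses of the unconstrained loss-shifting lemma (\pref{lem:unifyOMDregret}), take the shift $z_t=\hat{\ell}_{t,i_0}$ when a node of $\bar S$ is heavy (and $z_t=0$ otherwise), absorb the cost via the self-bounding inequality, and tune $\eta$. The one small difference is that you bound the shifted local-norm term in conditional expectation (obtaining $O(\eta\langle p_t,\ell_t\rangle)$), whereas the paper establishes the stronger \emph{pathwise} inequality $\min_z\|\hat{\ell}_t-z\cdot\mathbf{1}\|^2_{\nabla^{-2}\psi(p_t)}\le 2\eta\langle p_t,\hat{\ell}_t\rangle$ — using $p_{t,i}\hat{\ell}_{t,i}\le 1$ for $i\in S$, $\hat{\ell}_{t,i}\le 2$ for light $i\in\bar S$, and $(1-p_{t,i_0})\hat{\ell}_{t,i_0}\le 1$ for the heavy node — and only takes expectations at the very end; this pointwise version is what the paper later reuses in the doubling-trick analysis, which your expectation-level bound would not directly support.
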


While the algorithmic idea is straightforward, the main challenge in the analysis is to deal with the nodes in $\bar{S}$.
Specifically, for the particular choices of $\eta$ and $c$, we know that the conditions of \pref{lem:unifyOMDregret} hold, and
the key is thus again to bound the local-norm term $\min_{z\in \fR}\|\hat{\ell}_t-z\cdot \mathbf{1}\|_{\nabla^{-2}\psi(p_t)}^2$.
Simply taking $z = 0$ or the previous choice $z = \sum_{i\in \bar{S}} p_{t,i} \hat{\ell}_{t,i}$ from~\citep{pmlr-v40-Alon15} does not give the desired bound.
Instead, we propose a novel shift: $z = \hat{\ell}_{t, i_0}$ for some $i_0 \in \bar{S}$ with $p_{t,i_0} \geq 1/2$, or $z=0$ if no such $i_0$ exists. 
Direct calculations then show $\|\hat{\ell}_t-z\cdot \mathbf{1}\|_{\nabla^{-2}\psi(p_t)}^2 = \order(\eta \langle p_t, \hat{\ell}_t \rangle)$.
In expectation, the local-norm term is thus related to the loss of the algorithm $\langle p_t, \ell_t \rangle$, and it is well-known that this is enough for obtaining small-loss bounds.
For the complete proof, see \pref{app:betaLstar}.

\subsection{\texorpdfstring{$\mathcal{\tilde{O}}\big(\sqrt{(\kappa+1) L_\star}\big)$}{} Regret Bound}\label{sec: kappaLstar}

Finally, we discuss how to further improve our bound to $\otil(\sqrt{(\kappa+1) L_\star})$.
Let $\mathcal{C}_1,\dots,\mathcal{C}_\kappa$ be a clique partition of $G_S$ (recall that $G_S$ is $G$ restricted to $S$).
Essentially, we compress each clique as one meta-node, and together with nodes from $\bar{S}$, this creates a meta-graph with $\beta \triangleq \kappa+\woslpn$ nodes, which can be seen as a ``low-resolution'' version of $G$.
We index these meta-nodes as $1, \ldots, \kappa$, 
and without loss of generality we assume that the original indices of nodes in $\bar{S}$ are $\kappa+1, \ldots, \kappa+\woslpn$,
so that $[\beta]$ is the set of nodes for this meta-graph, and $[\kappa]$ is the set of nodes with a self-loop (taking the same role as $S$ in the original graph).
If we were actually dealing with a problem with this meta-graph, running the algorithm from \pref{thm:thmbetalstar} would thus give $\otil(\sqrt{(\kappa+1) L_\star})$ regret.

To solve the original problem, however, we need to specify what to do when a meta-node is selected.
Note that within a meta-node, we are essentially facing the classic expert problem with full-information~\citep{freund1997decision} since nodes are all connected with each other.
A natural idea is thus to run an expert algorithm with a small-loss bound within each clique, and when a clique is selected, follow the suggestion of the corresponding expert algorithm. 
We choose to use an adaptive version of Hedge~\citep{freund1997decision} as the expert algorithm, with details deferred to \pref{alg:AdaHedge} in \pref{app:kappaLstar}.
Importantly, the regret of Hedge has only logarithmic dependence on the number of nodes and thus does not defeat the purpose of obtaining  $\otil(\sqrt{(\kappa+1) L_\star})$ regret.

\pref{fig:alg3} illustrates the main idea of our algorithm, and \pref{alg:kappalstaralg} shows the complete pseudocode.
We use $i$ to index a node in the original graph and $j$ to index a node in the meta-graph. 
Note that nodes from $\bar{S}$ appear in both graphs so they are indexed by either $i$ or $j$, depending on the context.
The $\kappa$ instances of Hedge are denoted by $\mathcal{A}_1, \ldots, \mathcal{A}_\kappa$, where $\mathcal{A}_j$ only operates over nodes in $\mathcal{C}_j$.
For notational convenience, however, we require $\mathcal{A}_j$ to output at time $t$ a distribution $\widetilde{p}_{t}^{(j)} \in \Delta(K)$ over all nodes with the constraint $\widetilde{p}_{t, i}^{(j)} = 0$ for all $i \notin \mathcal{C}_j$ (\pref{line:receive_A_j}), and we also feed the estimated losses of all nodes to $\mathcal{A}_j$ (\pref{line:feed_A_j}), even though it ignores those outside $\mathcal{C}_j$.

The algorithm maintains a distribution $p_t \in \Delta(\beta)$ for the meta-graph, updated using the algorithm from \pref{thm:thmbetalstar} (\pref{line:OMD}).
The only difference is that we use a time-varying regularizer defined in \pref{eq:hybrid_regularizer3}, where the learning rate $\eta_{t,j}$ for meta-node $j \in [\kappa]$ is time-varying and also different for different $j$ (all starting from $\eta_{1,j}=\eta$; more explanation to follow).
At the beginning of time $t$, the algorithm first samples $j_t \in p_t$.
If $j_t$ happens to be a node in $\bar{S}$, we play $i_t = j_t$;
otherwise, we sample $i_t$ from the distribution received from $\mathcal{A}_{j_t}$.
See \pref{line:sampling1} and
\pref{line:sampling2}.

\begin{figure}[t]
\centering
\includegraphics[width=0.8\linewidth]{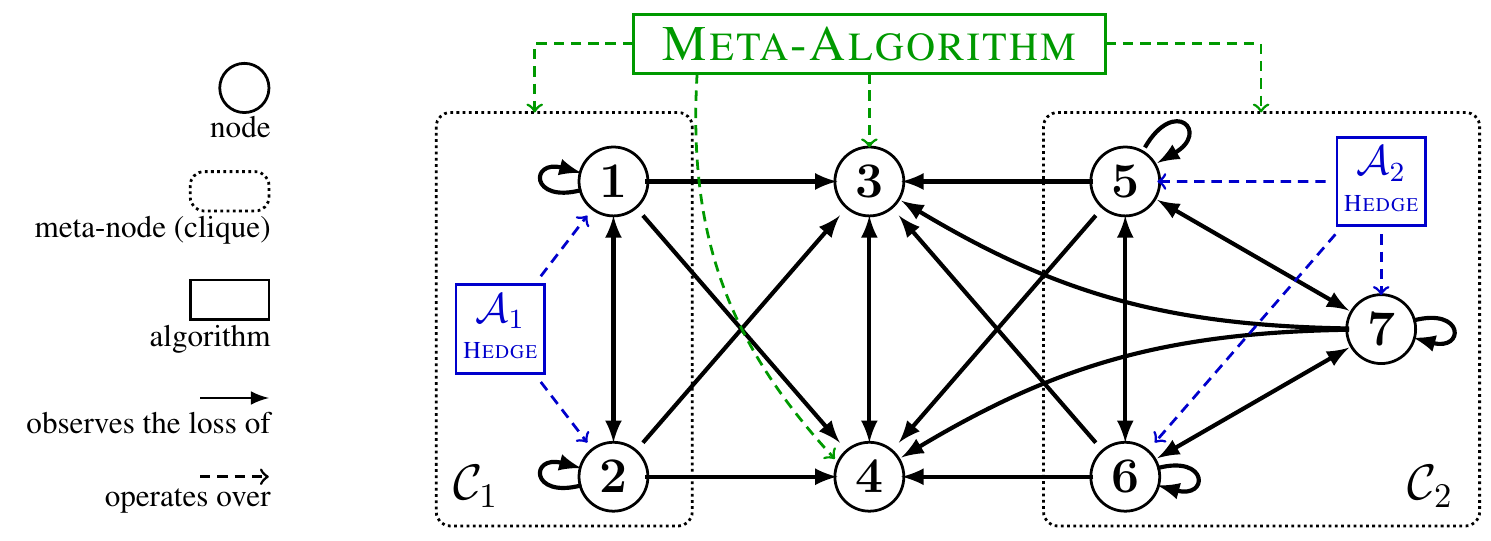}\caption{ An illustration of \pref{alg:kappalstaralg} for a graph with 7 nodes. Here, we have $S = \{1,2,5,6,7\}$, $\bar{S} = \{3, 4\}$, and $\kappa = 2$ with $\mathcal{C}_1 =\{1,2\}$ and $\mathcal{C}_2=\{5,6,7\}$ being a minimum clique partition of $G_S$. The meta-algorithm operates over nodes 3 and 4, and also the two cliques, each with a Hedge instance running inside.
}\label{fig:alg3}
\end{figure}

\setcounter{AlgoLine}{0}
\begin{algorithm}[t]\caption{Algorithm for General Strongly Observable Graphs}\label{alg:kappalstaralg}
\textbf{Input:} Feedback graph $G$ and a clique partition $\{\mathcal{C}_1,\dots,\mathcal{C}_\kappa\}$ of $G_S$, parameters $\eta$, $c$.

\textbf{Define:} $\beta= \kappa +\woslpn$ and
$\Omega =\left\{p\in \Delta(\beta):p_j \ge \frac{1}{T}, \forall j\in[\beta]\right\}$.

\textbf{Initialize:} $p_1=\argmin_{p\in \Omega}\psi_1(p)$ (see \pref{eq:hybrid_regularizer3}), $\eta_{1, j} = \eta, \rho_{1, j} = 2\kappa, \forall j\in [\kappa]$.

\textbf{Initialize:} an instance $\mathcal{A}_j$ of adaptive Hedge (\pref{alg:AdaHedge}) with nodes in $\mathcal{C}_j$, $\forall j\in [\kappa]$.

\For{$t=1,2,\dots, T$}{
\nl For each $j\in [\kappa]$, receive $\widetilde{p}_{t}^{(j)} \in \{p\in \Delta(K) : p_{t,i} = 0, \;\forall i \notin \mathcal{C}_j\}$ from $\mathcal{A}_j$.
\label{line:receive_A_j}

\nl Sample $j_t \sim p_t$.
\label{line:sampling1}

\nl \lIf{$j_t\in [\kappa]$}{draw $i_t \sim \widetilde{p}_{t}^{(j_t)}$; \textbf{else} let $i_t=j_t$.}
\label{line:sampling2}

\nl Pull arm $i_t$ and receive feedback $\ell_{t,i}$ for all $i$ such that $i_t\in \Nin(i)$.

\nl Construct estimator $\widetilde{\ell}_t \in \fR^K$ such that
$
\widetilde{\ell}_{t,i}=\begin{cases}
\frac{\ell_{t,i}}{p_{t,j_t}}\mathbbm{1}\{j_t \in [\kappa], i \in \mathcal{C}_{j_t}\},&\text{for $i\in S$},\\
\frac{\ell_{t,i}}{1-p_{t,i}}\mathbbm{1}\{i\ne i_t\},&\text{for $i\in \bar{S}$}.
\end{cases}
$
\label{line:estimator}

\nl For each $j\in [\kappa]$, feed $\widetilde{\ell}_t$ to $\mathcal{A}_j$.
\label{line:feed_A_j}

\nl Construct estimator $\hat{\ell}_{t} \in \fR^\beta$ for meta-nodes such that
$
\hat{\ell}_{t,j}=\begin{cases}
\inner{\widetilde{p}_{t}^{(j)}, \widetilde{\ell}_{t}}, &\mbox{for $j\in [\kappa]$},\\
\widetilde{\ell}_{t,j},&\mbox{for $j\in \bar{S}$}.
\end{cases}
$
\label{line:meta_estimator}

\nl Compute $p_{t+1}=\argmin_{p\in\Omega}\left\{ \inner{p,\hat{\ell}_{t}}+{\brgmd}_{\psi_{t}}(p,p_{t})\right\}$ where
\begin{equation}\label{eq:hybrid_regularizer3}
\psi_t(p)=\sum_{j\in [\kappa]}\frac{1}{
\eta_{t,j}}\ln \frac{1}{p_j}+\frac{1}{\eta}\sum_{j\in \bar{S}}p_j\ln p_j+c\sum_{j\in \bar{S}}\ln \frac{1}{p_j}.
\end{equation}
\label{line:OMD}
\nl \For{$j \in [\kappa]$}{
\lIf {$\frac{1}{p_{t+1,j}}>\rho_{{t},j}$}{set $\rho_{t+1,j}=\frac{2}{p_{t+1,j}},\eta_{t+1,j}=\eta_{t,j}e^{\frac{1}{\ln T}}$.}
\lElse{set $\rho_{t+1,j}=\rho_{{t},j},\eta_{t+1,j}=\eta_{t,j}$.}}
}
\label{line:increasing_eta}
\end{algorithm}

After playing arm $i_t$ and receiving loss feedback, we construct loss estimator $\widetilde{\ell}_t \in \fR^K$ for nodes in $G$ (\pref{line:estimator}) and estimator $\hat{\ell}_t \in \fR^\beta$ for nodes in the meta-graph (\pref{line:meta_estimator}).
The estimator for nodes in $\bar{S}$, for either $G$ or the meta-graph, is exactly the same as the standard one described in \pref{eq:estimator}.
The estimator for a node $i \in S$ also essentially follows \pref{eq:estimator}, except that we ignore all edges that point to $i$ but are not from those nodes in the same clique, so the probability of observing $i$ is $p_{t, j}$ for $j$ being the index of the clique to which $i$ belongs.\footnote{%
One could also follow exactly \pref{eq:estimator} to construct more complicated estimators, but it does not lead to a better bound.
}
Finally, the estimator for a meta-node $j \in [\kappa]$ is simply $\langle \widetilde{p}_{t}^{(j)}, \widetilde{\ell}_{t}\rangle$, which is the estimated loss of the corresponding Hedge $\mathcal{A}_j$.

While the idea of combining algorithms in such a two-level hierarchy is natural and straightforward, doing it in a partial-information setting is {\it notoriously challenging} and requires extra techniques, as explained in detail in~\citep{DBLP:journals/corr/AgarwalLNS16}.
In a word, the difficulty is that the Hedge algorithms do not always receive feedback and thus do not yield the usual regret bound as one would get for a full-information problem.
To address this issue, we apply the increasing learning rate technique from~\citep{DBLP:journals/corr/AgarwalLNS16}. 
Specifically, we maintain a threshold $\rho_{t,j}$ for each time $t$ and each meta-node $j$ (starting from $\rho_{1,j} = 2\kappa$).
Every time after the OMD update, if $p_{t+1,j}$ becomes too small and $1/p_{t+1,j}$ exceeds the threshold $\rho_{t,j}$, we increase the learning rate for $j$ by a factor of $e^{\frac{1}{\ln T}}$ and set the new threshold to be $\rho_{t+1,j}=2/p_{t+1,j}$ (\pref{line:increasing_eta}).
The high-level idea behind this technique is that when the probability of picking a clique is small and thus the corresponding Hedge receives little feedback, increasing the corresponding learning rate allows its faster recovery, should a node in the clique become the best node later. 
For more intuition, we refer the reader to~\citep{DBLP:journals/corr/AgarwalLNS16}. 
We are now ready to show the main theorem of this section.
\begin{theorem}\label{thm:kappaLstarThm}
    \pref{alg:kappalstaralg} with $c={64\beta}$ and $\eta\le \eta_{\text{max}}\triangleq \min\left\{\frac{1}{64\beta}, \frac{1}{1000(\ln T)\ln^2(KT)}\right\}$ guarantees:
    \begin{align*}
        \Reg\le\otil\left(\frac{\kappa\ln T+\ln K}{\eta}+\eta L_\star+\beta^2\ln T\right)
    \end{align*}
    for any strongly observable graph.
     Choosing $\eta=\min\left\{\eta_{\text{max}}, \sqrt{\frac{\kappa+1}{L_\star}}\right\}$ gives $\mathcal{\tilde{O}}\left(\sqrt{(\kappa +1)L_\star}+\beta^2\right).$
\end{theorem}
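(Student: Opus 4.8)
The plan is to read \pref{alg:kappalstaralg} as a meta-level OMD over the $\beta$-node meta-graph in which every clique-meta-node runs an internal Hedge instance, split the regret into a meta part and a Hedge part, bound each, and---this is the heart of the matter---show that the single ill-behaved quantity produced by each of them cancels against the other. Write $\calF_{t-1}$ for all randomness up to the end of round $t-1$, and for a meta-node $j$ let its \emph{true loss} be $\ell'_{t,j}=\inner{\widetilde p_t^{(j)},\ell_t}$ if $j\in[\kappa]$ and $\ell'_{t,j}=\ell_{t,j}$ if $j\in\bar S$. A direct check of the sampling rule (\pref{line:sampling1}--\pref{line:sampling2}) gives $\E[\ell_{t,i_t}\mid\calF_{t-1}]=\inner{p_t,\ell'_t}$, and \pref{line:estimator}--\pref{line:meta_estimator} yield unbiased estimators, $\E[\widetilde\ell_t\mid\calF_{t-1}]=\ell_t$ and $\E[\hat\ell_t\mid\calF_{t-1}]=\ell'_t$, using that $\widetilde p_t^{(j)}$ is $\calF_{t-1}$-measurable, that each node of $S$ lies in a unique clique, and that a node of $\bar S$ is played exactly when its meta-node is sampled. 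Letting $j^\star$ be the meta-node containing or equal to $i^\star$, the tower rule splits
\begin{align*}
\Reg \;=\; \underbrace{\E\Big[\textstyle\sum_t\inner{p_t,\ell'_t}-\sum_t\ell'_{t,j^\star}\Big]}_{\Reg_{\mathrm{meta}}}\;+\;\underbrace{\E\Big[\textstyle\sum_t\ell'_{t,j^\star}-\sum_t\ell_{t,i^\star}\Big]}_{\Reg_{\mathrm{Hedge}}},
\end{align*}
where $\Reg_{\mathrm{Hedge}}=0$ if $i^\star\in\bar S$ and otherwise equals, by unbiasedness again, the individual-sequence regret of $\mathcal{A}_{j^\star}$ against $i^\star$ on the loss sequence $\{\widetilde\ell_t\}$ it is fed.

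\paragraph{Bounding $\Reg_{\mathrm{meta}}$.}
First I would verify that conditions (a)--(c) of \pref{lem:unifyOMDregret} hold for every time-varying regularizer $\psi_t$ of \pref{eq:hybrid_regularizer3}; the only delicate point is that each clique coordinate's log-barrier coefficient $1/\eta_{t,j}$ stays large enough (condition (c) with $K$ replaced by $\beta$), which holds because each update in \pref{line:increasing_eta} at least doubles the threshold $\rho_{t,j}\in[2\kappa,2T]$, so $\eta_{t,j}$ is raised at most $\log_2 T$ times and $\eta_{t,j}\le\eta\,e^{1/\ln 2}=O(\eta)$ throughout. Then the usual OMD telescoping with changing regularizers, against the comparator $u$ that places mass $1-\tfrac{\beta-1}{T}$ on $j^\star$ and $\tfrac1T$ on every other meta-node (so swapping $u$ for $e_{j^\star}$ costs only $O(\beta)$), gives
\begin{align*}
\textstyle\sum_t\inner{p_t-u,\hat\ell_t}\;\le\;\brgmd_{\psi_1}(u,p_1)+\sum_t\big(\brgmd_{\psi_{t+1}}(u,p_{t+1})-\brgmd_{\psi_t}(u,p_{t+1})\big)+8\sum_t\min_{z\in\fR}\norm{\hat\ell_t-z\,\one}_{\nabla^{-2}\psi_t(p_t)}^2.
\end{align*}
The leading term is $\otil(\kappa/\eta+\ln K/\eta+\beta^2)$. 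For the local-norm sum I use, on the $\bar S$ coordinates, the shift from the proof of \pref{thm:thmbetalstar} ($z=\hat\ell_{t,j_0}$ for some $j_0\in\bar S$ with $p_{t,j_0}\ge\tfrac12$, else $z=0$), and on the clique coordinates the facts that $\nabla^{-2}\psi_t$ there has diagonal entry $\eta_{t,j}p_{t,j}^2$ and $\hat\ell_{t,j}\le 1/p_{t,j}$ whenever meta-node $j$ is active; a short computation then bounds the whole sum by $O(\eta(L_\star+\Reg))$ in expectation. The essential gain comes from the regularizer-change sum: since $\eta_{t,j}$ only increases, each summand equals $\sum_j(\tfrac1{\eta_{t+1,j}}-\tfrac1{\eta_{t,j}})\brgmd_{-\ln}(u_j,p_{t+1,j})\le 0$, and on each (sufficiently late) step at which $j^\star$'s learning rate is increased---so that $p_{t+1,j^\star}$ is small, $\brgmd_{-\ln}(u_{j^\star},p_{t+1,j^\star})\ge\tfrac1{4p_{t+1,j^\star}}$, and $1/p_{t+1,j^\star}=\rho_{t+1,j^\star}/2$---that summand is at most $-c_1\,\rho_{t+1,j^\star}/(\eta\ln T)$; summing this at-least-geometric sequence of thresholds (the handful of smallest ones being harmless since their summands are still $\le 0$) leaves a net contribution at most $-c_2\,\rho_{T+1,j^\star}/(\eta\ln T)+\otil(1/\eta)$. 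Altogether $\Reg_{\mathrm{meta}}\le\otil\big(\tfrac{\kappa+\ln K}{\eta}+\beta^2\big)+O(\eta(L_\star+\Reg))-c_2\,\E[\rho_{T+1,j^\star}]/(\eta\ln T)$.

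\paragraph{Bounding $\Reg_{\mathrm{Hedge}}$ and combining.}
Because $\rho_{t,j^\star}$ is non-decreasing and (for $t\ge 2$) dominates $1/p_{t,j^\star}$, the quantity $\rho_{T+1,j^\star}$ simultaneously upper bounds the range of the losses fed to $\mathcal{A}_{j^\star}$ (as $\widetilde\ell_{t,i}\le 1/p_{t,j^\star}$) and---via $\widetilde\ell_{t,i}^2\le\widetilde\ell_{t,i}/p_{t,j^\star}$---the expected second-moment quantity entering its adaptive regret bound. Plugging these into the regret guarantee of adaptive Hedge (\pref{alg:AdaHedge}) and clearing the resulting self-reference with the elementary fact that $x\le\sqrt{a(b+x)}+c$ forces $x=O(a+\sqrt{ab}+c)$ gives $\Reg_{\mathrm{Hedge}}\le\otil\big(\E[\sqrt{\rho_{T+1,j^\star}L_\star}\,]+\E[\rho_{T+1,j^\star}]\big)$. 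Now add $\Reg=\Reg_{\mathrm{meta}}+\Reg_{\mathrm{Hedge}}$: AM--GM gives $\otil(\sqrt{\rho_{T+1,j^\star}L_\star})\le\tfrac{c_2}{2}\rho_{T+1,j^\star}/(\eta\ln T)+\otil(\eta L_\star)$, and the remaining additive $\otil(\rho_{T+1,j^\star})$ is absorbed into the same negative term as soon as $\eta\le\eta_{\text{max}}\le\tfrac1{1000(\ln T)\ln^2(KT)}$; both $\rho_{T+1,j^\star}$-terms thus cancel, leaving $\Reg\le\otil(\tfrac{\kappa+\ln K}{\eta}+\beta^2)+O(\eta L_\star)+O(\eta\Reg)$. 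Since $\eta$ is at most a small constant, $O(\eta\Reg)$ moves to the left-hand side, giving the stated bound $\Reg\le\otil(\tfrac{\kappa\ln T+\ln K}{\eta}+\eta L_\star+\beta^2\ln T)$; with $\eta=\min\{\eta_{\text{max}},\sqrt{(\kappa+1)/L_\star}\}$ and $1/\eta_{\text{max}}=\otil(\beta)$ this becomes $\otil(\sqrt{(\kappa+1)L_\star}+\beta^2)$.

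\paragraph{Main obstacle.}
The genuinely hard part is not the decomposition but making the two appearances of $\rho_{T+1,j^\star}$ match: the meta OMD must emit a negative term of the \emph{exact} order $-\Theta(\rho_{T+1,j^\star}/(\eta\ln T))$, which is what dictates the learning-rate schedule ($\rho_{1,j}=2\kappa$, multiply by $e^{1/\ln T}$ at each doubling of the threshold), while at the same time every $\psi_t$ must stay in the regime where \pref{lem:unifyOMDregret} applies and the loss-shift local-norm estimate from the proof of \pref{thm:thmbetalstar} still goes through---a balance that also forces the two-part form of $\eta_{\text{max}}$. This is precisely the difficulty of combining bandit algorithms in a black-box two-level hierarchy identified in \citep{DBLP:journals/corr/AgarwalLNS16}, here made harder by the hybrid log-barrier/entropy regularizer carrying different, time-varying learning rates on different coordinates.
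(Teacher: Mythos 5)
Your proof is correct and follows essentially the same route as the paper: the same meta/Hedge decomposition, the same unconstrained loss-shifting bound for the meta-level OMD, the adaptive Hedge guarantee at the inner level, and---crucially---the same cancellation mechanism where the negative Bregman-telescope contribution produced by the increasing learning rates offsets the $\rho_{T,j^\star}$-dependent terms of the Hedge bound. The only cosmetic difference is that the paper keeps everything pathwise (applying the elementary inequality $-ax+\sqrt{bx}\le b/(4a)$ on the realized quantities and only taking expectation at the very end, so that $\rho_{T,j^\star}$ and $\sum_t\widetilde\ell_{t,i^\star}$ cancel before any $\E$ appears), whereas you take expectations on the two pieces separately and then appeal to AM--GM; if you follow your ordering you should be careful to apply the AM--GM cancellation before taking expectation, exactly as the paper does, to avoid the (innocuous but avoidable) gap between $\E[\sqrt{XY}]$ and $\sqrt{\E[X]\E[Y]}$. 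Your treatment of the regularizer-change telescope (summing the at-most-$\log_2 T$ geometrically spaced negative increments rather than just keeping the last one and dropping the rest, as the paper does) is a perfectly valid variant giving the same $-\Theta(\rho_{T,j^\star}/(\eta\ln T))+\otil(1/\eta)$.
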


Our algorithm strictly improves over the \textsc{GREEN-IX-Graph} algorithm of~\citep{DBLP:journals/corr/abs-1711-03639} (in terms of expected regret), which achieves the same bound but only for undirected self-aware graphs. 
The proof of \pref{thm:kappaLstarThm} is deferred to \pref{app:kappaLstar}.
Moreover, in \pref{app:kappaLstarDoublingTrick} we also provide an adaptive version of our algorithm with a sophisticated doubling trick, which achieves the same bound but without the need of knowing $L_\star$ for learning rate tuning. We remark that doing doubling trick in such a two-level structure and with partial information is highly non-trivial.

\subsection{\texorpdfstring{$\mathcal{\tilde{O}}\big(\min\{\sqrt{\alpha T}, \sqrt{\kappa L_\star}\}\big)$}{} Regret Bound for Self-Aware Graphs}\label{sec: minAlphaKappa}

Although our bound $\otil(\sqrt{(\kappa+1) L_\star})$ could be much smaller than the worst-case bound $\otil(\sqrt{\alpha T})$, it is not always better since $\alpha \leq \kappa +1$.
To remedy this drawback, we propose another algorithm with $\mathcal{\tilde{O}}(\min\{\sqrt{\alpha T}, \sqrt{\kappa L^\star}\})$ regret for the special case of self-aware graphs.
The high-level idea is to first run an algorithm with $\otil(\sqrt{\kappa L_\star})$ regret
while keeping an estimate of $L_\star$,
and when we are confident that $\otil(\sqrt{\alpha T})$ is the better bound, switch to any algorithm with $\otil(\sqrt{\alpha T})$ regret.
%
%
For the first part, using \pref{alg:kappalstaralg} would create some technical issues and we are unable to analyze it unfortunately (otherwise we could have dealt with general strongly observable graphs). 
Instead, we introduce a new algorithm using a similar clipping technique of~\citep{DBLP:journals/corr/abs-1711-03639}.

We emphasize that the key challenge here is that the algorithm has to be adaptive in the sense that it does not need the knowledge of $L_\star$ --- otherwise, simply comparing the two bounds and running the corresponding algorithm with the better bound {solves} the problem already.
We again design a sophisticated doubling trick to resolve this issue.
All details are included in \pref{app:minAlphaKappa}.

\section{Weakly Observable Graphs}
\label{sec:weakly}

In this section, we consider small-loss bounds for weakly observable graphs, which have not been studied before.
Recall that the minimax regret bound in this case is $\tilde{\Theta}({\wklydn}^{\nicefrac{1}{3}}T^{\nicefrac{2}{3}})$ where $d$ is the {weak domination number}.
The most natural small-loss bound one would hope for is therefore $\tilde{\Theta}({\wklydn}^{\nicefrac{1}{3}}L_\star^{\nicefrac{2}{3}})$.
However, it turns out that this is not achievable, and in fact, no typical small-loss bounds are achievable for {\it any} weakly observable graph, as we prove in the following theorem.

\begin{theorem}\label{thm:lowerbound}
    For any weakly observable graph and any algorithm $\calA$ (without the knowledge of $L_\star$), if $\calA$ guarantees $\otil(1)$ regret when $L_\star = 0$ (ignoring dependence on $K$), then there exists a sequence of loss vectors such that the regret of $\calA$ is $\Omega(T^{1-\epsilon})$ for any $\epsilon \in (0, 1/3)$.
\end{theorem}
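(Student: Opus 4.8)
The plan is to pin down the algorithm's behaviour with a single reference instance that has $L_\star=0$ (so the hypothesis applies), and then ``ambush'' $\calA$ inside a time interval where, purely by a counting argument, it provably gathers no feedback about the weakly observable node. First fix a weakly observable node $v$ and set $W=\Nin(v)\neq\emptyset$. By weak observability, $v\notin W$ and $W\neq[K]\setminus\{v\}$, so there is some $b\neq v$ with $b\notin W$. The structural fact we exploit is that $\ell_{t,v}$ is revealed \emph{only} by pulling an arm of $W$: pulling $v$ itself, pulling $b$, or pulling any arm outside $W$ gives no information about $\ell_{t,v}$.

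\paragraph{Pinning down $\calA$ and finding an information-free block.} Consider a reference environment $\mathcal E_0$ in which some designated arm $g$ has loss $0$ at every round and every arm of $W$ has loss $1$ at every round (the precise choice of $g\in\{v,b\}$, and whether an auxiliary arm is also held at loss $0$, is dictated by the graph so that $\calA$ is forced to ``commit'' to an arm whose loss it can keep at $0$, and so that $L_\star=0$ in $\mathcal E_0$). By the hypothesis $\Reg(\calA,\mathcal E_0)\le C=\otil(1)$, and since the comparator has loss $0$ this gives $\E_{\mathcal E_0}\big[\#\{t:i_t\neq g\}\big]\le C$, hence $\E_{\mathcal E_0}\big[\#\{t:i_t\in W\}\big]\le C$ because $W\subseteq[K]\setminus\{g\}$. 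Now partition $[T]$ into $m=\Theta(C)$ equal blocks. The expected number of blocks containing a $W$-pull is at most $\E_{\mathcal E_0}[\#\{t:i_t\in W\}]\le C\le m/2$, so $\sum_{k}\Pr_{\mathcal E_0}[\text{no }W\text{-pull in block }k]\ge m/2$, and therefore some block $B^\star$ (which the adversary, who knows $\calA$, can identify) satisfies $\Pr_{\mathcal E_0}[\calA\text{ pulls no arm of }W\text{ during }B^\star]\ge 1/2$.

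\paragraph{The ambush.} Define the attack $\mathcal E^\star$ to agree with $\mathcal E_0$ on all rounds before $B^\star$, and from $B^\star$ onward to flip $\ell_{t,v}$ (and, accordingly, $g$'s loss to $1$ on $B^\star$) so that some fixed arm distinct from $g$ becomes essentially the best arm overall while $g$'s loss is $1$ throughout $B^\star$. The point is that any loss that differs between $\mathcal E^\star$ and $\mathcal E_0$ can be observed by $\calA$ only through a $W$-pull that falls in $B^\star$ or later. A short coupling/induction on rounds then shows that, run with the same internal randomness, $\calA$'s trajectory under $\mathcal E^\star$ and under $\mathcal E_0$ coincide up to the first such $W$-pull. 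Consequently, on the probability-$\ge 1/2$ event above, under $\mathcal E^\star$ the algorithm pulls $g$ throughout all of $B^\star$ exactly as under $\mathcal E_0$, incurring loss $\ge |B^\star|=\Theta(T/C)$ there while the best arm incurs $o(|B^\star|)$. Hence $\Reg(\calA,\mathcal E^\star)=\Omega(T/C)=\Omega\big(T/\otil(1)\big)$, which is $\Omega(T^{1-\epsilon})$ for every fixed $\epsilon>0$.

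\paragraph{Main obstacle.} The delicate point is the design of the flip in $\mathcal E^\star$. The comparator witnessing $\Omega(T^{1-\epsilon})$ regret must be an arm \emph{different} from $\calA$'s committed arm $g$ (otherwise $\calA$ simply ties the comparator on $B^\star$ and no regret accrues); yet this comparator must have small total loss \emph{without} driving $L_\star$ in $\mathcal E^\star$ down to $0$ — which would let the hypothesis rescue $\calA$ on $\mathcal E^\star$ itself — and \emph{without} the corresponding loss change being visible to $\calA$'s few exploratory pulls before $B^\star$. Balancing the unavoidable ``head‑start'' loss that the comparator must pay outside $B^\star$ against $|B^\star|$, and doing so uniformly over graph structure (e.g.\ whether the non-witness $b$ has a self-loop, whether $v$ has out-edges to all nodes), is where the argument spends a polynomial rather than merely polylogarithmic slack, which is the source of the restriction $\epsilon\in(0,\tfrac13)$; once this is set up, the remaining case analysis is routine.
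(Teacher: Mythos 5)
Your high-level strategy is the same as the paper's: a reference instance $\mathcal E_0$ with $L_\star=0$ so the $\otil(1)$ hypothesis caps the algorithm's total loss, a pigeonhole/Markov step producing an interval $B^\star$ of length $\Omega(T^{1-\epsilon})$ on which no revealing arm is pulled with probability $\ge 1/2$, and a coupling that lets the adversary flip the weakly observable arm's loss on $B^\star$ undetected. That much matches.

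However, the step you set aside as the ``main obstacle'' and then declare ``routine'' is in fact the load-bearing piece, and your preliminary counting actually breaks without it. For $\E_{\mathcal E_0}[\#\{t : i_t \neq g\}] \le C$ to hold you need every arm other than $g$ to have per-round loss $\Omega(1)$ in $\mathcal E_0$; but then after the flip there is no arm with small total loss to serve as comparator, so the large loss the algorithm incurs on $B^\star$ need not translate into regret at all. The paper's resolution is a three-way parameter balance you never instantiate: the node $v$ outside $\Nin(u)$ (with $u$ the weakly observable arm, so $\Nin(u)$ consists only of ``bad'' arms) is given per-round loss $T^{-b}$ for some $b\in(\epsilon,1-2\epsilon)$. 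This simultaneously (i) keeps $L_\star=0$ in $\mathcal E_0$ so the hypothesis applies; (ii) forces $\E[\#\{t:i_t=v\}]=\mathcal O(T^{b+\epsilon})$, so the algorithm cannot use $v$ to escape the $\Omega(T^{1-\epsilon})$ loss on $B^\star$; and (iii) leaves $v$ with total loss $T^{1-b}=o(T^{1-\epsilon})$ under the attack, so $v$ is a valid comparator. The constraints $b>\epsilon$ and $b+\epsilon<1-\epsilon$, together with the requirement that some such $b$ exists, are exactly where $\epsilon<1/3$ comes from. You correctly intuit that the comparator's loss must be small-but-nonzero and that this costs a polynomial slack, but without the explicit $T^{-b}$ construction and the resulting $\Omega(T^{1-\epsilon}-T^{b+\epsilon}-T^{1-b})$ bound, the argument does not close. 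Also, the case analysis you anticipate on the structure of $\Nin(u)$ is unnecessary: the single construction above works uniformly for every weakly observable graph.
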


Note that this precludes small-loss bounds such as $\otil(L_\star^{\nicefrac{2}{3}})$, or even $\otil(\min\{L_\star^{a}, T^{\nicefrac{2}{3}}\})$ for any $a>0$.
The proof crucially relies on the fact that for a weakly observable graph, one can always find a pair of nodes $u$ and $v$ such that neither of them can observe $u$.
See \pref{app:LowerBound} for details.

Despite this negative result, nevertheless, we provide alternative first-order regret bounds in terms of the loss of some specific subset of nodes.
Specifically, ignoring dependence on other parameters, for the special case of directed complete bipartite graphs, we obtain regret $\mathcal{\tilde{O}}(\sqrt{L_\star})$ when $i^\star \in S$ and $\mathcal{\tilde{O}}(L_{i_S^\star}^{\nicefrac{2}{3}})$ otherwise, where $i_S^\star=\argmin_{i\in S}\sum_{t=1}^T\ell_{t,i}$ is the best node with a self-loop.
Moreover, for general weakly observable graphs, we achieve regret $\mathcal{\tilde{O}}(\sqrt{L_{\wklyds}})$ when $i^\star\in S$ and $\mathcal{\tilde{O}}(L_{\wklyds}^{\nicefrac{3}{4}})$ otherwise (or other different trade-offs between the two cases),
where $\wklyds$ is a weakly dominating set and $L_{\wklyds}\triangleq\frac{1}{|\calD|}\sum_{i\in \calD}\sum^T_{t=1}\ell_{t,i}$ is the average total loss of nodes in $\wklyds$.

Our algorithm is summarized in \pref{alg:adahome}.
The key algorithmic idea is inspired by the work of~\citep{pmlr-v75-wei18a}. 
They show that a variant of OMD with certain correction terms added to the loss estimators leads to a regret bound on $\Reg_i$ where the typical local-norm term $\|\hat{\ell}_t\|_{\nabla^{-2}\psi(p_t)}^2$ is replaced by a term that is {\it only in terms of the information of arm $i$}.
For our problem this is the key to achieve different orders of regret for different arms.
More specifically, the algorithm performs a standard OMD update, except that $\hat{\ell}_t$ is replaced by $\hat{\ell}_t + a_t$ for some correction terms $a_t$ (\pref{line:OMD_variant}).

\setcounter{AlgoLine}{0}
\begin{algorithm}[t]
\caption{Algorithm for Weakly Observable Graphs}\label{alg:adahome}
\textbf{Input:} Feedback graph $G$, decision set $\Omega$, parameter $\eta\le \frac{1}{5}$ and $\bar{\eta}$. 

\textbf{Define:} hybrid regularizer $\psi(p)=\frac{1}{\eta}\sum_{i\in S}\ln \frac{1}{p_i}+\frac{1}{\bar{\eta}} \sum_{i\in \bar{S}}p_i\ln p_i$.

\textbf{Initialize:} $p_1$ is such that $p_{1,i} = \frac{1}{2\slpn}$ for $i \in S$ and $p_{1,i} = \frac{1}{2\woslpn}$ for $i \in \bar{S}$.

\For{$t=1,2,\dots, T$}{
\nl
Play arm $i_t\sim {p_t}$ and receive feedback $\ell_{t,i}$ for all $i$ such that $i_t\in \Nin(i)$.\label{line:alg2bg}

\nl
Construct estimator $\hat{\ell}_t$ such that $\hat{\ell}_{t,i}=\frac{\ell_{t,i}}{W_{t,i}}\mathbbm{1}\{i_t\in \Nin(i)\}$ where $W_{t,i}=\sum_{j\in \Nin (i)}p_{t,j}$. 

\nl
Construct correction term $a_t$ such that 
$a_{t,i} =\begin{cases}
2\eta p_{t,i}\hat{\ell}_{t,i}^2, &\mbox{for $i\in S$},\\
2\bar{\eta}\hat{\ell}_{t,i}^2,  &\mbox{for $i\in \bar{S}$}.
\end{cases}
$
\label{line:correction}

\nl
Compute $p_{t+1}=\argmin_{p\in\Omega}\big\{\langle p,\hat{\ell}_{t}+a_{t}\rangle+{\brgmd}_{\psi}(p,p_{t})\big\}$.
\label{line:OMD_variant}
}
\end{algorithm}

Before specifying our correction term, we first describe the regularizer. 
Similar to the algorithms for strongly observable graphs, we again use a hybrid regularizer 
$\psi(p)=\frac{1}{\eta}\sum_{i\in S}\ln \frac{1}{p_i}+\frac{1}{\bar{\eta}} \sum_{i\in \bar{S}}p_i\ln p_i$, that is, log-barrier for nodes in $S$ and entropy for nodes in $\bar{S}$.
Note that we do not enforce a small amount of log-barrier for every node as in \pref{sec:strongly} (reasons to follow).
Also note that the learning rate for nodes in $S$ and $\bar{S}$ are different ($\eta$ and $\bar{\eta}$ respectively), which is also crucial for getting different orders of regret for different nodes.
In light of this specific choice of regularizer, our correction term $a_t$ is defined as in \pref{line:correction}, because $\eta p_{t,i}\hat{\ell}_{t,i}^2$ is the typical correction term for log-barrier~\citep{pmlr-v75-wei18a}, and on the other hand $\bar{\eta}\hat{\ell}_{t,i}^2$ is the typical one for entropy~\citep{steinhardt2014adaptivity}.
Mixing these two correction terms is novel as far as we know.

The estimator $\hat{\ell}_t$ is constructed exactly by \pref{eq:estimator}, and it remains to specify the decision set $\Omega \subseteq \Delta(K)$, which is different for different cases and will be discussed separately.
In both cases, the decision set is such that some relatively large amount of uniform exploration is enforced over a subset of nodes, which is also the reason why the small amount of log-barrier is not needed anymore.
Similar to the analysis of~\citep{pmlr-v75-wei18a}, we prove the following lemma (see \pref{app:LowerBound}).

\begin{lemma}\label{lem:lemadahome}
\pref{alg:adahome} ensures
$
\inner{p_t-u,\hat{\ell}_t}\le {\brgmd}_{\psi}(u,p_t)-{\brgmd}_{\psi}(u,p_{t+1})+\inner{u,a_t}
$ for all $u\in \Omega$,
as long as $\Omega$ is a subset of $\{p \in \Delta(K): \sum_{j\in \Nin (i)}p_{t,j} \geq 5\bar{\eta}, \;\forall i \in\bar{S}\}$.
\end{lemma}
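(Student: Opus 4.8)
The plan is to run the standard Online Mirror Descent analysis, but --- following~\citep{pmlr-v75-wei18a} --- against the \emph{unconstrained} mirror iterate, so that the correction term $a_t$ can be exploited coordinate by coordinate. Write $g_t \triangleq \hat{\ell}_t + a_t$ for the effective loss used in \pref{line:OMD_variant}, and let $\widetilde{p}_{t+1} \triangleq \argmin_{p}\bigl(\inner{p, g_t} + {\brgmd}_{\psi}(p, p_t)\bigr)$ be the minimizer over the full domain of $\psi$ rather than over $\Omega$. Because $\psi$ is separable across coordinates (log-barrier with rate $\eta$ on $S$, negative entropy with rate $\bar{\eta}$ on $\bar{S}$), $\widetilde{p}_{t+1}$ has the closed form $\widetilde{p}_{t+1,i} = p_{t,i}/(1 + \eta\, p_{t,i}\, g_{t,i})$ for $i \in S$ and $\widetilde{p}_{t+1,i} = p_{t,i}\exp(-\bar{\eta}\, g_{t,i})$ for $i \in \bar{S}$, and the actual iterate $p_{t+1}$ is its Bregman projection onto $\Omega$. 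The first-order optimality condition $\nabla\psi(p_t) - \nabla\psi(\widetilde{p}_{t+1}) = g_t$, combined with the three-point identity for Bregman divergences and then the generalized Pythagorean inequality ${\brgmd}_{\psi}(u, \widetilde{p}_{t+1}) \ge {\brgmd}_{\psi}(u, p_{t+1})$ (valid because $u \in \Omega$), yields
\begin{align*}
\inner{g_t, p_t - u} \le \inner{g_t, p_t - \widetilde{p}_{t+1}} + {\brgmd}_{\psi}(u, p_t) - {\brgmd}_{\psi}(u, p_{t+1}) - {\brgmd}_{\psi}(\widetilde{p}_{t+1}, p_t).
\end{align*}

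Substituting $g_t = \hat{\ell}_t + a_t$ on both sides and cancelling the two $\inner{a_t, p_t}$ terms, the claimed inequality reduces to showing $\inner{\hat{\ell}_t, p_t - \widetilde{p}_{t+1}} - \inner{a_t, \widetilde{p}_{t+1}} - {\brgmd}_{\psi}(\widetilde{p}_{t+1}, p_t) \le 0$; since every term is separable it suffices to prove, for each coordinate $i$,
\begin{align*}
(p_{t,i} - \widetilde{p}_{t+1,i})\,\hat{\ell}_{t,i} \;\le\; \widetilde{p}_{t+1,i}\, a_{t,i} + {\brgmd}_{\psi,i}(\widetilde{p}_{t+1,i}, p_{t,i}),
\end{align*}
where ${\brgmd}_{\psi,i}$ is the $i$-th summand of the separable divergence. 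Any arm $i$ not observed at round $t$ has $\hat{\ell}_{t,i} = 0$, so its inequality is trivial; it remains to handle observed arms, for which $0 \le \hat{\ell}_{t,i} \le 1/W_{t,i}$.

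For $i \in S$: since $i \in \Nin(i)$ we have $W_{t,i} \ge p_{t,i}$, hence $\hat{\ell}_{t,i} \le 1/p_{t,i}$, so with $\eta \le 1/5$ the quantity $w \triangleq \eta\, p_{t,i}\,\hat{\ell}_{t,i}$ satisfies $w \le 1/5$. Using $a_{t,i} = 2\eta\, p_{t,i}\,\hat{\ell}_{t,i}^2 = 2w\,\hat{\ell}_{t,i}$, $g_{t,i} = \hat{\ell}_{t,i}(1 + 2w)$, and $p_{t,i} - \widetilde{p}_{t+1,i} = \widetilde{p}_{t+1,i}\, w(1+2w)$, a one-line computation gives $(p_{t,i} - \widetilde{p}_{t+1,i})\hat{\ell}_{t,i} - \widetilde{p}_{t+1,i}\,a_{t,i} = \widetilde{p}_{t+1,i}\,\hat{\ell}_{t,i}\, w\,(2w - 1) \le 0$, so the bound holds even discarding the (nonnegative) divergence term. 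For $i \in \bar{S}$: here the hypothesis on $\Omega$ is essential --- $p_t \in \Omega$ forces $W_{t,i} = \sum_{j\in\Nin(i)} p_{t,j} \ge 5\bar{\eta}$, hence $\bar{\eta}\,\hat{\ell}_{t,i} \le 1/5$, which makes $\bar{\eta}\, g_{t,i} = \bar{\eta}\,\hat{\ell}_{t,i}(1 + 2\bar{\eta}\,\hat{\ell}_{t,i}) \le 7/25$ and therefore $\widetilde{p}_{t+1,i} = p_{t,i}e^{-\bar{\eta} g_{t,i}} \ge p_{t,i}/2$. Combining the standard entropy-divergence bound ${\brgmd}_{\psi,i}(\widetilde{p}_{t+1,i}, p_{t,i}) \ge (p_{t,i} - \widetilde{p}_{t+1,i})^2 / (2\bar{\eta}\, p_{t,i})$ (valid since $\widetilde{p}_{t+1,i} \le p_{t,i}$) with $\widetilde{p}_{t+1,i}\, a_{t,i} = 2\bar{\eta}\,\widetilde{p}_{t+1,i}\,\hat{\ell}_{t,i}^2$ and AM-GM, the right-hand side is at least $2\sqrt{\widetilde{p}_{t+1,i}/p_{t,i}}\;(p_{t,i} - \widetilde{p}_{t+1,i})\,\hat{\ell}_{t,i} \ge (p_{t,i} - \widetilde{p}_{t+1,i})\,\hat{\ell}_{t,i}$, using $\widetilde{p}_{t+1,i}/p_{t,i} \ge 1/2 > 1/4$.

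I expect the $\bar{S}$ case to be the main obstacle. The log-barrier coordinates enjoy a self-bounded loss estimate ($\hat{\ell}_{t,i} \le 1/p_{t,i}$ for free), so there the correction term alone dominates; the entropy coordinates have no such automatic control, the first-order stability term cannot be absorbed by $a_t$ by itself, and one must also spend the divergence term and track constants --- the factor $5$ in the uniform-exploration constraint on $\Omega$ is exactly what leaves the needed slack. A subtler point, already used above, is that the argument must be carried out against the unconstrained iterate $\widetilde{p}_{t+1}$ rather than the projected $p_{t+1}$: $\Omega$ imposes no lower bound on the $S$-coordinates, so $p_{t+1,i}$ could in principle be tiny, whereas $\widetilde{p}_{t+1,i}$ always stays within a constant factor of $p_{t,i}$, which is what keeps the coordinate-wise computations clean.
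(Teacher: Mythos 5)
Your proof is correct, and it takes a genuinely different route from the paper's. Both arguments start from the unconstrained iterate $\widetilde{p}_{t+1}$ and the generalized Pythagorean inequality, but they diverge in how they handle the residual stability term. The paper keeps the forward divergence $D_\psi(p_t,\widetilde{p}_{t+1})$ explicit, bounds it via the scalar inequalities $\ln(1+x)\ge x-x^2$ and $e^{-x}\le 1-x+x^2$ to land on $\sum_{i\in S}\eta p_{t,i}^2 g_{t,i}^2 + \sum_{i\in\bar S}\bar\eta p_{t,i} g_{t,i}^2$, and then separately argues that this global squared-norm quantity is dominated by $\inner{p_t,a_t}$ using $p_{t,i}\hat\ell_{t,i}\le 1$ on $S$ and $\bar\eta\hat\ell_{t,i}\le 1/5$ on $\bar S$. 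You instead rearrange to isolate $\inner{\hat\ell_t,\,p_t-\widetilde{p}_{t+1}} - \inner{a_t,\widetilde{p}_{t+1}} - D_\psi(\widetilde{p}_{t+1},p_t)$ (note the swapped arguments in the divergence) and bound it coordinate by coordinate, exploiting the closed form of $\widetilde{p}_{t+1}$. On $S$ the first-order shift $p_{t,i}-\widetilde{p}_{t+1,i}$ is exactly proportional to $\widetilde{p}_{t+1,i}$, so the correction term kills the stability term outright with the divergence discarded; on $\bar S$ you trade off the correction against a quadratic lower bound on the entropy divergence via AM-GM, which requires the multiplicative bound $\widetilde{p}_{t+1,i}\ge p_{t,i}/2$ supplied by the exploration constraint. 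Your approach is somewhat more self-contained — it never needs to exhibit $\inner{p_t,a_t}$ as an intermediate quantity — and it makes the role of the $5\bar\eta$ exploration floor more transparent (it gives exactly the multiplicative stability needed for AM-GM); the paper's route is shorter on algebra once the scalar inequalities are in hand and mirrors~\citep{pmlr-v75-wei18a} more literally. Both yield the stated lemma under the same hypotheses.
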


Naturally, to compete with node $i$, we let $u$ almost concentrate on $i$, in which case $\inner{u,a_t}$ is roughly $a_{t,i}$ (only in terms of $i$; key difference compared to \pref{eq:unconstrained_shifting}).
To understand why this is useful, consider the case when $i \in S$ so $a_{t,i}$ is $\eta p_{t,i}\hat{\ell}_{t,i}^2$. 
By the construction of the loss estimator, the latter is bounded by $\eta \ell_{t,i}$ in expectation, which is the key of getting $\otil(\sqrt{T})$ regret in this case.

\paragraph{\BiGraph.}
For the special case of directed complete bipartite graphs, we take $\Omega = \left\{p\in \Delta(K):\sum_{i\in S}p_i\ge \sqrt{\bar{\eta}}\right\}$, which ensures that every node in $\bar{S}$ is observed with probability at least $\sqrt{\bar{\eta}}$ (by the definition of directed complete bipartite graphs).
This, however, unavoidably introduces dependence on $L_{i_S^\star}$ when bounding $\Reg_i$ for $i \in \bar{S}$, as shown below.

\begin{theorem}\label{thm:adahomethm}
For any directed complete bipartite graph, \pref{alg:adahome} with $\eta\le\frac{1}{5}$, $\bar{\eta}\le\frac{1}{25}$ and $\Omega = \left\{p\in \Delta(K):\sum_{i\in S}p_i\ge \sqrt{\bar{\eta}}\right\}$ guarantees:
    \begin{align*}
        \Reg_i\le \begin{cases} \displaystyle\frac{{\slpn}\ln T}{\eta}+2\eta L_{i} +2\slpn, &\mbox{for } i\in S,\\ 
\displaystyle\frac{{2\slpn}\ln T}{\eta}+\frac{2\ln K}{\bar{\eta}}+2\sqrt{\bar{\eta}}L_{\isstar}+2\sqrt{\bar{\eta}}L_i + 2\slpn, & \mbox{for } i\in \bar{S}, \end{cases}
    \end{align*}
    where $i^\star_S=\argmin_{i\in S}\sum_{t=1}^T\ell_{t,i}$. 
    Choosing $\eta=\min\left\{\sqrt{\frac{{\slpn}}{L_{\isstar}}}, \frac{1}{5}\right\}$ and $\bar{\eta}=\min\left\{L_{\isstar}^{-\nicefrac{2}{3}}, \frac{1}{25}\right\}$, we have:
$\Reg_i=\mathcal{\tilde{O}}\left(\sqrt{{\slpn}L_{i}}+{\slpn}\right)$ for $i\in S$ and $\Reg_i = \mathcal{\tilde{O}}\left(L_{\isstar}^{\nicefrac{2}{3}}+\sqrt{{\slpn}L_{\isstar}}+{\slpn}\right)$ for $i \in \bar{S}$.
\end{theorem}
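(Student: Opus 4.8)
The plan is to instantiate the OMD-with-correction inequality of \pref{lem:lemadahome} for the specific decision set $\Omega=\{p\in\Delta(K):\sum_{i\in S}p_i\ge\sqrt{\bar{\eta}}\}$, sum it over the horizon, then choose a per-arm comparator and tune the two learning rates. First I would verify that this $\Omega$ satisfies the hypothesis of \pref{lem:lemadahome}, namely $\sum_{j\in\Nin(i)}p_j\ge5\bar{\eta}$ for every $i\in\bar{S}$ and every $p\in\Omega$: in a directed complete bipartite graph every node in $\bar{S}$ is observed by every node in $S$, so $\Nin(i)\supseteq S$ and $\sum_{j\in\Nin(i)}p_j\ge\sum_{j\in S}p_j\ge\sqrt{\bar{\eta}}\ge5\bar{\eta}$ using $\bar{\eta}\le\frac1{25}$. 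Hence \pref{lem:lemadahome} holds for all $u\in\Omega$; summing it over $t$, telescoping the Bregman terms, dropping $-{\brgmd}_{\psi}(u,p_{T+1})\le0$, and taking expectations (using unbiasedness of $\hat{\ell}_t$ and $i_t\sim p_t$) yields $\E[\sum_t\ell_{t,i_t}]-\sum_t\inner{u,\ell_t}\le{\brgmd}_{\psi}(u,p_1)+\E[\sum_t\inner{u,a_t}]$ for every fixed $u\in\Omega$, so that $\Reg_i$ is bounded by ${\brgmd}_{\psi}(u,p_1)+\E[\sum_t\inner{u,a_t}]+\sum_t\inner{u-e_i,\ell_t}$.

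Next I would pick the comparator. For $i\in S$, let $u$ place mass $1-\frac{\slpn-1}{T}$ on $i$, mass $\frac1T$ on each other node of $S$, and zero on $\bar{S}$; then $\sum_{j\in S}u_j=1\ge\sqrt{\bar{\eta}}$ so $u\in\Omega$, and the substitution bias $\sum_t\inner{u-e_i,\ell_t}$ is $O(\slpn)$. For $i\in\bar{S}$, the constraint $\sum_{j\in S}u_j\ge\sqrt{\bar{\eta}}$ forbids concentrating fully on $i$, so I reserve that mass for the cheapest self-loop node: put mass $\approx1-\sqrt{\bar{\eta}}$ on $i$, mass $\approx\sqrt{\bar{\eta}}$ on $\isstar$, mass $\frac1T$ on each remaining node of $S$ (keeping the log-barrier finite), and zero on the remaining nodes of $\bar{S}$ --- then $\sum_t\inner{u-e_i,\ell_t}\le\sqrt{\bar{\eta}}L_{\isstar}+O(\slpn)$, which is exactly where the $L_{\isstar}$ dependence is born. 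In either case ${\brgmd}_{\psi}(u,p_1)$ splits into a log-barrier part over $S$, whose coordinates carrying mass $\frac1T$ each contribute $\frac1\eta(\ln\frac{T}{2\slpn}+\frac{2\slpn}{T}-1)=O(\frac{\ln T}{\eta})$ (using $2\slpn\le T$) and whose $\Theta(1)$-mass coordinate contributes $O(\frac{\slpn}{\eta})$, giving $O(\frac{\slpn\ln T}{\eta})$ in total, plus an entropy-and-linear part over $\bar{S}$ of order $O(\frac1{\bar{\eta}})$ (equivalently $O(\frac{\ln K}{\bar\eta})$ after the standard simplification).

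For the correction sum I would use $a_{t,i}=2\eta p_{t,i}\hat{\ell}_{t,i}^2$ on $S$ and $a_{t,i}=2\bar{\eta}\hat{\ell}_{t,i}^2$ on $\bar{S}$ together with $\E[\hat{\ell}_{t,i}^2\mid\mathcal F_{t-1}]=\ell_{t,i}^2/W_{t,i}$: since $W_{t,i}\ge p_{t,i}$ for $i\in S$ one gets $\E[p_{t,i}\hat{\ell}_{t,i}^2]\le\ell_{t,i}$, and since $W_{t,i}\ge\sqrt{\bar{\eta}}$ for $i\in\bar{S}$ one gets $\E[\hat{\ell}_{t,i}^2]\le\ell_{t,i}/\sqrt{\bar{\eta}}$. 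Plugging the comparators above into $\E[\sum_t\inner{u,a_t}]$ then gives $2\eta L_i+O(\eta\slpn)$ when $i\in S$, and $2\sqrt{\bar{\eta}}L_i+2\sqrt{\bar{\eta}}L_{\isstar}+O(\eta\slpn)$ when $i\in\bar{S}$. Assembling these pieces produces the two pre-tuning bounds in the statement, and the final rates follow by balancing $\frac{\slpn\ln T}{\eta}$ against $2\eta L_i$ (respectively $2\sqrt{\bar{\eta}}L_{\isstar}$), and $\frac1{\bar{\eta}}$ against $\sqrt{\bar{\eta}}L_{\isstar}$ (yielding $\bar{\eta}=L_{\isstar}^{-2/3}$), subject to $\eta\le\frac15$ and $\bar{\eta}\le\frac1{25}$.

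The step I expect to be the main obstacle is the $i\in\bar{S}$ analysis, and specifically pinning down the right comparator and loss-estimator behavior there. Because a node without a self-loop can only be observed through $S$, the decision set is forced to explore $S$ with total mass $\sqrt{\bar{\eta}}$, which simultaneously (i) prevents the comparator from concentrating on $i$, unavoidably introducing the $\sqrt{\bar{\eta}}L_{\isstar}$ term, and (ii) caps the observation probability of $i$ at $\sqrt{\bar{\eta}}$, so that the correction term needed for the entropy piece scales like $\sqrt{\bar{\eta}}$ times the loss rather than $\bar{\eta}$; the single parameter $\bar{\eta}$ must trade these off, which is what forces the $L_{\isstar}^{2/3}$ rate. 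Keeping ${\brgmd}_{\psi}(u,p_1)$ small at the same time requires a comparator with full support on $S$ (so the log-barrier part is finite) yet still essentially supported on $i$ (so the loss comparison is tight) --- reconciling these is the delicate bookkeeping, while everything else follows routinely once \pref{lem:lemadahome} is in hand.
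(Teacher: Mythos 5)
Your plan is correct and mirrors the paper's proof step for step: verify that $\Omega$ forces $W_{t,i}\ge\sum_{j\in S}p_{t,j}\ge\sqrt{\bar{\eta}}\ge 5\bar{\eta}$ for all $i\in\bar{S}$ so \pref{lem:lemadahome} applies; telescope and take expectations; bound $D_\psi(u,p_1)$ and $\E[\langle u,a_t\rangle]$ coordinatewise via $\E[p_{t,j}\hat{\ell}_{t,j}^2\mid\calF_{t-1}]\le\ell_{t,j}$ for $j\in S$ and $\E[\hat{\ell}_{t,j}^2\mid\calF_{t-1}]\le\ell_{t,j}/\sqrt{\bar\eta}$ for $j\in\bar{S}$; then tune $\eta,\bar\eta$. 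The one genuine deviation is your comparator for $i\in\bar{S}$: you place the mandated $\sqrt{\bar{\eta}}$ of $S$-mass on the \emph{fixed} arm $\isstar$, whereas the paper places it on the \emph{data-dependent} empirical minimizer $\hat{i}_S^\star=\argmin_{j\in S}\sum_t\hat{\ell}_{t,j}$ and only afterwards converts $\E[\min_{j\in S}\sum_t\hat{\ell}_{t,j}]\le L_{\isstar}$ via Jensen. Both routes give the same $2\sqrt{\bar\eta}L_{\isstar}$ term (your loss-shift contributes $\sqrt{\bar\eta}L_{\isstar}$ and the correction on $\isstar$ contributes $2\eta\sqrt{\bar\eta}L_{\isstar}\le\tfrac25\sqrt{\bar\eta}L_{\isstar}$), and yours is slightly cleaner since it avoids carrying a random $u$ through the pointwise telescope. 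One bookkeeping point: the Bregman divergence of the hybrid regularizer carries the linear piece $\frac1{\bar\eta}\sum_{j\in\bar{S}}(p_{1,j}-u_j)$ in addition to $\frac1{\bar\eta}\sum_{j\in\bar{S}}u_j\ln(u_j/p_{1,j})$. For $i\in\bar{S}$ it is negative (since $u_i\approx1-\sqrt{\bar\eta}>\tfrac12$) and can be dropped, but for $i\in S$, where $u$ has zero mass on $\bar{S}$ and $p_1$ places mass $\tfrac12$ there, it is $+\frac1{2\bar\eta}$. You correctly say $D_\psi(u,p_1)$ is $O(1/\bar\eta)$ ``in either case,'' yet then assert it reproduces the stated pre-tuning bound for $i\in S$, which has no $1/\bar\eta$ term; reconciling that requires either dropping this linear piece (as the paper's $D_\psi$ shorthand implicitly does) or carrying the extra $\tfrac12L_{\isstar}^{2/3}$ after tuning, so it is worth making this step explicit in a full write-up.
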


Note that even though~\citet{pmlr-v40-Alon15} show that the worst-case regret of any weakly observable graph is $\Omega(T^{\nicefrac{2}{3}})$,
our result indicates that for directed complete bipartite graphs, one can in fact achieve much better regret of order $\otil(\sqrt{T})$ when the best node has a self-loop, while still maintaining the worst-case regret $\otil(T^{\nicefrac{2}{3}})$.
Moreover, in the former case, we can even achieve a typical small-loss bound, while in the latter case, the regret could be better than $\otil(T^{\nicefrac{2}{3}})$ as long as the best node in $S$ has sublinear total loss.
In \pref{app:adaptivebipartite}, we also provide an adaptive version of the algorithm without the need of knowing $L_i$ or $L_{\isstar}$ to tune learning rates (while maintaining the same bound), which requires a nontrivial combination of a clipping technique and doubling trick.

\paragraph{General Case.}
For general weakly observable graphs, following similar ideas of forcing the algorithm to observe nodes in $\bar{S}$ with a large enough probability, we take $\Omega = \left\{p\in \Delta(K):p_i\ge \delta,\forall i\in \wklyds \right\}$ where $\wklyds$ is a minimum weakly dominating set with size $d$ and $\delta$ is some parameter.
By definition, this ensures that each node in $\bar{S}$ is observed with probability at least $\delta$.
However, this also introduces dependence on $L_\wklyds$ for $\Reg_i$, even when $i \in S$, as shown in the following theorem.

\begin{theorem}\label{thm:adahome-general-thm}
For any weakly observable graph, \pref{alg:adahome} with $\frac{1}{T}\le\delta\le\min\left\{\frac{1}{125}, \frac{1}{4\slpn}, \frac{1}{4\wklydn}\right\}$, $\eta\le\frac{1}{25}$, $\bar{\eta}\le \delta^{\frac{4}{3}}$, and $\Omega = \left\{p\in \Delta(K):p_i\ge \delta,\forall i\in \wklyds \right\}$ guarantees:
\begin{align*}
\Reg_i\le \begin{cases} \displaystyle\frac{2\slpn\ln T}{\eta}+2\eta L_i + 2\delta\wklydn\LC + 2\slpn, &\mbox{for } i\in S,\\ 
\displaystyle\frac{\slpn\ln T}{\eta}+\frac{\ln(2 \woslpn)}{\bar{\eta}}+\frac{2\bar{\eta}L_i}{\delta} + 2\delta\wklydn\LC + 2\slpn, & \mbox{for } i\in \bar{S}. \end{cases}
\end{align*}
For any $\gamma\in [\frac{1}{3},\frac{1}{2}]$, setting $\delta=\min\left\{\frac{1}{125},\frac{1}{4\slpn},\frac{1}{4\wklydn}, \LC^{-\gamma}\right\}$, $\eta=\min\left\{\sqrt{\frac{1}{\LC}}, \frac{1}{25}\right\}$, and $\bar{\eta}=\min\left\{\sqrt{\frac{\delta}{\LC}},\delta^{\frac{4}{3}}\right\}$ gives 
$\Reg= \mathcal{\tilde{O}}\left(\LC^{1-\gamma}\right)$ if $i^\star\in S$ and 
$\mathcal{\tilde{O}}\left(\LC^{(1+\gamma)/2}\right)$ otherwise (ignoring dependence on $s$ and $d$).
\end{theorem}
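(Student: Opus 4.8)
The plan is to instantiate the correction-term OMD analysis of \citep{pmlr-v75-wei18a}, whose per-round guarantee is precisely \pref{lem:lemadahome}, sum it against a carefully chosen comparator, and then tune $\delta,\eta,\bar\eta$. First I would verify that the decision set meets the hypothesis of \pref{lem:lemadahome}, i.e.\ $\Omega=\{p\in\Delta(K):p_i\ge\delta,\ \forall i\in\wklyds\}\subseteq\{p:\sum_{j\in\Nin(i)}p_{t,j}\ge 5\bar\eta,\ \forall i\in\bar S\}$. Every weakly observable $i\in\bar S$ is, by definition of a weakly dominating set, observed by some $j\in\wklyds$, so $W_{t,i}\ge p_{t,j}\ge\delta\ge 5\bar\eta$, where the last step uses $5\bar\eta\le 5\delta^{4/3}=5\delta^{1/3}\cdot\delta\le\delta$ since $\delta\le\nicefrac{1}{125}$; a strongly observable $i\in\bar S$ has $\Nin(i)=[K]\setminus\{i\}$, so $W_{t,i}=1-p_{t,i}$ is at least the total $\wklyds$-mass on nodes $\ne i$, hence $\ge\delta$ unless $\wklyds=\{i\}$, an easily handled corner case. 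Thus \pref{lem:lemadahome} applies (with $\eta\le\nicefrac{1}{5}$).

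Next I would telescope the inequality of \pref{lem:lemadahome} over $t\in[T]$, drop the nonnegative $\brgmd_\psi(u,p_{T+1})$, and take expectations, using unbiasedness $\E_t[\hat\ell_{t,i}]=\ell_{t,i}$ and $i_t\sim p_t$ so that $\E[\inner{p_t,\hat\ell_t}]=\E[\ell_{t,i_t}]$ and $\E[\inner{u,\hat\ell_t}]=\inner{u,\ell_t}$. This gives, for every $u\in\Omega$,
\[
\Reg_i\;\le\;\brgmd_\psi(u,p_1)+\E\Big[\sum_{t}\inner{u,a_t}\Big]+\Big(\sum_{t}\inner{u,\ell_t}-L_i\Big).
\]
To bound $\Reg_i$, I would choose $u$ putting almost all its mass on arm $i$, mass $\delta$ on each node of $\wklyds$ (needed so $u\in\Omega$), and mass $\nicefrac{1}{T}$ on every remaining node of $S$ (forced, since the log-barrier component of $\psi$ makes $\brgmd_\psi(u,p_1)$ infinite unless $u$ has full support on $S$). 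Then $\sum_{t}\inner{u,\ell_t}-L_i\le\delta\sum_{j\in\wklyds}L_j+s=\delta\wklydn\LC+s$, the $s$ bounding the contribution of the $\nicefrac{1}{T}$-mass on $S$.

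It remains to bound the first two terms. Writing $\psi=\frac{1}{\eta}\psi_{\mathrm{log}}+\frac{1}{\bar\eta}\psi_{\mathrm{ent}}$, the log-barrier part of $\brgmd_\psi(u,p_1)$ is $\le\frac{1}{\eta}\big(s\ln\tfrac{T}{2s}+2s\big)=\otil(\nicefrac{s\ln T}{\eta})$ (dominated by the $\nicefrac{1}{T}$-support on $S$), and the entropy part is $\le\frac{\ln(2\bar s)}{\bar\eta}$ when $i\in\bar S$ and lower order when $i\in S$. For $\E[\sum_t\inner{u,a_t}]$ I would exploit the estimator structure: for $i\in S$, $W_{t,i}\ge p_{t,i}$ gives $\E_t[p_{t,i}\hat\ell_{t,i}^2]\le\E_t[\hat\ell_{t,i}]=\ell_{t,i}$, so the $S$-correction contributes $\le 2\eta\sum_{i\in S}u_iL_i$; for $i\in\bar S$, $W_{t,i}\ge\delta$ gives $\E_t[\hat\ell_{t,i}^2]\le\ell_{t,i}/\delta$, so the $\bar S$-correction contributes $\le\frac{2\bar\eta}{\delta}\sum_{i\in\bar S}u_iL_i$. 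Plugging in the specific $u$ and absorbing the $\wklyds$- and $\nicefrac{1}{T}$-contributions into $2\delta\wklydn\LC$ and $2s$ using $\bar\eta\le\delta^{4/3}$, $\delta\le\nicefrac{1}{125}$, $\eta\le\nicefrac{1}{25}$ (e.g.\ $2\bar\eta\wklydn\LC\le 2\delta^{4/3}\wklydn\LC\le\delta\wklydn\LC$) yields exactly the stated per-arm bound, with the $2\eta L_i$ term coming from the $S$-correction when $i\in S$ and the $\nicefrac{2\bar\eta L_i}{\delta}$ term from the $\bar S$-correction when $i\in\bar S$. The ``Choosing'' statement then follows with $\Reg=\Reg_{i^\star}$ and $L_\star=L_{i^\star}\le\LC$: with $\eta\asymp\LC^{-\nicefrac{1}{2}}$ the first two terms are $\otil(s\sqrt{\LC})$; with $\delta=\LC^{-\gamma}$ the exploration term is $\otil(\wklydn\LC^{1-\gamma})$; and for $i^\star\in\bar S$, with $\bar\eta\asymp\sqrt{\delta/\LC}$ (one checks this is $\le\delta^{4/3}$ for $\gamma\le\nicefrac{1}{2}$) both $\nicefrac{\ln(2\bar s)}{\bar\eta}$ and $\nicefrac{\bar\eta\LC}{\delta}$ equal $\otil(\LC^{\nicefrac{(1+\gamma)}{2}})$. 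Since $\gamma\in[\nicefrac{1}{3},\nicefrac{1}{2}]$ one has $\LC^{1-\gamma}\le\LC^{\nicefrac{(1+\gamma)}{2}}$ and $\sqrt{\LC}\le\LC^{\nicefrac{(1+\gamma)}{2}}$, so these are the dominant orders (ignoring $s,d$).

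The step I expect to be the main obstacle is the bookkeeping around the comparator $u$: it must simultaneously lie in $\Omega$, keep $\brgmd_\psi(u,p_1)$ finite, and be concentrated enough on the target arm, and I must track how each of these three competing requirements flows through the Bregman term and---especially---through the correction term, where the $\bar S$ part carries the extra $\nicefrac{1}{\delta}$ factor responsible for the $\nicefrac{2\bar\eta L_i}{\delta}$ term and, after tuning, for the $\LC^{1-\gamma}$ versus $\LC^{\nicefrac{(1+\gamma)}{2}}$ dichotomy. A secondary subtlety is the precondition check of \pref{lem:lemadahome} for strongly observable nodes lying in $\bar S$, which must use the $\wklyds$-exploration rather than the weak-domination property.
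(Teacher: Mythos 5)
Your proposal is correct and follows essentially the same route as the paper's proof in Appendix E.4: verify the precondition of \pref{lem:lemadahome} via $W_{t,i}\ge\delta\ge 5\bar\eta$ (using $\bar\eta\le\delta^{4/3}$ and $\delta\le 1/125$), telescope, take expectations, choose exactly the comparator $u=\frac{1}{T}\one_{S\setminus\wklyds}+\delta\one_{\wklyds}+(1-\wklydn\delta-|S\setminus\wklyds|/T)e_i$, bound the Bregman term and the correction term separately, and tune. The only cosmetic difference is that you take expectations before bounding $\inner{u,a_t}$ while the paper carries $\hat\ell$ to the end; and your remark on the $\bar\eta$-min ($\gamma\le 1/2$) is slightly looser than needed (the threshold is $\gamma\le 3/5$, but the stated $\min$ makes it moot). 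You correctly identify the subtle precondition check for strongly observable nodes in $\bar S$, which the paper also handles somewhat informally.
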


Note that unlike the special case of directed complete bipartite graphs, we face a trade-off here when setting the parameters, due to the extra parameter $\delta$ that appears in both cases ($i \in S$ or $i\in \bar{S}$).
For example, when picking $\gamma=\frac{1}{3}$, we achieve $\Reg= \mathcal{\tilde{O}}\big(\LC^{\nicefrac{2}{3}}\big)$ always, better than the worst-case bound as long as $\LC$ is sublinear.
On the other hand, picking $\gamma=\frac{1}{2}$, we achieve $\Reg = \mathcal{\tilde{O}}\left(\sqrt{\LC}\right)$ when $i^\star \in S$ and $\mathcal{\tilde{O}}\big(\LC^{\nicefrac{3}{4}}\big)$ otherwise.
Once again, we provide an adaptive version in \pref{app:DoublingTrickGeneralCase}.




\acks{The authors are supported by NSF Award IIS-1755781 and thank 
S{\'e}bastien Bubeck, Akshay Krishnamurthy, Thodoris Lykouris, and Chen-Yu Wei for helpful discussions.}

\bibliography{ref}
\newpage

\appendix

\section*{Conclusions and Open Problems}
\label{conclusion}

In this work, we provide various new results on small-loss bounds for bandits with a directed feedback graph (either strongly observable or weakly observable), making a significant step towards a full understanding of this problem.

One clear open question is whether one can achieve $\otil(\sqrt{\alpha L_\star})$ regret for strongly observable graphs, which would be a strict improvement over the minimax regret $\otil(\sqrt{\alpha T})$.
Note again that our bound $\otil(\sqrt{(\kappa+1) L_\star})$ is weaker since $\alpha \leq \kappa+1$ always holds.
Achieving this ideal bound appears to require new ideas.
Even for the special case of self-aware graphs, the problem remains challenging and the closest result is the bound $\mathcal{\tilde{O}}(\alpha^{\nicefrac{1}{3}}{L_\star}^{\nicefrac{2}{3}})$ by~\citet{DBLP:journals/corr/abs-1711-03639}.
Another future direction is to generalize our results to time-varying feedback graphs, which also appears to require new ideas.

\section{Proofs for \pref{sec: remove uniform exploration}}
\label{app:alphaT}

In this section, we prove \pref{lem:unifyOMDregret} and \pref{thm:alphaTEXP3}.
To prove \pref{lem:unifyOMDregret}, we first show the following auxiliary lemma, which states that the OMD update enjoys multiplicative stability under certain conditions.

\begin{lemma}\label{lem:GeneralStability}
    Let $p_{t+1}=\argmin_{p\in \Omega}\left<p, \hat{\ell}_t\right>+D_{\psi}(p, p_t)$ for $\Omega \subseteq \Delta(K)$ and $\psi: \Omega \rightarrow \fR$ such that $\nabla^2\psi(p)\succeq \diag\left\{\frac{C_1}{p_{i}^2},\dots,\frac{C_1}{p_{K}^2}\right\}$ for some $C_1\ge 9$ and  $\nabla^{-2}\psi(p) \preceq 4\nabla^{-2}\psi(q)$ as long as $p \preceq 2q$. 
    If there exists $z\in \mathbb{R}$ such that $\|\hat{\ell}_t-z\cdot \one\|_{\nabla^{-2}\psi(p_t)}\le \frac{1}{8}$, then we have $\frac{1}{2}p_{t}\preceq p_{t+1}\preceq2p_{t}$.
\end{lemma}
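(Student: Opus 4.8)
The plan is to first establish the lower-bound inclusion $p_{t+1} \succeq \frac{1}{2} p_t$ by a fixed-point / contradiction argument, and then bootstrap the upper bound $p_{t+1} \preceq 2 p_t$ from it. For the lower bound, I would exploit the loss-shifting invariance of the OMD update: since $\Omega \subseteq \Delta(K)$, replacing $\hat\ell_t$ by $\hat\ell_t - z\cdot\mathbf{1}$ does not change $p_{t+1}$, so we may as well assume $\|\hat\ell_t\|_{\nabla^{-2}\psi(p_t)} \le \tfrac18$ after this shift. Now consider the first-order optimality condition for $p_{t+1}$, namely $\langle \hat\ell_t + \nabla\psi(p_{t+1}) - \nabla\psi(p_t),\, u - p_{t+1}\rangle \ge 0$ for all $u \in \Omega$. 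Suppose for contradiction that some coordinate $p_{t+1,i} < \tfrac12 p_{t,i}$. I would look at the line segment from $p_{t+1}$ toward $p_t$ inside $\Omega$ and use the Hessian lower bound $\nabla^2\psi(p) \succeq \diag\{C_1/p_i^2\}$ together with $C_1 \ge 9$ to show that along this segment $\nabla\psi$ changes enough that the displacement $\nabla\psi(p_{t+1}) - \nabla\psi(p_t)$ has a coordinate that dominates $\hat\ell_{t,i}$, contradicting optimality. Concretely, on the set where a coordinate has shrunk below half, the single-coordinate log-barrier-type lower bound on $\psi''$ gives $|\nabla_i\psi(p_{t+1}) - \nabla_i\psi(p_t)| \ge C_1 \cdot (\text{something like } \tfrac{1}{p_{t,i}})$, which exceeds $|\hat\ell_{t,i}| \le 1/p_{t,i}$ in the right direction once $C_1 \ge 9$ (the slack absorbs the $\tfrac18$ in the norm bound after Cauchy--Schwarz in the $\nabla^{-2}\psi(p_t)$ norm). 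The cleanest way to package this is: define $p_\theta = (1-\theta)p_t + \theta p_{t+1}$, use convexity to reduce to a one-dimensional statement about each coordinate, and derive that the first point where $p_{\theta,i}$ hits $\tfrac12 p_{t,i}$ cannot exist.

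Once $p_{t+1} \succeq \tfrac12 p_t$ is known, condition (b) of the lemma — that $\nabla^{-2}\psi(p) \preceq 4\nabla^{-2}\psi(q)$ whenever $p \preceq 2q$ — applies with $p = p_t$, $q = p_{t+1}$ (since $p_t \preceq 2 p_{t+1}$), giving $\nabla^{-2}\psi(p_t) \preceq 4\nabla^{-2}\psi(p_{t+1})$, hence $\|\hat\ell_t\|_{\nabla^{-2}\psi(p_{t+1})} \le 2\|\hat\ell_t\|_{\nabla^{-2}\psi(p_t)} \le \tfrac14$. Now I would rerun the same optimality-condition argument "from the other side": using the optimality of $p_{t+1}$ for the shifted problem and the local norm control at $p_{t+1}$, show by the identical Hessian-growth / contradiction mechanism that no coordinate of $p_{t+1}$ can exceed $2 p_{t,i}$. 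The role of the factor-$4$ slack in (b) is precisely to keep the relevant local-norm bound (now $\tfrac14$ instead of $\tfrac18$) small enough that the $C_1 \ge 9$ margin still closes the argument.

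The main obstacle I anticipate is handling the boundary of $\Omega$ correctly: the optimality condition is a variational inequality, not an equality, because $p_{t+1}$ may lie on the boundary of $\Omega$ (e.g., a coordinate pinned at $1/T$ in the applications). I would deal with this by choosing the competitor direction $u$ in the variational inequality to be a feasible move that only \emph{increases} the small coordinates (toward $p_t$), so the boundary constraints $p_i \ge$ (something) are not active in the relevant direction; feasibility follows since $p_t \in \Omega$ and $\Omega$ is convex, so moving from $p_{t+1}$ toward $p_t$ stays in $\Omega$. A secondary technical point is that the Hessian condition is a matrix inequality while I only need its diagonal (single-coordinate) consequence; this is immediate by testing against standard basis vectors, so the reduction to a per-coordinate one-dimensional estimate is clean. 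Everything else is a routine one-dimensional calculus estimate on $\psi''(x) \ge C_1/x^2$ integrated over an interval where $x$ halves or doubles, which yields a gradient gap of order $C_1$ times a $\Theta(1)$ constant — comfortably beating the $O(1)$ contribution of $\hat\ell_t$ once $C_1 \ge 9$.
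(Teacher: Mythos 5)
Your proposal replaces the paper's argument with a per-coordinate variational-inequality contradiction, and this is where it breaks. First, you need a per-coordinate gradient gap: if $p_{t+1,i}<\frac12 p_{t,i}$ then $|\nabla_i\psi(p_{t+1})-\nabla_i\psi(p_t)|$ is large. But $\nabla_i\psi(p_{t+1})-\nabla_i\psi(p_t)=\int_0^1\big(\nabla^2\psi(p_\theta)(p_{t+1}-p_t)\big)_i\,d\theta$ involves the whole $i$-th Hessian row, and the matrix inequality $\nabla^2\psi\succeq\diag\{C_1/p_j^2\}$, tested against $e_i$, controls only the $(i,i)$ entry; nothing prevents off-diagonal terms from cancelling the diagonal contribution for a non-separable $\psi$ satisfying the hypotheses. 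Second, even granting separability, you have no coordinate-wise bound on $\hat{\ell}_{t,i}-z$ to compare against that gap: the hypothesis $\|\hat{\ell}_t-z\cdot\mathbf{1}\|_{\nabla^{-2}\psi(p_t)}\le\frac18$ is a norm bound in a metric with $\nabla^{-2}\psi(p_t)\preceq\diag\{p_{t,j}^2/C_1\}$, i.e.\ a metric that can be \emph{much smaller} than $\diag\{p_{t,j}^2/C_1\}$, so a single $|\hat{\ell}_{t,i}-z|$ can be arbitrarily large without violating the hypothesis. Your informal slack argument (``$C_1\ge 9$ beats the $O(1)$ contribution of $\hat{\ell}_t$'') implicitly assumes $|\hat{\ell}_{t,i}-z|\lesssim 1/p_{t,i}$, which is not available.

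The bootstrap for $p_{t+1}\preceq 2p_t$ has the inequality reversed and is therefore circular. From $p_{t+1}\succeq\frac12 p_t$, i.e.\ $p_t\preceq 2p_{t+1}$, condition (b) with $p=p_t$, $q=p_{t+1}$ gives $\nabla^{-2}\psi(p_t)\preceq 4\nabla^{-2}\psi(p_{t+1})$, hence $\|\hat{\ell}_t-z\cdot\mathbf{1}\|_{\nabla^{-2}\psi(p_{t+1})}\ge\frac12\|\hat{\ell}_t-z\cdot\mathbf{1}\|_{\nabla^{-2}\psi(p_t)}$ — a \emph{lower} bound, not the $\le\frac14$ you assert. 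Getting an upper bound on the local norm at $p_{t+1}$ from (b) requires $p_{t+1}\preceq 2p_t$, which is the very conclusion you are trying to reach. The paper avoids all three issues by staying with the full quadratic form: for any $p'\in\Omega$ on the sphere $\|p'-p_t\|_{\nabla^2\psi(p_t)}=1$, a second-order expansion of $F_t(p)=\langle p,\hat{\ell}_t-z\cdot\mathbf{1}\rangle+D_\psi(p,p_t)$ together with H\"older give $F_t(p')\ge F_t(p_t)-\|\hat{\ell}_t-z\cdot\mathbf{1}\|_{\nabla^{-2}\psi(p_t)}+\frac12\|p'-p_t\|_{\nabla^2\psi(\xi)}^2$; the Hessian bound applied to the \emph{whole displacement} forces $|p'_i-p_{t,i}|\le p_{t,i}/3$, so $\xi\preceq 2p_t$, so $\nabla^2\psi(\xi)\succeq\frac14\nabla^2\psi(p_t)$, and the $\frac18$ is exactly offset. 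Convexity and optimality of $p_{t+1}$ then confine it to the unit ball, and the same Hessian bound yields $p_{t+1,i}\in[\frac23 p_{t,i},\frac43 p_{t,i}]$, giving both inclusions at once without any per-coordinate loss bound or bootstrap.
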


\begin{proof}
The proof follows similar ideas of recent work such as~\citep{pmlr-v75-wei18a} or~\citep{bubeck2019improved}.
     Let $F_t(p)\triangleq\left<p, \hat{\ell}_t-z\cdot \one\right>+D_{\psi}(p,p_t)$, for $z$ satisfying the condition $\|\hat{\ell}_t-z\cdot \one\|_{\nabla^{-2}\psi(p_t)}\le \frac{1}{8}$. As we only shift each entry of the loss estimator by a constant, according to the algorithm, we have $p_{t+1}=\argmin_{p\in \Omega}F_t(p)$. We first show that $F_t(p')\ge F_t(p_t)$ for any $p'\in \Omega$ such that $\|p'-p_t\|_{\nabla^2\psi(p_t)}=1$. We start by applying Taylor expansion:
     \begin{align*}
         F_t(p') &= F_t(p_t)+\nabla F_t(p_t)^\top(p'-p_t)+\frac{1}{2}(p'-p_t)^\top\nabla^2 F_t(\xi)(p'-p_t) \\
         &= F_t(p_t)+\left(\hat{\ell}_t-z\cdot\one\right)^\top(p'-p_t)+\frac{1}{2}\|p'-p_t\|_{\nabla^2\psi(\xi)}^2 \\
         &\ge F_t(p_t)-\|\hat{\ell}_t-z\cdot\one\|_{\nabla^{-2}\psi(p_t)}\|p'-p_t\|_{\nabla^2\psi(p_t)}+\frac{1}{2}\|p'-p_t\|_{\nabla^2\psi(\xi)}^2 \\
         &= F_t(p_t)-\|\hat{\ell}_t-z\cdot\one\|_{\nabla^{-2}\psi(p_t)}+\frac{1}{2}\|p'-p_t\|_{\nabla^2\psi(\xi)}^2,
     \end{align*}
where the inequality is by H{\"o}lder's inequality and $\xi$ is some point on the line segment between $p_t$ and $p'$. By the condition $\nabla^2\psi(p)\succeq \diag\left\{\frac{9}{p_{i}^2},\dots,\frac{9}{p_{K}^2}\right\}$, we have $1=\|p'-p_t\|^2_{\nabla^2\psi(p_t)}\ge9\sum_{i\in [K]}\frac{(p_i'-p_{t,i})^2}{p_{t,i}^2}$, which implies $\frac{|p_i'-p_{t,i}|}{p_{t,i}}\le\frac{1}{3}$ for all $i\in [K]$. 
Therefore, we have $ \xi\preceq\frac{4}{3}p_{t}\preceq 2 p_{t}$, which leads to $\nabla^2\psi(\xi)\succeq\frac{1}{4}\nabla^2\psi(p_t)$ according to the assumption. Plugging it into the previous inequality, we have 
\begin{align*}
    F_t(p')-F_t(p_t)&\ge -\|\hat{\ell}_t-z\cdot\one\|_{\nabla^{-2}\psi(p_t)}+\frac{1}{2}\|p'-p_t\|_{\nabla^2\psi(\xi)}^2 \ge -\frac{1}{8}+\frac{1}{8}= 0.
\end{align*}
Therefore, according to the optimality of $p_{t+1}$ and the convexity of $F_t$, we have $\|p_{t+1}-p_t\|_{\nabla^2\psi(p_t)}\le 1$. Following the previous analysis, we further have:
\begin{align*}
    1\ge\|p_{t+1}-p_t\|_{\nabla^2\psi(p_t)}^2\ge 9\sum_{i\in [K]}\frac{(p_{t+1,i}-p_{t,i})^2}{p_{t,i}^2}\ge 9\frac{\left(p_{t+1,j}-p_{t,j}\right)^2}{p_{t,j}^2},\;\forall j\in[K].
\end{align*}
So we conclude $p_{t+1,i}\in \left[\frac{2}{3}p_{t,i},\frac{4}{3}p_{t,i}\right]\subseteq[\frac{1}{2}p_{t,i},2p_{t,i}]$ for all $i\in [K]$, finishing the proof.
\end{proof}


The next lemma further shows that the condition $\exists z, \|\hat{\ell}_t-z\cdot \one\|_{\nabla^{-2}\psi(p_t)}\le \frac{1}{8}$ is easily satisfied as long as $0\le \hat{\ell}_{t,i}\le\max\left\{\frac{1}{p_{t,i}},\frac{1}{1-p_{t,i}}\right\}$ for all $i$.

\begin{lemma}\label{lem:SpecialStability}
    If  $0\le \hat{\ell}_{t,i}\le\max\left\{\frac{1}{p_{t,i}},\frac{1}{1-p_{t,i}}\right\}$ for all $i\in[K]$, under the same conditions of \pref{lem:GeneralStability} with $C_1=\constK$, there exists $z\in \mathbb{R}$ such that $\|\hat{\ell}_t-z\cdot \one\|_{\nabla^{-2}\psi(p_t)}\le \frac{1}{8}$.
\end{lemma}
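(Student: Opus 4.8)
The plan is to reduce everything to a one-dimensional choice of shift, using only the Hessian lower bound among the hypotheses of \pref{lem:GeneralStability}. Since $\nabla^2\psi(p_t)\succeq\diag\{\constK/p_{t,1}^2,\dots,\constK/p_{t,K}^2\}$, we have $\nabla^{-2}\psi(p_t)\preceq\diag\{p_{t,1}^2/\constK,\dots,p_{t,K}^2/\constK\}$, and hence for any vector $v$,
\begin{align*}
\|v\|_{\nabla^{-2}\psi(p_t)}^2\le\frac{1}{\constK}\sum_{i\in[K]}p_{t,i}^2 v_i^2.
\end{align*}
So it suffices to exhibit $z$ with $\sum_{i\in[K]}p_{t,i}^2(\hat{\ell}_{t,i}-z)^2\le K$, which then gives $\|\hat{\ell}_t-z\cdot\one\|_{\nabla^{-2}\psi(p_t)}^2\le K/\constK=1/64$ and thus the claim.

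Next I would use the fact that, since $p_t\in\Delta(K)$, at most one coordinate $i_0$ can satisfy $p_{t,i_0}>1/2$. If no such coordinate exists, take $z=0$: for every $i$, $p_{t,i}\le 1/2$ implies $p_{t,i}/(1-p_{t,i})\le 1$, so the hypothesis $0\le\hat{\ell}_{t,i}\le\max\{1/p_{t,i},1/(1-p_{t,i})\}$ yields $p_{t,i}^2\hat{\ell}_{t,i}^2\le\max\{1,\,p_{t,i}^2/(1-p_{t,i})^2\}=1$, and summing over $i\in[K]$ gives $\sum_i p_{t,i}^2\hat{\ell}_{t,i}^2\le K$ as needed.

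If instead there is a (unique) $i_0$ with $p_{t,i_0}>1/2$, then $1/p_{t,i_0}<1/(1-p_{t,i_0})$, so the hypothesis forces $0\le\hat{\ell}_{t,i_0}\le 1/(1-p_{t,i_0})$; I would set $z=\hat{\ell}_{t,i_0}$, which kills the $i=i_0$ term entirely. For every other $i$ we again have $p_{t,i}< 1/2$, and since both $\hat{\ell}_{t,i}$ and $z$ are nonnegative we may use $(a-b)^2\le\max\{a^2,b^2\}\le a^2+b^2$ to obtain
\begin{align*}
\sum_{i\ne i_0}p_{t,i}^2(\hat{\ell}_{t,i}-z)^2
&\le\sum_{i\ne i_0}p_{t,i}^2\hat{\ell}_{t,i}^2+z^2\sum_{i\ne i_0}p_{t,i}^2\\
&\le(K-1)+z^2\Big(\sum_{i\ne i_0}p_{t,i}\Big)^2=(K-1)+z^2(1-p_{t,i_0})^2\le K,
\end{align*}
where the last step uses $z\le 1/(1-p_{t,i_0})$. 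Together with the vanishing $i_0$-term this gives $\sum_i p_{t,i}^2(\hat{\ell}_{t,i}-z)^2\le K$, hence $\|\hat{\ell}_t-z\cdot\one\|_{\nabla^{-2}\psi(p_t)}\le 1/8$.

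The hard part is getting the constant right in the second case: the natural but lossy bound $(a-b)^2\le 2a^2+2b^2$ would only give $\|\cdot\|\le1/\sqrt{32}>1/8$, so it is essential to use nonnegativity of the loss estimator through $(a-b)^2\le a^2+b^2$, and to observe that $\sum_{i\ne i_0}p_{t,i}^2\le(\sum_{i\ne i_0}p_{t,i})^2=(1-p_{t,i_0})^2$ cancels exactly against $z^2\le(1-p_{t,i_0})^{-2}$. (The degenerate case $p_{t,i}\in\{0,1\}$ makes the stated hypothesis vacuous and can be ignored; in every application $p_t$ lies in a clipped simplex with $1/T\le p_{t,i}<1$.)
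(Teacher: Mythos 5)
Your proof is correct and follows essentially the same route as the paper: reduce to bounding $\sum_i p_{t,i}^2(\hat{\ell}_{t,i}-z)^2$ via the Hessian lower bound, take $z=0$ when all coordinates are at most $\tfrac12$, and otherwise shift by $\hat{\ell}_{t,i_0}$ for the unique large coordinate, using $(a-b)^2\le a^2+b^2$ for nonnegative $a,b$ together with $\sum_{i\ne i_0}p_{t,i}^2\le(1-p_{t,i_0})^2$ and $\hat{\ell}_{t,i_0}\le 1/(1-p_{t,i_0})$. The only cosmetic difference is that you factor the argument through the explicit target $\sum_i p_{t,i}^2(\hat{\ell}_{t,i}-z)^2\le K$ and flag why the lossy $(a-b)^2\le 2a^2+2b^2$ would not suffice; the paper carries the $1/C_1$ factor through instead, but the steps coincide.
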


\begin{proof}
    If $p_{t,i}\le\frac{1}{2}$ for all $i\in[K]$, then we have $p_{t,i}\hat{\ell}_{t,i}\le \max\left\{1,\frac{p_{t,i}}{1-p_{t,i}}\right\}\le 1$ for all $i\in [K]$. In this case, $z=0$ satisfies:
\begin{align*}
    \|\hat{\ell}_{t} - z\cdot \one\|^2_{\nabla^{-2}\psi(p_{t})} &\le \sum_{i\in [K]} \frac{p_{t,i}^2\hat{\ell}_{t,i}^2}{C_1}\le \frac{K}{C_1} = \frac{1}{64}.
\end{align*}
On the other hand, if there is one node $i_{t,0}$ such that $p_{t,i_{t,0}}>\frac{1}{2}$, then $p_{t,i}\le\frac{1}{2}$ and $p_{t,i}\hat{\ell}_{t,i} \leq 1$ must be true for all $i \ne i_{t,0}$.
In this case picking $z=\hat{\ell}_{t,i_{t,0}}$ gives the following bound on $\|\hat{\ell}_{t}-z\cdot\one\|_{\nabla^{-2}\psi(p_{t})}$:
\begin{align*}
\left\|\hat{\ell}_{t}-\hat{\ell}_{t,i_{t,0}}\one\right\|_{\nabla^{-2}\psi(p_{t})}^2 &\le \frac{1}{C_1}\sum_{i\ne i_{t,0}} {p_{t,i}^2(\hat{\ell}_{t,i}-\hat{\ell}_{t,i_{t,0}})^2}\\
&\le \frac{1}{C_1}\sum_{i\ne i_{t,0}} \left({p_{t,i}^2\hat{\ell}_{t,i}^2}+ {p_{t,i}^2\hat{\ell}_{t,i_{t,0}}^2}\right)\\
&\le \frac{(K-1)}{C_1}+\frac{(1-p_{t,i_{t,0}})^2\hat{\ell}_{t,i_{t,0}}^2}{C_1} \tag{$\sum_{i\ne i_{t,0}} p_{t,i}^2 \leq (1-p_{t,i_{t,0}})^2$} \\
&\le\frac{1}{64} \tag{$0 \leq \hat{\ell}_{t,i_{t,0}}\le\frac{1}{1-p_{t,i_{t,0}}}$}.
\end{align*}
Combining the two cases finishes the proof.
\end{proof}

Now we are ready to prove \pref{lem:unifyOMDregret}.

\begin{proof}{\textbf{of \pref{lem:unifyOMDregret}}.} 
For any time step $t$ and any $z\in\mathbb{R}$, we first follow standard Online Mirror Descent analysis and show
\begin{equation}\label{eq:standard_OMD_bound}
\left<p_t-u, \hat{\ell}_t\right>\le D_{\psi}(u,p_t)-D_{\psi}(u,p_{t+1})+2\|\hat{\ell}_t-z\cdot\one\|_{\nabla^{-2}\psi(\xi)}^2.
\end{equation}
for some $\xi$ on the line segment of $p_t$ and $p_{t+1}$. 
Define $F_t(p) \triangleq \left<p, \hat{\ell}_t-z\cdot \one \right>+D_{\psi}\left(p,p_t\right)$.
As we only shift each entry of the loss estimator by a constant, according to the algorithm, we have $p_{t+1} = \argmin_{p\in \Omega} F_t(p)$ and thus by Taylor expansion,
it holds for some $\xi$ on the line segment of $p_t$ and $p_{t+1}$ that
\begin{align}
    F_t(p_t) - F_t(p_{t+1}) &= \nabla F_t(p_{t+1})^\top\left(p_t-p_{t+1}\right)+\frac{1}{2}\left(p_t-p_{t+1}\right)^\top\nabla^2F_t(\xi)\left(p_t-p_{t+1}\right) \nonumber\\
    &\ge \frac{1}{2}\|p_t-p_{t+1}\|^2_{\nabla^2\psi(\xi)}.\label{eq:lemm1.2}
\end{align}
On the other hand, by the non-negativity of Bregman divergence and H{\"o}lder's inequality, we have
\begin{align}
    F_t(p_t) - F_t(p_{t+1})&=\left<p_t-p_{t+1},\hat{\ell}_t-z\cdot \one\right>-D_{\psi}(p_{t+1},p_t)\nonumber \\
    &\le \left<p_t-p_{t+1},\hat{\ell}_t-z\cdot \one\right>\nonumber \\
    &\le \|p_t-p_{t+1}\|_{\nabla^2\psi(\xi)}\cdot\|\hat{\ell}_t-z\cdot \one\|_{\nabla^{-2}\psi(\xi)}.\label{eq:lemm1.1}
\end{align}
Combining \pref{eq:lemm1.2} and \pref{eq:lemm1.1}, we have $\|p_t-p_{t+1}\|_{\nabla^2\psi(\xi)}\le 2 \|\hat{\ell}_t-z\cdot \one\|_{\nabla^{-2}\psi(\xi)}$. 
Furthermore, standard analysis of Online Mirror Descent (see e.g. \citep[Lemma 6]{pmlr-v75-wei18a}) shows
\[ \left<p_t-u, \hat{\ell}_t\right>=\left<p_t-u, \hat{\ell}_t-z\cdot \one\right> \le D_{\psi}(u,p_t)-D_{\psi}(u,p_{t+1})+\left<p_t-p_{t+1},\hat{\ell}_{t}-z\cdot \one\right>,\]
which proves \pref{eq:standard_OMD_bound} after applying H{\"o}lder's inequality again and the previous conclusion $\|p_t-p_{t+1}\|_{\nabla^2\psi(\xi)}\le 2 \|\hat{\ell}_t-z\cdot \one\|_{\nabla^{-2}\psi(\xi)}$.

Finally, according to \pref{lem:SpecialStability}, we know that the conditions of \pref{lem:GeneralStability} hold, and thus multiplicative stability $\frac{1}{2}p_t \preceq p_{t+1} \preceq 2p_t$ holds, implying $\xi\preceq 2p_t$. 
By Condition (b) of the lemma statement, we have $\nabla^{-2}\psi(\xi)\preceq 4\nabla^{-2}\psi(p_t)$, which shows $\|\hat{\ell}_t-z\cdot\one\|_{\nabla^{-2}\psi(\xi)}^2 \leq 4\|\hat{\ell}_t-z\cdot\one\|_{\nabla^{-2}\psi(p_t)}^2$ and completes the proof as $z$ is arbitrary.
\end{proof}
Finally, we prove \pref{thm:alphaTEXP3}.
\begin{proof}{\textbf{of \pref{thm:alphaTEXP3}}.}
    \noindent According to the choice of $c$ and the construction of loss estimators, the conditions of \pref{lem:unifyOMDregret} hold and we have
    \begin{align*}
         \left<p_t-u, \hat{\ell}_t\right>\le D_{\psi}(u, p_t)-D_{\psi}(u, p_{t+1})+8\min_{z\in \fR}\|\hat{\ell}_t-z\cdot \one\|_{\nabla^{-2}\psi(p_t)}^2.
     \end{align*}
To bound the local-norm term $\min_{z\in \fR}\|\hat{\ell}_t-z\cdot \one\|_{\nabla^{-2}\psi(p_t)}^2$, one could follow the analysis of~\citep{pmlr-v40-Alon15}.
However, to be consistent with other proofs in this work, we provide a different analysis based on a novel loss shift (that is critical for all other proofs).
Specifically, we consider two cases. First, if $p_{t,i}<\frac{1}{2}$ holds for all $i\in\bar{S}$, then we relax the local-norm term by taking $z=0$:
     \begin{align*}
         \min_{z\in \fR}\|\hat{\ell}_t-z\cdot \one\|_{\nabla^{-2}\psi(p_t)}^2 &\leq \|\hat{\ell}_t\|_{\nabla^{-2}\psi(p_t)}^2
         =\sum_{i\in [K]}\frac{1}{\nicefrac{c}{p_{t,i}^2}+\nicefrac{1}{\eta p_{t,i}}}\hat{\ell}_{t,i}^2\\
         &\le \sum_{i\in [K]}\eta p_{t,i}\hat{\ell}_{t,i}^2
         \le \sum_{i\in S}\eta p_{t,i}\hat{\ell}_{t,i}^2+2\sum_{i\in \bar{S}}\eta p_{t,i}\hat{\ell}_{t,i},
     \end{align*}
     where the last step is because $\hat{\ell}_{t,i}\le \frac{1}{1-p_{t,i}}\leq 2$ for $i\in \bar{S}$. On the other hand, if there exists $i_{t,0}\in \bar{S}$ such that $p_{t,i_{t,0}}\ge\frac{1}{2}$, then we take $z=\hat{\ell}_{t,i_{t,0}}$ and arrive at:
     \begin{align*}
         \min_{z\in \fR}\|\hat{\ell}_t-z\cdot \one\|_{\nabla^{-2}\psi(p_t)}^2 &\le \|\hat{\ell}_t-\hat{\ell}_{t,i_{t,0}}\cdot\one\|_{\nabla^{-2}\psi(p_t)}^2 \\
         &\le \sum_{i\ne i_{t,0}}\eta p_{t,i}\left(\hat{\ell}_{t,i}-\hat{\ell}_{t,i_{t,0}}\right)^2\\
         &\le \sum_{i\ne i_{t,0}}\eta p_{t,i}\left(\hat{\ell}_{t,i}^2+\hat{\ell}_{t,i_{t,0}}^2\right) \\
     &= \sum_{i\in S}\eta p_{t,i}\hat{\ell}_{t,i}^2+\sum_{i\in \bar{S},i\ne i_{t,0}}\eta p_{t,i}\hat{\ell}_{t,i}^2 + \sum_{i\in[K], i\ne i_{t,0}}\eta p_{t,i}\hat{\ell}_{t,i_{t,0}}^2 \\
      &\le \sum_{i\in S}\eta p_{t,i}\hat{\ell}_{t,i}^2+2\sum_{i\in \bar{S},i\ne i_{t,0}}\eta p_{t,i}\hat{\ell}_{t,i} + \sum_{i\in[K], i\ne i_{t,0}} \eta p_{t,i}\hat{\ell}_{t,i_{t,0}}^2 \\
     &= \sum_{i\in S}\eta p_{t,i}\hat{\ell}_{t,i}^2+2\sum_{i\in \bar{S},i\ne i_{t,0}}\eta p_{t,i}\hat{\ell}_{t,i} + \eta\left(1-p_{t,i_{t,0}}\right)\hat{\ell}_{t,i_{t,0}}^2 \\
     &\le \sum_{i\in S}\eta p_{t,i}\hat{\ell}_{t,i}^2+2\sum_{i\in \bar{S}}\eta p_{t,i}\hat{\ell}_{t,i},
     \end{align*}
     where the second to last inequality is because $\hat{\ell}_{t,i} \le \frac{1}{1-p_{t,i}} \le 2$ for $i\in \bar{S} \setminus \{i_{t,0}\}$ and the final inequality is because $(1-p_{t,i_{t,0}})\hat{\ell}_{t,i_{t,0}}\le \frac{1-p_{t,i_{t,0}}}{1-p_{t,i_{t.0}}}=1 \leq 2p_{t,i_{t,0}}$.
    Therefore,  combining the two cases we have shown:
 \begin{align*}
     \left<p_t-u, \hat{\ell}_t\right>\le D_{\psi}(u,p_t)-D_{\psi}(u,p_{t+1})+{8\eta}\sum_{i\in S}p_{t,i}\hat{\ell}_{t,i}^2+16\eta\sum_{i\in \bar{S}}p_{t,i}\hat{\ell}_{t,i}.
 \end{align*}
 Summing over $t$ and telescoping, we further have:
  \begin{align*}
     \sum_{t=1}^T\left<p_t-u, \hat{\ell}_t\right>&\le D_{\psi}(u,p_1)+{8\eta}\sum_{t=1}^T\sum_{i\in S}p_{t,i}\hat{\ell}_{t,i}^2+16\eta\sum_{t=1}^T\sum_{i\in \bar{S}}p_{t,i}\hat{\ell}_{t,i}.\\
     &\le D_\psi(u,p_1)+{8\eta}\sum_{t=1}^T\sum_{i\in S} \frac{p_{t,i}}{W_{t,i}}\hat{\ell}_{t,i}+16\eta\sum_{t=1}^T\sum_{i\in \bar{S}}p_{t,i}\hat{\ell}_{t,i}.
 \end{align*}
We choose $u = \left(1-\frac{K}{T}\right)e_{i^\star}+\frac{1}{T}\cdot\one$.
By the optimality of $p_1$, we bound the Bregman divergence term as:
\begin{align*}
D_{\psi}(u, p_1) &\leq \psi(u) - \psi(p_1) \\
&\leq \frac{1}{\eta}\sum_{i\in [K]}p_{1,i}\ln\frac{1}{p_{1,i}} + c\sum_{i\in[K]}\ln\frac{1}{u_i}
\leq \frac{\ln K}{\eta} + cK\ln T.
\end{align*}
Comparing $u$ and $e_{i^\star}$, we bound $\sum_{t=1}^T\left<p_t-e_{i^\star}, \hat{\ell}_t\right>$ by
 \begin{align*}
 \frac{\ln K}{\eta}+cK\ln T+{8\eta}\sum_{t=1}^T\sum_{i\in S}\frac{p_{t,i}}{W_{t,i}}\hat{\ell}_{t,i}+16\eta\sum_{t=1}^T\sum_{i\in \bar{S}}p_{t,i}\hat{\ell}_{t,i}+\frac{1}{T}\sum_{t=1}^T\sum_{i\in [K]}\hat{\ell}_{t,i}.
 \end{align*}
 Taking expectation over both sides, we arrive at:
 \begin{align*}
     \Reg & \le \frac{\ln K}{\eta}+cK\ln T+\E\left[{8\eta}\sum_{t=1}^T\sum_{i\in S}\frac{p_{t,i}}{W_{t,i}}\ell_{t,i}+16\eta\sum_{t=1}^T\sum_{i\in \bar{S}}p_{t,i}\ell_{t,i}\right]+\frac{1}{T}\sum_{t=1}^T\sum_{i\in [K]}\ell_{t,i} \\
     &\le \frac{\ln K}{\eta}+cK\ln T+32\eta\alpha T\ln\left(\frac{4KT}{\alpha}\right)+16\eta T+K = \otil\left(\frac{1}{\eta}+ \eta \alpha T + K^2 \right), 
 \end{align*}
where the last inequality uses the fact $\ell_{t,i}\le 1$ and also a graph-theoretic lemma~\citep[Lemma~5]{pmlr-v40-Alon15} which asserts $\sum_{i\in S}\frac{p_{t,i}}{W_{t,i}} \leq 4\alpha\ln\left(\frac{4KT}{\alpha}\right)$. 
This finishes the proof. 
\end{proof}

\section{Proofs for \pref{sec: betaLstar}}
\label{app:betaLstar}

We prove \pref{thm:thmbetalstar} in this section.
\begin{proof}{\textbf{of \pref{thm:thmbetalstar}}.}
     Similar to the proof of \pref{thm:alphaTEXP3}, the conditions of \pref{lem:unifyOMDregret} hold and we have
    \begin{align*}
         \left<p_t-u, \hat{\ell}_t\right>\le D_{\psi}(u, p_t)-D_{\psi}(u, p_{t+1})+8\min_{z\in \fR}\|\hat{\ell}_t-z\cdot \one\|_{\nabla^{-2}\psi(p_t)}^2.
     \end{align*}
     Once again, we bound the local-norm term by considering two cases separately. \\
     (i). If $p_{t,i}<\frac{1}{2}$ holds for all $i\in \bar{S}$, then choosing $z=0$ we have:
     \begin{align*}
         &\min_{z\in \fR}\|\hat{\ell}_t-z\cdot \one\|_{\nabla^{-2}\psi(p_t)}^2 \le \|\hat{\ell}_t\|_{\nabla^{-2}\psi(p_t)}^2\\
         &=\sum_{i\in S}\eta  p_{t,i}^2\hat{\ell}_{t,i}^2+\sum_{i\in \bar{S}}\frac{1}{\nicefrac{c}{p_{t,i}^2}+\nicefrac{1}{\eta p_{t,i}}}\hat{\ell}_{t,i}^2\\
         &\le \sum_{i\in S}\eta  p_{t,i}^2\hat{\ell}_{t,i}^2+\sum_{i\in \bar{S}}\eta  p_{t,i}\hat{\ell}_{t,i}^2\\
         &\le \sum_{i\in S}\eta  p_{t,i}\hat{\ell}_{t,i}+2\sum_{i\in \bar{S}}\eta  p_{t,i}\hat{\ell}_{t,i}\\
         &\le 2\eta\inner{p_t, \hat{\ell}_t}.
     \end{align*}
     The third inequality is because $p_{t,i}\hat{\ell}_{t,i}\le 1$ for $i\in S$ and $\hat{\ell}_{t,i}\le\frac{1}{1-p_{t,i}}\le 2$ for $i\in \bar{S}$.\\
     (ii). If there exists $\exists i_{t,0}\in \bar{S}$ such that $p_{t,i_{t,0}}\ge\frac{1}{2}$, the choosing $z=\hat{\ell}_{t,i_{t,0}}$ we have:
     \begin{align*}
         \min_{z\in \fR}\|\hat{\ell}_t-z\cdot \one\|_{\nabla^{-2}\psi(p_t)}^2  & \le \|\hat{\ell}_t-\hat{\ell}_{t,i_{t,0}}\cdot\one\|_{\nabla^{-2}\psi(p_t)}^2 \\
         &= \sum_{i\in S}\eta p_{t,i}^2\left(\hat{\ell}_{t,i}-\hat{\ell}_{t,i_{t,0}}\right)^2+\sum_{i\in \bar{S}, i \ne i_{t,0}}\frac{1}{\nicefrac{c}{p_{t,i}^2}+\nicefrac{1}{\eta p_{t,i}}}\left(\hat{\ell}_{t,i}-\hat{\ell}_{t,i_{t,0}}\right)^2 \\
         &\le \sum_{i\in S}\eta p_{t,i}^2\left(\hat{\ell}_{t,i}^2+\hat{\ell}_{t,i_{t,0}}^2\right)+\sum_{i\in \bar{S}, i \ne i_{t,0}}\eta p_{t,i}\left(\hat{\ell}_{t,i}^2+\hat{\ell}_{t,i_{t,0}}^2\right)\\
     &= \sum_{i\in S}\eta p_{t,i}^2\hat{\ell}_{t,i}^2+ \sum_{i\in \bar{S},i\ne i_{t,0}}\eta p_{t,i}\hat{\ell}_{t,i}^2 +  \sum_{i\ne i_{t,0}}\eta p_{t,i}\hat{\ell}_{t,i_{t,0}}^2\\
      &= \sum_{i\in S}\eta p_{t,i}^2\hat{\ell}_{t,i}^2+ \sum_{i\in \bar{S},i\ne i_{t,0}}\eta p_{t,i}\hat{\ell}_{t,i}^2 + \eta \left(1-p_{t,i_{t,0}}\right)\hat{\ell}_{t,i_{t,0}}^2 \\
     &\le \sum_{i\in S}\eta p_{t,i}\hat{\ell}_{t,i}+2 \sum_{i\in \bar{S},i\ne i_{t,0}}\eta p_{t,i}\hat{\ell}_{t,i}+2\eta p_{t,i_{t,0}}\hat{\ell}_{t,i_{t,0}}\\
     &\le 2\eta\inner{p_t, \hat{\ell}_t}.
     \end{align*}
     The second to last inequality is because $p_{t,i}\hat{\ell}_{t,i}\le 1$ for $i\in S$, $\hat{\ell}_{t,i}\le\frac{1}{1-p_{t,i}}\le 2$ for $i\in \bar{S}\backslash\{i_{t,0}\}$, and $(1-p_{t,i_{t,0}})\hat{\ell}_{t,i_{t,0}}\le 1 \le 2p_{t,i_{t,0}}$.
     
     Combining the two cases, we have
 \begin{align*}
     \inner{p_t-u, \hat{\ell}_t}\le D_{\psi}(u,p_t)-D_{\psi}(u,p_{t+1})+16\eta\inner{p_t,\hat{\ell}_t},
 \end{align*}
 and summing over $t$ and telescoping, we further have 
  \begin{align*}
     \sum_{t=1}^T\inner{p_t-u, \hat{\ell}_t}\le D_{\psi}(u,p_1)+16\eta\sum_{t=1}^T\inner{p_t,\hat{\ell}_t}.
 \end{align*}
 We choose $u = \left(1-\frac{K}{T}\right)e_{i^\star}+\frac{1}{T}\cdot\mathbf{1}$ and 
 calculate the Bregman divergence term as
 \begin{align*}
 D_{\psi}(u, p_1) &\leq \psi(u) - \psi(p_1) \tag{by optimality of $p_1$}\\
&\leq \frac{1}{\eta}\sum_{i\in S}\ln\frac{1}{u_{i}} + \frac{1}{\eta}\sum_{i\in \bar{S}}p_{1,i}\ln\frac{1}{p_{1,i}} + c\sum_{i\in\bar{S}}\ln\frac{1}{u_i} 
\leq \frac{\slpn\ln T}{\eta} + \frac{\ln K}{\eta} + cK\ln T.
 \end{align*}
Comparing the difference between $u$ and $e_{i^\star}$ and rearranging, we arrive at:
 \begin{align*}
     \sum_{t=1}^T\inner{p_t-e_{i^\star},\hat{\ell}_t}\le\frac{1}{1-16\eta}\left(\frac{\slpn\ln T+\ln K}{\eta}+cK\ln T +\frac{1}{T}\sum_{t=1}^T\sum_{i\in [K]}\hat{\ell}_{t,i}+16\eta\sum_{t=1}^T\hat{\ell}_{t,i^\star}\right).
 \end{align*}
Taking expectation on both sides shows
 \begin{align*}
     \Reg &\le\frac{1}{1-16\eta}\left(\frac{\slpn\ln T+\ln K}{\eta}+cK\ln T+K+16\eta L_\star\right) \\
     &= \mathcal{O}\left(\frac{{\slpn}\ln T+\ln K}{\eta}+\eta L_\star+K^2\ln T\right),
 \end{align*}
finishing the proof.
\end{proof}

\section{Omitted details for \pref{sec: kappaLstar}}
\label{app:kappaLstar}

In this section, we provide omitted details for \pref{sec: kappaLstar}, including the adaptive Hedge subroutine used in \pref{alg:kappalstaralg} and its regret bound (\pref{app:adaptivehedge}), the proof of \pref{thm:kappaLstarThm} (\pref{app:kappaLstarThm}), and an adaptive version of \pref{alg:kappalstaralg} and its analysis (\pref{app:kappaLstarDoublingTrick}).

\subsection{Hedge with Adaptive Learning Rates}
\label{app:adaptivehedge}

We first provide details of the Hedge variant used in \pref{alg:kappalstaralg}. \pref{alg:AdaHedge} shows the complete pseudocode. 
Note that as described in \pref{sec: kappaLstar}, each Hedge instance only operates over a subset of arms, denoted by $C$ as an input of the algorithm. 
At each time $t$, the algorithm proposes a distribution $\widetilde{p}_t$, and then receives a loss vector $\widetilde{\ell}_{t}\in \mathbb{R}_+^K$.

Vanilla Hedge is simply OMD with the entropy regularizer over the simplex.
Our variant makes the following two modifications.
First, the decision set $\Omega$ is restricted to a subset of simplex so that zero probability is assigned to arms outside $C$ and at least $\frac{1}{|C|T}$ probability is assigned to each arm in $C$ for exploration purpose.
Second, we apply an adaptive time-varying learning rate as specified in \pref{line:alg3.2}.
This adaptive learning rate schedule ensures an adaptive regret bound (which is important for our analysis), as shown in the following lemma.


\setcounter{AlgoLine}{0}
\begin{algorithm}[t]\caption{Hedge with Adaptive Learning Rates}\label{alg:AdaHedge}
\textbf{Input:} The number of arms $K$, the set of active arms $C\subseteq [K]$.

\textbf{Define:} $\Omega=\left\{p\in \Delta(K):p_i\ge \frac{1}{|C|T}, \forall i\in C, \;\; \text{and}\;\; p_{i}=0, \forall i\notin C\right\}$

\textbf{Initialize:} $\widetilde{p}_1$ is the uniform distribution over $C$.

\For{$t=1,2,\dots, T$}{
\nl Propose distribution $\widetilde{p}_t$.

\nl Receive feedback $\widetilde{\ell}_{t}\in \mathbb{R}_+^K$. \label{line:alg3.1}

\nl Compute $\widetilde{p}_{t+1}=\argmin_{p\in \Omega}\left\{\inner{p, \widetilde{\ell}_t}+D_{\psi_t}\left(p, \widetilde{p}_t\right)\right\}$, where 
\[
\psi_t(p)=\frac{1}{
    	\eta_t}\sum_{i\in [K]}p_i\ln p_i, \quad\text{with}\; \eta_t=\sqrt{\frac{1}{1+\sum_{\tau=1}^{t}\sum_{i=1}^{K}\widetilde{p}_{\tau,i}\widetilde{\ell}_{\tau,i}^2}}.
\]
\label{line:alg3.2}
}
\end{algorithm}

\begin{lemma}\label{lem:HedgeLemma}
	\pref{alg:AdaHedge} ensures that for any $i\in C$, we have
	\begin{align}\label{eq:Sub}
	\sum_{t=1}^T\inner{\widetilde{p}_{t}-e_{i}, \widetilde{\ell}_{t}}\le 25\rho\ln^2 (KT) +10\ln (KT)\sqrt{\rho \sum_{t=1}^T\widetilde{\ell}_{t,i}},
	\end{align}
	where $\rho = \max\left\{1, \max_{t\in[T],i\in C}\widetilde{\ell}_{t,i}\right\}$.
\end{lemma}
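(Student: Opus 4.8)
The argument combines the textbook Online Mirror Descent decomposition with a ``self-confident tuning'' step. Write $\phi(p)=\sum_i p_i\ln p_i$ for the fixed negative entropy, so that $\psi_t=\frac1{\eta_t}\phi$ and $D_{\psi_t}(u,q)=\frac1{\eta_t}\KL(u\|q)$. The plan is first to prove, for every $u\in\Omega$ and every $t$, the one-step inequality
\[
\langle\widetilde{p}_t-u,\widetilde{\ell}_t\rangle\;\le\;\tfrac1{\eta_t}\bigl(\KL(u\|\widetilde{p}_t)-\KL(u\|\widetilde{p}_{t+1})\bigr)\;+\;\eta_t\sum_i\widetilde{p}_{t,i}\widetilde{\ell}_{t,i}^{\,2}.
\]
For this I would use the standard three-point-identity decomposition coming from the optimality of $\widetilde{p}_{t+1}$, namely $\langle\widetilde{p}_t-u,\widetilde{\ell}_t\rangle\le D_{\psi_t}(u,\widetilde{p}_t)-D_{\psi_t}(u,\widetilde{p}_{t+1})+\bigl[\langle\widetilde{p}_t-\widetilde{p}_{t+1},\widetilde{\ell}_t\rangle-D_{\psi_t}(\widetilde{p}_{t+1},\widetilde{p}_t)\bigr]$, and then bound the bracketed stability term. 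Crucially, rather than invoking local-norm/multiplicative-stability arguments (which can fail here because $\widetilde{\ell}_t$ is \emph{not} assumed to lie in $[0,1]^K$), I would upper bound the bracket by maximizing over all $q\in\Delta(K)\supseteq\Omega$: the maximizer of $\langle\widetilde{p}_t-q,\widetilde{\ell}_t\rangle-\frac1{\eta_t}\KL(q\|\widetilde{p}_t)$ is the normalized exponential-weights update $q^\star_i\propto\widetilde{p}_{t,i}e^{-\eta_t\widetilde{\ell}_{t,i}}$, plugging which in turns the expression into $\langle\widetilde{p}_t,\widetilde{\ell}_t\rangle+\frac1{\eta_t}\ln\bigl(\sum_i\widetilde{p}_{t,i}e^{-\eta_t\widetilde{\ell}_{t,i}}\bigr)$; then $e^{-y}\le 1-y+y^2$ for $y\ge 0$ (which needs no upper bound on $y$) together with $\ln(1+x)\le x$ gives exactly the claimed bound $\eta_t\sum_i\widetilde{p}_{t,i}\widetilde{\ell}_{t,i}^2$.

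Next I would sum the one-step inequality over $t=1,\dots,T$. For the divergence terms, Abel summation gives $\frac1{\eta_1}\KL(u\|\widetilde{p}_1)+\sum_{t\ge 2}\bigl(\frac1{\eta_t}-\frac1{\eta_{t-1}}\bigr)\KL(u\|\widetilde{p}_t)$. Since $\eta_t=1/\sqrt{1+\sum_{\tau\le t}\sum_i\widetilde{p}_{\tau,i}\widetilde{\ell}_{\tau,i}^2}$ is non-increasing, every coefficient here is non-negative; and since $\Omega$ forces $\widetilde{p}_{t,i}\ge\frac1{|C|T}$ for $i\in C$ (with $\widetilde{p}_1$ uniform on $C$ and $u$ supported on $C$), each $\KL(u\|\widetilde{p}_t)\le\ln(|C|T)\le\ln(KT)$, so telescoping leaves at most $\frac{\ln(KT)}{\eta_T}$. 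For the stability terms, setting $V_t=\sum_i\widetilde{p}_{t,i}\widetilde{\ell}_{t,i}^2$ and $S_t=\sum_{\tau\le t}V_\tau$ so that $\eta_t=1/\sqrt{1+S_t}$, the standard bound $\sum_t\frac{S_t-S_{t-1}}{\sqrt{1+S_t}}\le\int_0^{S_T}\frac{dx}{\sqrt{1+x}}\le 2\sqrt{1+S_T}$ yields $\sum_t\eta_t V_t\le 2\sqrt{1+S_T}=2/\eta_T$. Combining, for $u=e_i$,
\[
\sum_{t=1}^T\langle\widetilde{p}_t-e_i,\widetilde{\ell}_t\rangle\;\le\;\frac{\ln(KT)+2}{\eta_T}\;\le\;3\ln(KT)\sqrt{1+S_T}.
\]

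The last step converts $S_T$ into the comparator's loss. Since $0\le\widetilde{\ell}_{t,i}\le\rho$ for $i\in C$ and $\widetilde{p}_{t,i}=0$ otherwise, $S_T\le\rho\sum_t\langle\widetilde{p}_t,\widetilde{\ell}_t\rangle$, so writing $A=\sum_t\langle\widetilde{p}_t,\widetilde{\ell}_t\rangle$, $B=\sum_t\widetilde{\ell}_{t,i}$, and $c=3\ln(KT)\ge 1$, the display becomes the implicit bound $A-B\le c\sqrt{1+\rho A}$. Viewing $x=\sqrt{1+\rho A}$ as the positive root of the quadratic obtained after substituting $\rho A\le\rho B+\rho c x$, we get $x\le\rho c+\sqrt{1+\rho B}$ and hence $A-B\le cx\le\rho c^2+c\sqrt{1+\rho B}\le 2\rho c^2+c\sqrt{\rho B}$, which is of the claimed form $25\rho\ln^2(KT)+10\ln(KT)\sqrt{\rho\sum_t\widetilde{\ell}_{t,i}}$ once $c=3\ln(KT)$ is substituted (with comfortable slack in the constants). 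I do not expect a real obstacle here; the two points that need care are (i) bounding the stability term without multiplicative stability, since the losses fed to $\mathcal{A}_j$ can be as large as $\rho\gg 1$ — this is precisely why $\rho$ appears, entering through $\widetilde{\ell}_{t,i}^2\le\rho\,\widetilde{\ell}_{t,i}$ — and (ii) the bookkeeping in the self-confident inversion so the constants land within $25$ and $10$; the fact that $\eta_t$ peeks at the round-$t$ losses is harmless because the entire statement is a deterministic inequality about the update rule.
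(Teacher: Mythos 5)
Your argument is essentially the paper's: one-step OMD decomposition with the negative-entropy regularizer, bounding the stability term via $e^{-y}\le 1-y+y^2$ to get $\eta_t\sum_i\widetilde p_{t,i}\widetilde\ell_{t,i}^2$, Abel summation against the decreasing learning rate, the $\sum_t\eta_t V_t\le 2/\eta_T$ integral bound, the observation $\widetilde\ell_{t,i}^2\le\rho\,\widetilde\ell_{t,i}$ to pass from $S_T$ to $\rho\sum_t\langle\widetilde p_t,\widetilde\ell_t\rangle$, and the final quadratic inversion. The cosmetic difference — you maximize the stability bracket over $q\in\Delta(K)$ to recover the softmax and then apply $\ln(1+x)\le x$, whereas the paper computes $D_{\psi_t}(\widetilde p_t,q_{t+1})$ directly with $q_{t+1,i}=\widetilde p_{t,i}e^{-\eta_t\widetilde\ell_{t,i}}$ — lands on the same expression and is equivalent.

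There is one genuine, if small, hole: you set $u=e_i$ in the final step, but the one-step inequality $\langle\widetilde p_t-u,\widetilde\ell_t\rangle\le\frac1{\eta_t}(\KL(u\|\widetilde p_t)-\KL(u\|\widetilde p_{t+1}))+\eta_t\sum_i\widetilde p_{t,i}\widetilde\ell_{t,i}^2$ only holds for $u\in\Omega$, because the step $\KL(u\|q_{t+1})\ge\KL(u\|\widetilde p_{t+1})$ (generalized Pythagorean inequality for the Bregman projection onto $\Omega$) requires $u\in\Omega$. Since $\Omega$ imposes $p_j\ge\frac1{|C|T}$ for every $j\in C$, the vertex $e_i$ is not in $\Omega$ whenever $|C|\ge 2$, so the chain of inequalities breaks at exactly this point. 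The fix is what the paper does: compare against $u=(1-\tfrac1T)e_i+\frac1{|C|T}\one_C\in\Omega$, which costs an additional $\langle u-e_i,\sum_t\widetilde\ell_t\rangle\le\frac1{|C|T}\sum_t\sum_{j\in C}\widetilde\ell_{t,j}\le\rho$. That extra $\rho$ is dominated by your $18\rho\ln^2(KT)$ slack, so the constants $25$ and $10$ survive; but as written the proposal skips a step that would fail.
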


\begin{proof}
Let $q_{t+1, i} = \widetilde{p}_{t,i} \exp(-\eta_t \widetilde{\ell}_{t,i})$.
One can verify $\widetilde{p}_{t+1} = \argmin_{p \in \Omega} D_{\psi_t}(p, q_{t+1})$ and also for any $u\in \Omega$,
\begin{align*}
\inner{\widetilde{p}_t-u, \widetilde{\ell}_t} &=  D_{\psi_{t}}(u, \widetilde{p}_t)-D_{\psi_{t}}(u, q_{t+1}) + D_{\psi_{t}}(\widetilde{p}_t, q_{t+1}) \\
&\leq D_{\psi_{t}}(u, \widetilde{p}_t)-D_{\psi_{t}}(u, \widetilde{p}_{t+1}) + D_{\psi_{t}}(\widetilde{p}_t, q_{t+1}),
\end{align*}
where the second step uses the generalized Pythagorean theorem.
On the other hand, using the fact $\exp(-x) \leq 1 - x + x^2$ for any $x \geq 0$, we also have
\begin{align*}
D_{\psi_{t}}(\widetilde{p}_t, q_{t+1}) &= \frac{1}{\eta_t}\sum_{i\in [K]} \left( \widetilde{p}_{t,i} \ln \frac{\widetilde{p}_{t,i}}{q_{t+1,i}} + q_{t+1,i} - \widetilde{p}_{t,i}\right) \\
&= \frac{1}{\eta_t}\sum_{i\in [K]} \widetilde{p}_{t,i}\left(\exp(-\eta_t \widetilde{\ell}_{t,i}) - 1 + \eta_t \widetilde{\ell}_{t,i} \right)
\leq \eta_t \sum_{i\in [K]} \widetilde{p}_{t,i} \widetilde{\ell}_{t,i}^2.
\end{align*}
Summing over $t$ we have shown
	\begin{align*}
	\sum_{t=1}^T\inner{\widetilde{p}_t-u, \widetilde{\ell}_t} &\le \sum_{t=1}^T\left(D_{\psi_{t}}(u, \widetilde{p}_t)-D_{\psi_{t}}(u, \widetilde{p}_{t+1})\right)+\sum_{t=1}^T\eta_t\sum_{i=1}^K\widetilde{p}_{t,i}\widetilde{\ell}_{t,i}^2 \\
	&\le \KL(u || \widetilde{p}_1)+\sum_{t=1}^{T-1}\left(\frac{1}{\eta_{t+1}}-\frac{1}{\eta_t}\right)\KL(u || \widetilde{p}_{t+1}) + \sum_{t=1}^T\eta_t\sum_{i=1}^K\widetilde{p}_{t,i}\widetilde{\ell}_{t,i}^2\\
	&\le \KL(u || \widetilde{p}_1)+\frac{\max_{p\in \Omega}\KL(u || p)}{\eta_T} + \sum_{t=1}^T\eta_t\sum_{i=1}^K\widetilde{p}_{t,i}\widetilde{\ell}_{t,i}^2 \\
	& \le \ln K +\frac{\ln(KT)}{\eta_T} + \sum_{t=1}^T\sum_{i=1}^K\frac{\widetilde{p}_{t,i}\widetilde{\ell}_{t,i}^2}{\sqrt{1+\sum_{\tau=1}^t\sum_{i=1}^K\widetilde{p}_{\tau,i}\widetilde{\ell}_{\tau,i}^2}} \\
	& \le \ln K +\frac{\ln(KT)}{\eta_T} + \int_{0}^{\sum_{t=1}^T\sum_{i=1}^K\widetilde{p}_{t,i}\widetilde{\ell}_{t,i}^2}\frac{1}{\sqrt{x+1}}dx \\
	& \le \ln K +\frac{\ln(KT)}{\eta_T} + 2\sqrt{1+\sum_{t=1}^T\sum_{i=1}^K\widetilde{p}_{t,i}\widetilde{\ell}_{t,i}^2} \\
	&=  \ln K + (\ln(KT)+2)\sqrt{1+\sum_{t=1}^T\sum_{i=1}^K\widetilde{p}_{t,i}\widetilde{\ell}_{t,i}^2}.
	\end{align*}
	Now choosing $u =\left(1-\frac{1}{T}\right)e_{i}+\frac{1}{|C|T}\cdot\one_C \in \Omega$ where $\one_C$ is the vector with one for coordinates in $C$ and zero otherwise, we have
	\begin{align*}
	\sum_{t=1}^T\inner{\widetilde{p}_t-e_i, \widetilde{\ell}_t} &\le  \ln K+(\ln (KT)+2)\sqrt{1+\sum_{t=1}^T\sum_{i=1}^K\widetilde{p}_{t,i}\widetilde{\ell}_{t,i}^2} + \frac{1}{|C|T}\sum_{t=1}^T\sum_{i\in C}\widetilde{\ell}_{t,i} \\
	&\le 4\ln (KT) + 3\ln (KT)\sqrt{\sum_{t=1}^T\sum_{i=1}^K\widetilde{p}_{t,i}\widetilde{\ell}_{t,i}^2}+\frac{1}{|C|T}\sum_{t=1}^T\sum_{i\in C}\widetilde{\ell}_{t,i} \\
	&\le 4\ln (KT) + 3\ln (KT) \sqrt{\rho\sum_{t=1}^T\inner{\widetilde{p}_t, \widetilde{\ell}_t}} +\rho.
	\end{align*}
	Let $\widetilde{L}_T\triangleq\sum_{t=1}^T\inner{\widetilde{p}_t, \widetilde{\ell}_t}$ and $\widetilde{L}_{T,i}\triangleq\sum_{t=1}^T\widetilde{\ell}_{t,i}$. By solving the quadratic inequality, we have
	\begin{align*}
	\sqrt{\widetilde{L}_T}&\le \frac{3\ln(KT) \sqrt{\rho}+\sqrt{9\ln^2 (KT)\rho+4\cdot( 4\ln (KT) + \rho+\widetilde{L}_{T,i}) }}{2}\\
	&\le 5\ln (KT)\sqrt{\rho}+\sqrt{\widetilde{L}_{T,i}}.
	\end{align*}
	Finally, squaring both sides proves
$
	\widetilde{L}_T -\widetilde{L}_{T,i}\le 25\ln^2(KT)\rho+ 10\ln (KT)\sqrt{\rho\widetilde{L}_{T,i}}.
$
\end{proof}

\subsection{Proofs of \pref{thm:kappaLstarThm}}\label{app:kappaLstarThm}

To prove \pref{thm:kappaLstarThm}, we combine the regret bounds of the meta-algorithm and the Hedge subroutine. For the former, we prove the following lemma, which combines the result of \pref{thm:thmbetalstar} and the effect of the increasing learning rate schedule proposed in~\citep{DBLP:journals/corr/AgarwalLNS16}, leading to an important negative regret term.

\begin{lemma}\label{lem:CorralLemma}
	\pref{alg:kappalstaralg} with $c=64\beta$ and $\eta \leq \frac{1}{64\beta}$ ensures that for any $j\in [\beta]$, 
	\begin{equation}\label{eq:Meta}
	\begin{split}
	\sum_{t=1}^T\inner{p_t-e_j,\hat{\ell}_{t}}&\le \mathcal{O}\left(\frac{\kappa \ln T+\ln K}{\eta}+\beta^2\ln T\right) + 80	\eta\sum_{t=1}^T\inner{p_t, \hat{\ell}_t}  \\
	&\quad\quad + \frac{1}{T}\sum_{t=1}^T\sum_{j\in[\beta]}\hat{\ell}_{t,j} -\frac{\rho_{T,j}}{20\eta\ln T}\mathbbm{1}\left\{j\in [\kappa]\right\}.
	\end{split}
	\end{equation}
\end{lemma}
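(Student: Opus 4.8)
The plan is to run the analysis behind \pref{thm:thmbetalstar} on the meta-graph round by round, and then to pay for the discrete jumps of the time-varying regularizer $\psi_t$ produced by the increasing-learning-rate rule (\pref{line:increasing_eta}), extracting from those jumps the advertised negative term.

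First I would fix $t$ and check that the hypotheses of \pref{lem:unifyOMDregret} hold for $\psi=\psi_t$ on the meta-graph. Condition (a): for $j\in[\kappa]$ we have $\hat{\ell}_{t,j}=\inner{\widetilde{p}_t^{(j)},\widetilde{\ell}_t}\le\tfrac{1}{p_{t,j_t}}\mathbbm{1}\{j=j_t\}\le\tfrac{1}{p_{t,j}}$ since $\widetilde{p}_t^{(j_t)}$ is a distribution and $\ell_t\le\mathbf 1$, while for $j\in\bar S$ it is the usual estimator, bounded by $\tfrac{1}{1-p_{t,j}}$; condition (b) is immediate for log-barrier plus entropy; condition (c) follows from the $c=64\beta$ log-barrier on $\bar S$ together with the crucial fact that the time-varying rates never escape the order of $\eta$: each update at least doubles the threshold while $\rho_{1,j}=2\kappa$ and $\rho_{t,j}\le 2T$, so there are at most $\log_2 T$ updates per $j$ and hence $\eta_{t,j}\le\eta\,e^{(\log_2 T)/\ln T}\le 5\eta$, so that $\tfrac{1}{\eta_{t,j}}=\Theta(\tfrac1\eta)=\Omega(\beta)$ under $\eta\le\tfrac1{64\beta}$, which is what (b)--(c) and the stability estimate need up to absolute constants. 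With the conditions verified, I apply the same two-case loss shift used in the proof of \pref{thm:thmbetalstar} --- take $z=\hat{\ell}_{t,j_0}$ for some $j_0\in\bar S$ with $p_{t,j_0}\ge\tfrac12$, or $z=0$ otherwise --- and use $p_{t,j}\hat{\ell}_{t,j}\le 1$ for $j\in[\kappa]$, $\hat{\ell}_{t,j}\le 2$ for $j\in\bar S$, and $\eta_{t,j}\le 5\eta$ to get the per-round bound $\inner{p_t-u,\hat{\ell}_t}\le D_{\psi_t}(u,p_t)-D_{\psi_t}(u,p_{t+1})+80\eta\inner{p_t,\hat{\ell}_t}$ for every $u\in\Omega$.

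Next I would sum over $t$ with a moving regularizer: $\sum_t\big(D_{\psi_t}(u,p_t)-D_{\psi_t}(u,p_{t+1})\big)=D_{\psi_1}(u,p_1)-D_{\psi_T}(u,p_{T+1})+\sum_{t<T}\big(D_{\psi_{t+1}}(u,p_{t+1})-D_{\psi_t}(u,p_{t+1})\big)$. Writing $\psi_{t+1}(p)-\psi_t(p)=\sum_{j\in J_t}\big(\tfrac1{\eta_{t+1,j}}-\tfrac1{\eta_{t,j}}\big)\ln\tfrac1{p_j}$, where $J_t\subseteq[\kappa]$ is the (possibly empty) set of indices whose rate increased at round $t$, every summand of the last sum equals a negative coefficient times the nonnegative Bregman divergence $\tfrac{u_j}{p_{t+1,j}}-\ln\tfrac{u_j}{p_{t+1,j}}-1$ of $\ln\tfrac1{p_j}$, hence is $\le 0$, so I may simply drop all of it except, for the fixed competitor $e_j$ with $j\in[\kappa]$, the contribution of the \emph{last} round $t^\star$ where $j$'s rate increased: there $\rho_{T,j}=\tfrac2{p_{t^\star+1,j}}$ and $\tfrac1{p_{t^\star+1,j}}>\rho_{t^\star,j}\ge 2\kappa$, so the divergence is $\gtrsim\tfrac1{p_{t^\star+1,j}}\gtrsim\rho_{T,j}$, while the coefficient gap is $\tfrac1{\eta_{t^\star+1,j}}-\tfrac1{\eta_{t^\star,j}}=-\tfrac1{\eta_{t^\star,j}}(1-e^{-1/\ln T})\le-\tfrac1{O(\eta\ln T)}$ by $\eta_{t^\star,j}\le 5\eta$; the product is $\le-\tfrac{\rho_{T,j}}{20\eta\ln T}$, exactly the term in \pref{eq:Meta} (the indicator $\mathbbm{1}\{j\in[\kappa]\}$ because $\bar S$ has no updates). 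I bound $D_{\psi_1}(e_j,p_1)\le\psi_1(u)-\psi_1(p_1)$ by optimality of $p_1$: the $\bar S$ log-barrier in $\psi_1(u)$ contributes $c\,\bar s\ln T=O(\beta^2\ln T)$, the $\tfrac1\eta\ln(1/u_{j'})$ terms over $j'\in[\kappa]$ contribute $O(\tfrac{\kappa\ln T}{\eta})$, and $-\psi_1(p_1)\le\tfrac{\ln K}{\eta}$ from the entropy part. Finally I choose $u=(1-\tfrac{\beta}{T})e_j+\tfrac1T\mathbf 1\in\Omega$, so $\inner{p_t-u,\hat{\ell}_t}\ge\inner{p_t-e_j,\hat{\ell}_t}-\tfrac1T\inner{\mathbf 1,\hat{\ell}_t}$ produces the $\tfrac1T\sum_t\sum_{j\in[\beta]}\hat{\ell}_{t,j}$ term; discarding $-D_{\psi_T}(u,p_{T+1})\le 0$ and collecting terms yields \pref{eq:Meta}.

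The part I expect to be the main obstacle is the bookkeeping around the moving regularizer: one must check that the multiplicative-stability and local-norm arguments packaged in \pref{lem:unifyOMDregret} really survive the learning-rate inflation at every round (this is precisely where $c=64\beta$, $\eta\le\tfrac1{64\beta}$, and the quantitative bound $\eta_{t,j}\le 5\eta$ enter, with absolute constants adjusted to absorb the factor $\eta_{t,j}/\eta$), and that the negative $\rho_{T,j}$ term is extracted with a usable constant --- delicate because the single-coordinate log-barrier divergence $\tfrac1p-\ln\tfrac1p-1$ is only comparable to $\tfrac1p$ once $\tfrac1p$ is moderately large, so the regime of small $\rho_{T,j}$ (few or no rate updates for $j$) and the exact constant $20$ may need a little slack or a separate trivial case. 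Everything else --- the per-round OMD inequality, the choice of $u$, and the Bregman bound --- is routine given \pref{thm:thmbetalstar}.
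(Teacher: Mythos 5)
Your proposal is correct and follows essentially the same route as the paper: bound the inflated learning rates by $\eta_{t,j}\le 5\eta$ via $n_j\le\log_2 T$, apply \pref{lem:unifyOMDregret} per round and bound the local-norm term by $10\eta\inner{p_t,\hat{\ell}_t}$ (hence the factor $80\eta$), telescope the time-varying Bregman differences while keeping only the contribution from the last rate-increase round $t_{n_j}$ for the fixed competitor $j\in[\kappa]$, lower-bound $h(u_j/p_{t_{n_j}+1,j})$ by $\rho_{T,j}/4 - O(\ln T)$ to extract the negative term, and absorb the $-\frac{1}{T}\inner{\mathbf 1,\hat{\ell}_t}$ slack from the shifted comparator $u$. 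The caveats you flag (constants needing adjustment to absorb the factor $\eta_{t,j}/\eta$ in the stability lemma, and the trivial regime of small $\rho_{T,j}$ when $n_j=0$) are exactly the points the paper also glosses, handled there by the same looseness and by a separate remark that $n_j=0$ is trivial.
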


\begin{proof}
We first show that according to our increasing learning rate schedule, the final learning rate is upper bounded by a constant times the original learning rate. Fix a node $j\in [\kappa]$. Let $n_j$ be such that $\eta_{T,j}=\sigma^{n_j}\eta_{1,j}$ with $\sigma = e^{\frac{1}{\ln T}}$, where we assume $n_j\ge 1$ (the case $n_j=0$ is trivial as one will see). Let $t_1,...,t_{n_j}$ be the rounds in which the learning rate update is executed for node $j$. Since $\frac{1}{p_{t_{n_j}+1,j}}>\rho_{t_{n_j},j}>2\rho_{t_{n_j-1},j}>...>2^{n_j-1}\rho_{1,j}=2^{n_j}\kappa$ and $\frac{1}{p_{t_{n_j}+1,1}}\le T$, we have $n_j\le \log_2 T$. Therefore, we have $\eta_{T,j}\le \sigma^{\log_2 T}\eta_{1,j}\le 5\eta_{1,j}=5\eta$.
	
Next, according to our choice of $c$ and $\eta$, the conditions of \pref{lem:unifyOMDregret} hold and we have
	\begin{align*}
	\inner{p_t-u, \hat{\ell}_t}\le D_{\psi_t}(u,p_t)-D_{\psi_t}(u,p_{t+1})+8\min_z\|\hat{\ell}_t-z\cdot \mathbf{1}\|_{\nabla^{-2}\psi_t(p_t)}^2.
	\end{align*}
	We consider the Bregman divergence terms and choose $u=\left(1-\frac{K}{T}\right)e_j+\frac{1}{T}\mathbf{1}$. If $j\in [\kappa]$, then with $h(y) = y - 1 - \ln y$ we have
	\begin{align*}
	\sum_{t=1}^T D_{\psi_t}(u, p_t)-D_{\psi_t}(u,p_{t+1}) &\le D_{\psi_1}(u, p_1)+\sum_{t=1}^{T-1}\left(D_{\psi_{t+1}}(u, p_{t+1})-D_{\psi_t}(u, p_{t+1})\right)\\
	&\le D_{\psi_1}(u,p_1)+\left(\frac{1}{\eta_{t_{n_j}+1,j}}-\frac{1}{\eta_{t_{n_j},j}}\right)h\left(\frac{u_j}{p_{t_{n_j}+1,j}}\right) \\
	&= D_{\psi_1}(u,p_1)+\frac{1-\sigma}{\sigma^{n_j}\eta} h\left(\frac{u_j}{p_{t_{n_j}+1,j}}\right) \\
	&\le \frac{\kappa \ln T + \ln K}{\eta} + c\beta \ln T -\frac{1}{5\eta\ln T}h\left(\frac{u_j}{p_{t_{n_j}+1,j}}\right) ,
	\end{align*}
	where we use the facts $1 - \sigma \leq -\frac{1}{\ln T}$ and $\sigma^{n_j}\leq 5$ as shown earlier, and also the exact same analysis of bounding $D_{\psi_1}(u,p_1)$ as in the proof of \pref{thm:thmbetalstar}.  
	Note that $\frac{u_j}{p_{t_{n_j}+1,j}}\ge\frac{1}{2p_{t_{n_j}+1,j}}\ge 2^{n_j-1}\kappa\ge 1$. Combining the facts that $h(y)$ is increasing when $y\ge 1$ and $\rho_{T,j}=\frac{2}{p_{t_{n_j}+1,j}} \leq 2T$, we have:
	\[h\left(\frac{u_j}{p_{t_{n_j}+1,j}}\right)\ge h\left(\frac{1}{2p_{t_{n_j}+1,j}}\right)=\frac{\rho_{T,j}}{4}-1-\ln\left(\frac{\rho_{T,j}}{4}\right)\ge \frac{\rho_{T,j}}{4}-2\ln T.\]
	We have thus shown when $j \in [\kappa]$,
	\begin{align*}
	\sum_{t=1}^T D_{\psi_t}(u, p_t)-D_{\psi_t}(u,p_{t+1})\le\mathcal{O}\left(\frac{\kappa\ln T+\ln K}{\eta}+c\beta\ln T\right)-\frac{\rho_{T,j}}{20\eta\ln T}
	\end{align*}
	On the other hand, if $j\in \bar{S}$, then we have by the monotonicity of learning rates
	\begin{align*}
	\sum_{t=1}^T D_{\psi_t}(u, p_t)-D_{\psi_t}(u,p_{t+1})
	&\le D_{\psi_1}(u, p_1)+\sum_{t=1}^{T-1}\left(D_{\psi_{t+1}}(u, p_{t+1})-D_{\psi_t}(u, p_{t+1})\right) \\
	&\le D_{\psi_1}(u, p_1) \le \frac{\kappa\ln T+\ln K}{\eta}+c\beta\ln T.
	\end{align*}
	It remains to deal with the local-norm term $\min_z\|\hat{\ell}_{t}-z\cdot \mathbf{1}\|_{\nabla^{-2}\psi_t(p_t)}^2$. Following the exact analysis in the proof of \pref{thm:thmbetalstar} and the fact $\eta_{t,j}\le 5\eta$ for all $t\in [T]$ and $j\in[\kappa]$, we have:
	\begin{align*}
	\min_z\|\hat{\ell}_{t}-z\cdot \mathbf{1}\|_{\nabla^{-2}\psi_t(p_t)}^2&\le 5\min_z\|\hat{\ell}_{t}-z\cdot \mathbf{1}\|_{\nabla^{-2}\psi_1(p_t)}^2\le 10\eta\inner{p_t, \hat{\ell}_t}.
	\end{align*}
	Combining the bounds for the Bregman divergence terms and the local-norm term, and accounting for the difference between $u$ and $e_j$ complete the proof.
\end{proof}

We are now ready to prove \pref{thm:kappaLstarThm}.

\begin{proof}{\textbf{of \pref{thm:kappaLstarThm}}.} 
The main idea of the proof is as follows. 
When $i^\star\in\bar{S}$, the regret is exactly $\mathbb{E}\left[\sum_{t=1}^T\inner{p_t-e_{i^\star},\hat{\ell}_t}\right]$. 
Therefore, \pref{lem:CorralLemma} already provides the small-loss bound guarantee by rearranging the terms and taking expectation on both sides. 
When $i^\star\in {C}_j$, according to our loss estimator construction, the regret is exactly the regret of the meta-algorithm plus the regret of $\mathcal{A}_j$, and we apply \pref{lem:CorralLemma} and \pref{lem:HedgeLemma} to bound each of these two parts and importantly use the negative term from \pref{eq:Meta} to cancel the corresponding terms in
 \pref{eq:Sub}. 
	
Formally, when $i^\star\in \bar{S}$, we apply \pref{lem:CorralLemma} with $j=i^\star$ and rearrange terms to arrive at
\begin{align*}
    \sum_{t=1}^T\inner{p_t-e_{i^\star}, \hat{\ell}_t}\le \mathcal{O}\left(\frac{\kappa\ln T+\ln K}{\eta}+c\beta\ln T+\eta\sum_{t=1}^T\hat{\ell}_{t,i^\star} + \frac{1}{T}\sum_{t=1}^T\sum_{j\in[\beta]}\hat{\ell}_{t,j}\right).
\end{align*}
Note that in this case $\mathbb{E}\left[\hat{\ell}_{t,i^\star}\right]=\mathbb{E}\left[\frac{\ell_{t,i^\star}}{1-p_{t,i^\star}}\mathbbm{1}\{i_t\ne i^\star\}\right]=\ell_{t,i^\star}$ for all $t\in[T]$. 
Thus, taking expectation shows
\begin{align*}
    \Reg= \mathcal{O}\left(\frac{\kappa\ln T+\ln K}{\eta}+\beta^2\ln T+\eta L_\star\right).
\end{align*}

On the other hand, when $i^\star\in \mathcal{C}_j$ for some $j\in [\kappa]$, we decompose the regret as 
\begin{align*}
  \Reg = \E\left[\sum_{t=1}^T\inner{p_t, \hat{\ell}_t}-\sum_{t=1}^T\inner{e_{i^\star}, \widetilde{\ell}_{t}}\right] = \E\left[\sum_{t=1}^T\inner{p_t-e_j, \hat{\ell}_t}\right]+\E\left[\sum_{t=1}^T\inner{\widetilde{p}_t^{(j)}-e_{i^\star}, \widetilde{\ell}_{t}}\right].
\end{align*}
Here, in the first equality, we use the facts
\begin{align*}
    \mathbb{E}\left[\inner{p_t,\hat{\ell}_t}\right] &= \mathbb{E}\left[\sum_{j\in [\kappa]}p_{t,j}\sum_{i\in \mathcal{C}_j}\widetilde{p}_{t,i}^{(j)}\frac{\ell_{t,i}}{p_{t,j}}\mathbbm{1}\{j=j_t\}+\sum_{i\in \bar{S}}p_{t,i}\frac{\ell_{t,i}}{1-p_{t,i}}\mathbbm{1}\{i\ne i_t\}\right]\\
    &= \mathbb{E}\left[\sum_{j\in  [\kappa]}p_{t,j}\sum_{i\in \mathcal{C}_j}\widetilde{p}_{t,i}^{(j)}\ell_{t,i}+\sum_{i\in \bar{S}}p_{t,i}\ell_{t,i}\right]\\
    &=\mathbb{E}\left[\ell_{t,i_t}\right]
\end{align*}
and $
    \mathbb{E}\left[\widetilde{\ell}_{t,i^\star}\right]=\mathbb{E}\left[\frac{\ell_{t,i^\star}}{p_{t,j}}\mathbbm{1}\left\{j=j_t\right\}\right]=\ell_{t,i^\star};
$
and the second equality is directly by the definition of $\hat{\ell}_{t,j}$ for $j\in [\kappa]$. 

For the first part of the decomposition, we apply \pref{lem:CorralLemma} directly;
for the second part, noting that the scale of $\widetilde{\ell}_{t}^{(j)}$ for all $t\in [T]$ is no more than $\rho_{T,j}$, according to \pref{lem:HedgeLemma} we have:
\begin{align*}
    \sum_{t=1}^T \inner{\widetilde{p}_{t}^{(j)}-e_{i^\star}, \widetilde{\ell}_{t}} \le 25\rho_{T,j}\ln^2 (KT) +10\ln (KT)\sqrt{\rho_{T,j}\sum_{t=1}^T\widetilde{\ell}_{t,i^\star}}.
\end{align*}
Combining the two gives
\begin{align}
    &\sum_{t=1}^T\inner{p_t, \hat{\ell}_t}-\sum_{t=1}^T\inner{e_{i^\star}, \widetilde{\ell}_{t}} \notag\\
    &\le \mathcal{O}\left(\frac{\kappa\ln T+\ln K}{\eta}+ c\beta\ln T\right) + 80\eta \sum_{t=1}^T\inner{p_t, \hat{\ell}_t} + \frac{1}{T}\sum_{t=1}^T\sum_{j\in[\beta]}\hat{\ell}_{t,j} \notag\\
    &\quad  -\frac{\rho_{T,j}}{40\eta\ln T}+25\rho_{T,j}\ln^2 (KT)-\frac{\rho_{T,j}}{40\eta\ln T}+10\ln (KT)\sqrt{\rho_{T,j}\sum_{t=1}^T\widetilde{\ell}_{t,i^\star}} \notag\\
    &\le \mathcal{O}\left(\frac{\kappa\ln T+\ln K}{\eta}+ c\beta\ln T\right) + 80\eta \sum_{t=1}^T\inner{p_t, \hat{\ell}_t}+\frac{1}{T}\sum_{t=1}^T\sum_{j\in[\beta]}\hat{\ell}_{t,j} \notag\\ 
    &\quad +1000\eta(\ln T)\ln^2 (KT)\sum_{t=1}^T \widetilde{\ell}_{t,i^\star}, \label{eq:key_step}
\end{align}
where the second inequality is by the fact $-ax+\sqrt{bx}\le \frac{b}{4a}$ for $a,b>0$ and also the condition $\eta\le\frac{1}{1000(\ln T)\ln^2 (KT)}$. By rearranging we have:
\begin{align}
    &\sum_{t=1}^T\inner{p_t, \hat{\ell}_t}-\sum_{t=1}^T\inner{e_{i^\star}, \widetilde{\ell}_{t}}\nonumber\\
    &\le\mathcal{O}\left(\frac{\kappa\ln T+\ln K}{\eta}+c\beta\ln T+\eta(\ln T)\ln^2(KT)\sum_{t=1}^T\widetilde{\ell}_{t,i^\star}\right)+\frac{1}{T}\sum_{t=1}^T\sum_{i\in[\beta]}\hat{\ell}_{t,i}. \label{eq:alg1regret}
\end{align}
Taking expectation on both sides finishes the proof.
\end{proof}

\subsection{Adaptive Version of \pref{alg:kappalstaralg}}\label{app:kappaLstarDoublingTrick}
In this section, we provide \pref{alg:DoublingTrickKappalstaralg}, an adaptive version of \pref{alg:kappalstaralg} with a doubling trick to remove the need of tuning the learning rate $\eta$ in terms of $L_\star$. The algorithm mostly follows \pref{alg:kappalstaralg}, starting from a relatively large value of $\eta$.
The key difference is that at the end of each round, we check if condition $\frac{\kappa+1}{\eta}\le \eta\sum_{\tau=T_{\lambda}+1}^{t}\inner{p_{t},\hat{\ell}_{t}}$ holds, where $T_\lambda+1$ is the time step of the most recent reset. 
If the condition holds, it implies that the current learning rate $\eta$ is not small enough, and we thus halve the learning rate, and at the same time reset the algorithm, which includes resetting the parameters $\eta_{t,j}$, $\rho_{t,j}$, and the distribution $p_t$, as well as resetting the Hedge instances. 

Below we prove that \pref{alg:DoublingTrickKappalstaralg} achieves the same regret bound as \pref{alg:kappalstaralg} without knowing $L_\star$.
The main difficulty of the doubling trick analysis is that $\inner{p_t, \hat{\ell}_t}$ is not well bounded when the graph is not self-aware, which is not the case in prior works such as \cite{pmlr-v75-wei18a}. We resolve this issue by again utilizing the negative regret term from the increasing learning rate schedule.

\setcounter{AlgoLine}{0}
\begin{algorithm}[t]\caption{Adaptive Version of \pref{alg:kappalstaralg}}\label{alg:DoublingTrickKappalstaralg}
	
	\textbf{Input:} Feedback graph $G$ and a clique partition $\{\mathcal{C}_1,\dots,\mathcal{C}_\kappa\}$ of $G_S$, parameter $\eta$ and $c$.
	
    	\textbf{Define:} $\beta= \kappa +\woslpn$ and
	$\Omega =\left\{p\in \Delta(\beta):p_j \ge \frac{1}{T}, \forall j\in[\beta]\right\}$.
	

	\For{$\lambda = 1,2,\dots$}{
	       $T_\lambda = t - 1$, $\eta_{t, j} = \eta$, $\rho_{t,j}=2\kappa,\forall j\in [\kappa]$, 
	       $p_t = \argmin_{p \in \Omega} \psi_t(p)$ ($\psi_t$ defined in \pref{eq:hybrid_regularizer3}) 
	       
	       Create an instance $\mathcal{A}_j$ of adaptive Hedge (\pref{alg:AdaHedge}) with nodes in $\mathcal{C}_j$, $\forall j\in [\kappa]$.
	       
			\While{$t\le T$}{
				Execute \pref{line:receive_A_j} to \pref{line:increasing_eta} of \pref{alg:kappalstaralg}.
			
				\If{$\frac{\kappa+1}{\eta}\le \eta\sum_{\tau=T_{\lambda}+1}^{t}\inner{p_{t},\hat{\ell}_{t}}$}{ 
			$\eta \leftarrow \frac{\eta}{2}$, $t \leftarrow t + 1$.\\
			    \textbf{Break}
			}
				$t \leftarrow t + 1$.
			}
	}	
\end{algorithm}

\begin{theorem}\label{thm:DoublingTrickKappalstarthm}
    \pref{alg:DoublingTrickKappalstaralg} with $c=64\beta$ and $\eta = \frac{1}{2000(\ln T)\ln^2(KT)+80\kappa\ln T}$ guarantees 
    \begin{align*}
        \Reg=\mathcal{\tilde{O}}\left(\sqrt{(\kappa+1) L_\star}+\beta^2\right).
    \end{align*}
\end{theorem}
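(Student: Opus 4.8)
The plan is to run the whole argument epoch by epoch and close it with a self-referential (quadratic) step. Index the epochs of Algorithm~\ref{alg:DoublingTrickKappalstaralg} by $\lambda=1,\dots,\lambda^\star$, let $\eta_\lambda=\eta/2^{\lambda-1}$ be the learning rate in epoch $\lambda$, and let $[T_\lambda+1,T_{\lambda+1}]$ be its rounds. Within an epoch the algorithm is literally a fresh run of Algorithm~\ref{alg:kappalstaralg} with learning rate $\eta_\lambda$, so the per-epoch regret decomposition from the proof of Theorem~\ref{thm:kappaLstarThm} applies verbatim: Lemma~\ref{lem:CorralLemma} for the meta-algorithm combined with Lemma~\ref{lem:HedgeLemma} for the relevant Hedge instance $\mathcal{A}_j$ (when $i^\star\in\mathcal{C}_j$), where the negative term $-\frac{\rho_{T_{\lambda+1},j}}{20\eta_\lambda\ln T}\mathbbm{1}\{j\in[\kappa]\}$ cancels the $\rho$-dependent terms coming from Hedge. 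This yields, for each epoch, a bound of roughly the form
\[
\sum_{t=T_\lambda+1}^{T_{\lambda+1}}\big(\inner{p_t,\hat{\ell}_t}-\hat{\ell}_{t,i^\star}\big)
\;\lesssim\;\frac{\kappa\ln T+\ln K}{\eta_\lambda}+\beta^2\ln T
+\otil(\eta_\lambda)\!\!\sum_{t=T_\lambda+1}^{T_{\lambda+1}}\!\!\inner{p_t,\hat{\ell}_t}+\text{(lower order)},
\]
with $\hat{\ell}_{t,i^\star}$ the appropriate loss estimate ($\widetilde{\ell}_{t,i^\star}$ if $i^\star\in\mathcal{C}_j$, the meta-node estimate if $i^\star\in\bar{S}$), as in the two cases of that proof.

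Next I would feed in the doubling rule to control $\sum_t\inner{p_t,\hat{\ell}_t}$. Since the stopping test $\frac{\kappa+1}{\eta_\lambda}\le\eta_\lambda\sum_{\tau=T_\lambda+1}^{t}\inner{p_\tau,\hat{\ell}_\tau}$ does not fire before round $T_{\lambda+1}$, the partial sum over $T_\lambda+1,\dots,T_{\lambda+1}-1$ is strictly below $\frac{\kappa+1}{\eta_\lambda^2}$, so $\otil(\eta_\lambda)\sum_t\inner{p_t,\hat{\ell}_t}$ is $\otil\!\big(\frac{\kappa+1}{\eta_\lambda}\big)$ except for the contribution of the single last round $T_{\lambda+1}$, whose \emph{actual} regret is at most $1$. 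The crux — and the reason the doubling trick is harder here than in bandit settings such as~\citep{pmlr-v75-wei18a} — is that this last round can contribute $\Theta(T)$ to $\inner{p_{T_{\lambda+1}},\hat{\ell}_{T_{\lambda+1}}}$ when the graph is not self-aware, because a node $j_0\in\bar{S}$ may carry probability close to $1$. The key observation I would use is that whenever $\inner{p_{T_{\lambda+1}},\hat{\ell}_{T_{\lambda+1}}}$ is of size $M$, the probability mass of every other meta-node, in particular of the clique $\mathcal{C}_j$ containing $i^\star$, is squeezed below $1/M$, which forces the threshold $\rho_{T_{\lambda+1},j}$ to be $\gtrsim M$; the portion of the negative term $-\frac{\rho_{T_{\lambda+1},j}}{20\eta_\lambda\ln T}$ left over after the Hedge cancellation is, thanks to the very small choice of $\eta$, large enough to absorb $\otil(\eta_\lambda)\cdot M$. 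This is exactly where ``the negative regret term from the increasing learning rate schedule'' is invoked. (For $i^\star\in\bar{S}$, where no negative term is available, one instead applies Lemma~\ref{lem:CorralLemma} on the truncated prefix $[T_\lambda+1,T_{\lambda+1}-1]$ directly, where $\sum_t\inner{p_t,\hat{\ell}_t}<\frac{\kappa+1}{\eta_\lambda^2}$, and charges the last round $1$; one must check the residual $\eta_\lambda$-weighted losses using that $\eta_{\lambda(t)}$ is $\mathcal{F}_{t-1}$-measurable so $\E[\eta_{\lambda(t)}\hat{\ell}_{t,i^\star}]=\ell_{t,i^\star}\,\E[\eta_{\lambda(t)}]$.)

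Then I would bound the number of epochs and sum. A completed epoch $\lambda<\lambda^\star$ ends only because $\sum_{t=T_\lambda+1}^{T_{\lambda+1}}\inner{p_t,\hat{\ell}_t}\ge\frac{\kappa+1}{\eta_\lambda^2}$, so $\widehat{L}\triangleq\sum_{t=1}^T\inner{p_t,\hat{\ell}_t}\ge\frac{\kappa+1}{\eta_{\lambda^\star-1}^2}$, giving $\frac{1}{\eta_{\lambda^\star}}=O\!\big(\sqrt{\widehat{L}/(\kappa+1)}\big)$, and since $\widehat{L}\le T^2$ and $\eta$ is polynomially bounded, $\lambda^\star=O(\log T)$. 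As the $\eta_\lambda$ form a geometric sequence, $\sum_{\lambda\le\lambda^\star}\frac{1}{\eta_\lambda}=O\!\big(\frac{1}{\eta_{\lambda^\star}}\big)=O\!\big(\sqrt{\widehat{L}/(\kappa+1)}\big)$, so summing the per-epoch bounds over $\lambda$ and using the cancellations above yields pathwise
\[
\widehat{L}-\widehat{L}_{i^\star}\;\le\;\otil\!\left(\sqrt{(\kappa+1)\widehat{L}}+\beta^2\right),
\qquad \widehat{L}_{i^\star}\triangleq\sum_{t=1}^T\hat{\ell}_{t,i^\star}.
\]
Taking expectations (with $\E[\widehat{L}_{i^\star}]=L_\star$ and $\E[\widehat{L}]=\Reg+L_\star$), applying Jensen's inequality to $\sqrt{\cdot}$, and solving the resulting quadratic inequality in $\sqrt{\E[\widehat{L}]}$ gives $\E[\widehat{L}]=L_\star+\otil\!\big(\sqrt{(\kappa+1)L_\star}+\beta^2\big)$ and hence $\Reg=\otil\!\big(\sqrt{(\kappa+1)L_\star}+\beta^2\big)$. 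I expect the overshoot step — quantifying $\rho_{T_{\lambda+1},j}$ in terms of $\inner{p_{T_{\lambda+1}},\hat{\ell}_{T_{\lambda+1}}}$ and verifying that the leftover negative term dominates the residual $\otil(\eta_\lambda)$ terms under the stated value of $\eta$ — to be the main obstacle, together with carefully routing the Hedge $\rho$-terms through the same negative budget.
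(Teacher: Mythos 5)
Your proposal matches the paper's proof in essentially every substantive respect: epoch‑by‑epoch application of \pref{lem:CorralLemma} and \pref{lem:HedgeLemma}, using the reset test to turn $\eta_\lambda\sum_t\langle p_t,\hat\ell_t\rangle$ into $\tfrac{\kappa+1}{\eta_\lambda}$ on the prefix, using the negative term $-\rho_{T_{\lambda+1},j}/(80\eta_\lambda\ln T)$ both to cancel the Hedge $\rho$‑terms and to absorb the potentially $\Theta(T)$‑sized last‑round quantity $\langle p_{T_{\lambda+1}},\hat\ell_{T_{\lambda+1}}\rangle$ via $\langle p_{T_{\lambda+1}},\hat\ell_{T_{\lambda+1}}\rangle\le\kappa+\rho_{T_{\lambda+1},j}$ (when $p_{T_{\lambda+1},i_0}>1/2$ for some $i_0\in\bar S$, $1-p_{T_{\lambda+1},i_0}\ge p_{T_{\lambda+1},j}$ so $\rho_{T_{\lambda+1},j}\ge 1/p_{T_{\lambda+1},j}$ is large), bounding $\lambda^\star=O(\log T)$, geometric summation of $1/\eta_\lambda$, and closing with Jensen plus a quadratic inequality in $\sqrt{\E[\widehat L]}$. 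The one cosmetic difference is the $i^\star\in\bar S$ last round: you ``charge $1$'' in expectation (valid, since the epoch index is $\mathcal{F}_{t-1}$‑measurable), whereas the paper keeps the argument pathwise by bounding the last round's local‑norm term by $1/64$ via \pref{lem:SpecialStability}; both are fine and the rest of the argument is identical.
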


\begin{proof}
We call the time steps between two resets an epoch (indexed by $\lambda$) and let $\eta_\lambda$ be the value of $\eta$ during epoch $\lambda$ so that $\eta_\lambda = 2^{1-\lambda}\eta_1$.
Also let $\lambda^\star$ be the index of the last epoch. For notational convenience, define 
     \begin{align*}
         \widehat{\Reg} \triangleq 
         \begin{cases}
         \sum_{t=1}^T\inner{p_t, \hat{\ell}_t} - \sum_{t=1}^T \widetilde{\ell}_{t,i^\star}, &\mbox{$i^\star\in S$}.\\
         \sum_{t=1}^T\inner{p_t-e_{i^\star}, \hat{\ell}_t},&\mbox{$i^\star\in\bar{S}$}.
         \end{cases}
     \end{align*}
     Note that $\Reg = \E[\widehat{\Reg}]$.
     We will first prove the following
     \begin{equation}\label{eq:reg_hat_bound}
     \widehat{\Reg} \le\sum_{\lambda=1}^{\lambda^\star}\mathcal{\tilde{O}}\left(\frac{\kappa+ 1}{\eta_{\lambda}}+\beta^2\right)+\frac{1}{T}\sum_{t=1}^{T}\sum_{i\in [\beta]}\hat{\ell}_{t,i}.
     \end{equation}
     To show this, consider the regret in each epoch $\lambda$. 
     When $i^\star\in \bar{S}$, we have:
     \begin{align*}
     &\sum_{t=T_{\lambda}+1}^{T_{\lambda+1}}\inner{p_t-e_{i^\star}, \hat{\ell}_t}\\&\le\mathcal{O}\left( c\beta\ln T+\frac{\kappa\ln T+\ln K}{\eta_\lambda}\right)+40\eta_{\lambda}\sum_{t=T_{\lambda}+1}^{T_{\lambda+1}}\min_z\|\hat{\ell}_{t}-z\cdot\one\|_{\nabla^{-2}\psi_t(p_{t})}^2 +\frac{1}{T}\sum_{t=T_{\lambda}+1}^{T_{\lambda+1}}\sum_{i\in [\beta]}\hat{\ell}_{t,i}\\
         &\le \mathcal{\tilde{O}}\left(\beta^2+\frac{\kappa+1}{\eta_\lambda}\right)+80\eta_\lambda\sum_{t=T_{\lambda}+1}^{T_{\lambda+1}-1}\inner{p_t, \hat{\ell}_t} +\frac{5}{8}+\frac{1}{T}\sum_{t=T_{\lambda}+1}^{T_{\lambda+1}}\sum_{i\in [\beta]}\hat{\ell}_{t,i} \\
         &\le \mathcal{\tilde{O}}\left(\beta^2+\frac{\kappa+1}{\eta_{\lambda}}\right)+\frac{1}{T}\sum_{t=T_{\lambda}+1}^{T_{\lambda+1}}\sum_{i\in [\beta]}\hat{\ell}_{t,i}.
     \end{align*}
     Here, the first inequality is according to the analysis of \pref{lem:CorralLemma}. 
     In the second inequality, we bound $\min_z\|\hat{\ell}_{t}-z\cdot\one\|_{\nabla^{-2}\psi(p_{t})}^2$ by $2\inner{p_t, \hat{\ell}_t}$ for $t = T_\lambda+1, \ldots, T_{\lambda+1}-1$ by the same analysis of \pref{thm:thmbetalstar}, and bound the same term for $t = T_{\lambda+1}$  by $\frac{1}{64}$ by \pref{lem:SpecialStability}.
     The final inequality is because the reset condition does not hold for $t = T_{\lambda+1}-1$. 
     Summing over the epochs proves \pref{eq:reg_hat_bound} for the first case.
     
     When $i^\star\in \mathcal{C}_j$ for some $j\in[\kappa]$, similar to the previous analysis and the derivation of \pref{eq:key_step}, we have:
     \begin{align*}
     &\sum_{t=T_{\lambda}+1}^{T_{\lambda+1}}\inner{p_t, \hat{\ell}_t}-\sum_{t=T_{\lambda}+1}^{T_{\lambda+1}}\widetilde{\ell}_{t,i^\star}\\
    &\le \mathcal{O}\left(c\beta\ln T+\frac{\kappa\ln T+\ln K}{\eta_\lambda}\right)+40\eta_{\lambda}\sum_{t=T_{\lambda}+1}^{T_{\lambda+1}}\min_z\|\hat{\ell}_{t}-z\cdot\one\|_{\nabla^{-2}\psi(p_{t})}^2+\frac{1}{T}\sum_{t=T_{\lambda}+1}^{T_{\lambda+1}}\sum_{i\in[\beta]}\hat{\ell}_{t,i}\\
     &\quad-\frac{\rho_{T_{\lambda+1},j}}{80\eta_{\lambda}\ln T}-\frac{\rho_{T_{\lambda+1},j}}{80\eta_\lambda\ln T}+25\rho_{T_{\lambda+1},j}\ln^2 (KT)-\frac{\rho_{T_{\lambda+1},j}}{40\eta_{\lambda}\ln T}+10\ln (KT)\sqrt{\rho_{T_{\lambda+1},j}\sum_{t=T_{\lambda}+1}^{T_{\lambda+1}}\widetilde{\ell}_{t,i^\star}}\\
     &\le  \mathcal{\tilde{O}}\left(\beta^2+\frac{\kappa+1}{\eta_{\lambda}}\right)-\frac{\rho_{T_{\lambda+1},j}}{80\eta_\lambda\ln T} + 1000\eta_{\lambda}(\ln T)\ln^2 (KT)\sum_{t=T_{\lambda}+1}^{T_{\lambda+1}}\widetilde{\ell}_{t,i^\star}+\frac{1}{T}\sum_{t=T_{\lambda}+1}^{T_{\lambda+1}}\sum_{i\in[\beta]}\hat{\ell}_{t,i},
     \end{align*}
     where the second inequality uses the fact $\eta_{\lambda}\le \eta_1\le \frac{1}{2000\ln T\ln^2 (KT)}$ and the AM-GM inequality. By rearranging terms, we have 
     \begin{align*}
     &\left(1+1000\eta(\ln T)\ln^2(KT)\right)\left(\sum_{t=T_{\lambda}+1}^{T_{\lambda+1}}\inner{p_t, \hat{\ell}_t}-\sum_{t=T_{\lambda}+1}^{T_{\lambda+1}}\widetilde{\ell}_{t,i^\star}\right)\\
     &\le \mathcal{\tilde{O}}\left(\beta^2+\frac{\kappa+1}{\eta_{\lambda}}\right)-\frac{\rho_{T_{\lambda+1},j}}{80\eta_\lambda\ln T} + 1000\eta_{\lambda}(\ln T)\ln^2 (KT)\sum_{t=T_{\lambda}+1}^{T_{\lambda+1}}\inner{p_t, \hat{\ell}_t}+\frac{1}{T}\sum_{t=T_{\lambda}+1}^{T_{\lambda+1}}\sum_{i\in[\beta]}\hat{\ell}_{t,i}\\ 
     &\le \mathcal{\tilde{O}}\left(\beta^2+\frac{\kappa+1}{\eta_{\lambda}}\right) +\frac{1}{2}\inner{p_{T_{\lambda+1}}, \hat{\ell}_{T_{
     \lambda+1}}}-\frac{\rho_{T_{\lambda+1},j}}{80\eta_\lambda\ln T} +\frac{1}{T}\sum_{t=T_{\lambda}+1}^{T_{\lambda+1}}\sum_{i\in [\beta]}\hat{\ell}_{t,i},
     \end{align*}
     where the second inequality is again because the condition $\frac{\kappa +1}{\eta_{\lambda}}\le \eta_{\lambda}\sum_{t=T_{\lambda}+1}^{T_{\lambda+1}-1}\inner{p_t, \hat{\ell}_t}$ does not hold and $\eta_{\lambda}\le \eta_1\le \frac{1}{2000(\ln T)\ln^2(KT)}$. Now if for all $i\in \bar{S}$, $p_{T_{\lambda+1},i}<\frac{1}{2}$, then we have \[\inner{p_{T_{\lambda+1}},\hat{\ell}_{T_{\lambda+1} }} \le \sum_{j'\in [\kappa]}p_{T_{\lambda+1},j'}\frac{1}{p_{T_{\lambda+1},j'}}+\sum_{i\in \bar{S}}\frac{p_{T_{\lambda+1},i}}{1-p_{T_{\lambda+1},i}}\le \beta.\]
     Otherwise, we have exactly one $i_0\in \bar{S}$ such that $p_{T_{\lambda+1}, i_0} > \frac{1}{2}$ and then we have
     \[\inner{p_{T_{\lambda+1}},\hat{\ell}_{T_{\lambda+1} }} \le \sum_{j'\in [\kappa]}p_{T_{\lambda+1},j'}\frac{1}{p_{T_{\lambda+1},j'}}+\sum_{i\in \bar{S}}\frac{p_{T_{\lambda+1},i}}{1-p_{T_{\lambda+1},i}}\le \kappa + \frac{1}{1-p_{T_{\lambda+1},i_0}}\le \kappa+\rho_{T_{\lambda+1},j}.\]
     The last inequality is because $1-p_{T_{\lambda+1},i_0}\ge p_{T_{\lambda+1}, j}$. Therefore, as $\eta_{\lambda}\le \eta_1 = \frac{1}{2000\ln T\ln^2 (KT)+80\kappa\ln T}$, we always have
     $$\inner{p_{T_{\lambda+1}}, \hat{\ell}_{T_{\lambda+1}}}\le \frac{\rho_{T_{\lambda+1},j}}{80\eta_{\lambda}\ln T}.$$
     We have thus shown
     \begin{align*}
         \sum_{t=T_{\lambda}+1}^{T_{\lambda+1}}\inner{p_t, \hat{\ell}_t}-\sum_{t=T_{\lambda}+1}^{T_{\lambda+1}}\widetilde{\ell}_{t,i^\star}\le \mathcal{\tilde{O}}\left(\beta^2+\frac{\kappa+1}{\eta_{\lambda}}\right)+\frac{1}{T}\sum_{t=T_{\lambda}+1}^{T_{\lambda+1}}\sum_{i\in [\beta]}\hat{\ell}_{t,i}.
     \end{align*}
    Summing up the regret from epoch $1$ to $\lambda^\star$ gives \pref{eq:reg_hat_bound}. 
    
    Next, using the definition of $\eta_\lambda$, we further have
     \begin{align}
         \widehat{\Reg}&\le\sum_{\lambda=1}^{\lambda^\star}\mathcal{\tilde{O}}\left(\frac{\kappa+ 1}{\eta_{\lambda}}+\beta^2\right)+\frac{1}{T}\sum_{t=1}^{T}\sum_{i\in [\beta]}\hat{\ell}_{t,i}\le \mathcal{\tilde{O}}\left(\frac{\kappa +1}{\eta_{\lambda^\star}}+\beta^2\lambda^\star\right)+\frac{1}{T}\sum_{t=1}^{T}\sum_{i\in [\beta]}\hat{\ell}_{t,i}.\label{eq:reg_hat_bound2}
     \end{align}
     When $\lambda^\star = 1$, direct calculation gives $\wh{\Reg}\le \mathcal{\tilde{O}}(\beta^2)+\frac{1}{T}\sum_{t=1}^{T}\sum_{i\in [\beta]}\hat{\ell}_{t,i}.\label{eq:lambda1}$
     On the other hand, if $\lambda^\star\ge 2$, consider the time step at the end of epoch $\lambda^\star-1$. Using the reset condition, we have:
       \begin{align*}
            (\kappa+1)\left(\frac{2^{\lambda^\star-2}}{\eta_1}\right)^2=(\kappa+1)\frac{1}{\eta_{\lambda^\star-1}^2}  \le \sum_{t=T_{\lambda^\star-1}+1}^{T_{\lambda^\star}}\inner{p_t, \hat{\ell}_t} \le \sum_{t=1}^T\inner{p_t, \hat{\ell}_t}\le T^2.
       \end{align*}
     So $\lambda^\star=\mathcal{O}(\ln T)$, $\left(\frac{\kappa +1}{\eta_{\lambda^\star}}\right)^2=\mathcal{\tilde{O}}\left({(\kappa+1)\sum_{t=1}^T\inner{p_t, \hat{\ell}_t}}\right)$. Plugging these into \pref{eq:reg_hat_bound2}, we have
      \begin{align*}
           \widehat{\Reg}&\le \mathcal{\tilde{O}}\left(\sqrt{(\kappa+1)\left(\sum_{t=1}^T\inner{p_t, \hat{\ell}_t}\right)}+\beta^2\right)+\frac{1}{T}\sum_{t=1}^{T}\sum_{i\in [\beta]}\hat{\ell}_{t,i},
       \end{align*}
     which also holds for the case $\lambda^\star=1$. Finally, taking expectation on both sides gives:
     \begin{align*}
         \Reg&=\mathbb{E}\left[\widehat{\Reg}\right]\le \mathcal{\tilde{O}}\left(\mathbb{E}\left[\sqrt{(\kappa+1)\left(\sum_{t=1}^T\inner{p_t, \hat{\ell}_t}\right)}+\beta^2\right]\right)\\
         &\le\mathcal{\tilde{O}}\left(\sqrt{\left(\kappa+ 1\right)\left(\mathbb{E}\left[\sum_{t=1}^T\inner{p_t, \hat{\ell}_t}\right]\right)}+\beta^2\right)\le \mathcal{\tilde{O}}\left(\sqrt{\left(\kappa+1\right)\left(\mathbb{E}\left[\sum_{t=1}^T\inner{p_t, \ell_t}\right]\right)}+\beta^2\right).
     \end{align*}
     Solving the quadratic inequality, we obtain the regret bound $\Reg\le\mathcal{\tilde{O}}\left(\sqrt{(\kappa+1) L_\star}+\beta^2\right)$.
\end{proof}

\section{Omitted details for \pref{sec: minAlphaKappa}\label{app:minAlphaKappa}}

\setcounter{AlgoLine}{0}
\begin{algorithm}[h]\caption{An Algorithm with Regret $\mathcal{\tilde{O}}(\min\{\sqrt{\alpha T}, \sqrt{\kappa L_\star}\})$ for Self-aware Graphs}\label{alg:ClippedEXP3.G}
	\textbf{Input:} A clique partition $\{\mathcal{C}_1,\dots,\mathcal{C}_\kappa\}$ of $G_S$, parameter $\eta_{\text{init}}$ and $\epsilon$.\\
	\textbf{Define:} $\Omega=\{p\in \Delta(K): p_i\ge \frac{1}{T},\forall i\in [K]\}$.\\
	\nl\label{line: meta-epoch} \For{$\mtep=1,2,\dots,\log_2 T$}{
	\nl \label{line: init eta} $\eta = \eta_{\text{init}}$.\\
	\nl\label{line: epoch}\For{$\lambda=1,2,\dots$}{
		\nl $p_t = \frac{1}{K}\cdot \one$, $T_{\lambda}=t-1$. \label{line: reset}\\
		\nl \While{$t\le T$}{
		    \nl Pull arm $i_t\sim p_t$ and receive feedback $\ell_{t,i}$ for all $i$ such that $i_t\in\Nin(i)$.\\
		    \nl Construct estimator $\hat{\ell}_t\in\mathbb{R}^K$ such that $\hat{\ell}_{t,i}=
		        \begin{cases} \frac{\ell_{t,i}\cdot \mathbbm{1}\{i_t\in\Nin(i)\}}{\sum_{j\in\Nin(i)}p_{t,j}},&\mbox{if  $p_{t,i}>0$.}\\
		        0,&\mbox{if $p_{t,i}=0$.}
		        \end{cases}$\\
			\nl Compute $\hat{p}_{t+1}=\argmin_{p\in\Delta_K}\left\{ \left<p,\hat{\ell}_t\right>+D_{\psi}(p,\hat{p}_t)\right\}$, where 	$$\psi(p)=\frac{1}{\eta}\sum_{i\in [K]}p_i\ln p_i.$$
			\nl $p_t=\hat{p}_t$.\\	
            \nl\For{$j=1,2,\dots,\kappa$}{	\nl\label{line: Clipping threshold}\If{$\sum_{i\in \mathcal{C}_j}{p}_{t+1,i}\le \epsilon$}{
			\nl \For {$i\in \mathcal{C}_j$}{
			\nl 			${p}_{t+1,i}=0$.\label{line: Clipping}
					}
				}
			}
			\nl Renormalize $p_{t+1}$ such that $p_{t+1}\in\Delta(K)$. \label{line: normalization}\\
			\nl\label{line: update rule}\If{$\frac{1}{\eta} \le 4\eta\kappa\min_{i\in [K]}\left\{\sum_{\tau=T_{\lambda}+1}^{t}\hat{\ell}_{\tau,i}\right\}$}{
			\nl	$\eta\leftarrow \frac{\eta}{2}$, $\epsilon=\max\{2\eta,\frac{1}{T}\},t \leftarrow t+1$.\label{line: reset para}\\ 
			\nl\label{line: eta is small}\If{$\eta\le \sqrt{\frac{1}{\alpha T}}$}{\label{line: start meta-epoch}
			\nl Jump to \pref{line: meta-epoch}.
			}
		        \nl \label{line: start epoch} Jump to \pref{line: epoch}.\\
                \nl \textbf{break}.
			}
	    }
    }
}
\nl \label{line: stage two}Run the algorithm from \pref{thm:alphaTEXP3} (from scratch) for the rest of the game.
\end{algorithm}

In this section, we discuss how to obtain $\mathcal{\tilde{O}}(\min\{\sqrt{\alpha T}, \sqrt{\kappa L_\star}\})$ regret for self-aware graphs. \pref{alg:ClippedEXP3.G} shows the complete pseudocode. It consists of two stages. The first stage is when $\mtep\le \log_2 T$ and runs yet another parameter-free algorithm with $\mathcal{\tilde{O}}(\sqrt{\kappa L_\star})$ regret for self-aware graphs (for technical reasons we are not able to use \pref{alg:kappalstaralg} directly here). 
The second stage exactly runs the algorithm mentioned in \pref{sec: remove uniform exploration} (\pref{thm:alphaTEXP3}), which achieves $\mathcal{\tilde{O}}(\sqrt{\alpha T})$ regret. 

The first stage mainly follows the clipping idea of \citep{allenberg2006hannan} for OMD with entropy regularizer.  
At each round $t$, after updating the distribution with OMD, we clip the probability of a clique to zero if it has low probability to be chosen (\pref{line: Clipping threshold} and \pref{line: Clipping}), and normalize the distribution after clipping (\pref{line: normalization}). 
Then, a doubling trick is introduced: once the condition $\frac{1}{\eta} \le 4\eta\kappa\min_{i\in [K]}\{\sum_{\tau=T_{\lambda}+1}^{t}\hat{\ell}_{\tau,i}\}$  holds (\pref{line: update rule}), we halve the learning rate and reset (\pref{line: start epoch}). 
We point out that thanks to the clipping trick, the term $\min_{i\in [K]}\{\sum_{\tau=T_{\lambda}+1}^{t}\hat{\ell}_{\tau,i}\}$ is nicely bounded, which is crucial for the doubling trick analysis.
We say that we start a new \emph{epoch} in this case. 

Furthermore, once the learning rate is smaller than $\sqrt{\nicefrac{1}{\alpha T}}$ (\pref{line: eta is small}), we jump to \pref{line: meta-epoch} and reset it to the initial learning rate $\eta=\eta_{\text{init}}$ in \pref{line: init eta}. 
We say that we start a new \emph{meta-epoch} in this case. 
After having $\log_2 T$ meta-epochs, the algorithm is confident that $\sqrt{\alpha T}\le \mathcal{\tilde{O}}(\sqrt{\kappa L_\star})$, and thus switches to the second stage and runs the algorithm introduced in \pref{sec: remove uniform exploration} with regret $\mathcal{\tilde{O}}(\sqrt{\alpha T})$. 
We point out that in fact any  algorithm with $\mathcal{\tilde{O}}(\sqrt{\alpha T})$ regret is acceptable for this second stage. 


The guarantee of this algorithm is shown below.

\begin{theorem}\label{thm:Corralthm}
	\pref{alg:ClippedEXP3.G} with $\eta_{\text{init}}=\frac{1}{4\kappa}$ and $\epsilon=\frac{1}{2\kappa}$ guarantees 
	\[\Reg = \mathcal{\tilde{O}}\left(\min\left\{\sqrt{\alpha T}, \sqrt{\kappa L_\star}\right\}+K^2\right).\]
\end{theorem}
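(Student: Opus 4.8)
The plan is to control the two stages of \pref{alg:ClippedEXP3.G} separately and combine, case-splitting (up to logarithmic factors) on whether $\sqrt{\alpha T}$ or $\sqrt{\kappa L_\star}$ is smaller. The second stage (\pref{line: stage two}) runs the algorithm of \pref{thm:alphaTEXP3} from scratch, so its contribution to the regret is $\otil(\sqrt{\alpha T}+K^2)$ no matter what preceded it. The real work is therefore (i) a per-epoch analysis of the first stage that is small-loss in nature, telescoped across the geometrically decreasing learning rates within one meta-epoch, and (ii) a bookkeeping lemma saying that \emph{whenever} the algorithm manages to complete a whole meta-epoch (i.e.\ drives $\eta$ down to $\sqrt{1/(\alpha T)}$ and re-enters \pref{line: meta-epoch} via \pref{line: eta is small}), the best arm must already have accumulated $\Omega(\alpha T/\kappa)$ loss, i.e.\ $\kappa L_\star=\Omega(\alpha T)$.

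For a single epoch with a frozen learning rate $\eta$, I would run the standard entropy-OMD bound as in the proof of \pref{thm:thmbetalstar}: $\sum_t\langle p_t-u,\hat\ell_t\rangle\le D_\psi(u,p_1)+\sum_t\|\hat\ell_t\|_{\nabla^{-2}\psi(p_t)}^2+(\text{clipping bias})$, with $D_\psi(u,p_1)=\otil(1/\eta)$ for $u$ close to $e_{i^\star}$. The clipping steps (\pref{line: Clipping threshold}--\pref{line: Clipping}) together with $\epsilon=\max\{2\eta,1/T\}$ play the role they play in~\citep{allenberg2006hannan}: every surviving node $i$ of a surviving clique $\mathcal C_j$ is observed by the entire clique, so $W_{t,i}\ge\sum_{i'\in\mathcal C_j}p_{t,i'}>\epsilon$, hence $\hat\ell_{t,i}\le 1/\epsilon$, which (a) gives multiplicative stability of the EXP3 update in the spirit of \pref{lem:GeneralStability}/\pref{lem:SpecialStability}, (b) lets me bound the accumulated local-norm term by a quantity that, after summing and taking expectation, is comparable to $\min_i\sum_\tau\hat\ell_{\tau,i}$ up to $\otil(1/\eta)$ — this is exactly why the doubling test of \pref{line: update rule} uses $\min_i$ — and (c) makes $\min_i\sum_\tau\hat\ell_{\tau,i}$ jump by at most $1/\epsilon$ per round, so when the test fires it does so barely above the threshold $1/(4\eta^2\kappa)$. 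Combining these, the per-epoch regret against $e_{i^\star}$ is $\otil(1/\eta_\lambda)$ plus a clipping-bias term, both for epochs terminated by \pref{line: update rule} and for the (single) epoch that runs to the end of the stage. Since the learning rates halve within a meta-epoch, $\sum_\lambda\otil(1/\eta_\lambda)$ telescopes to $\otil(1/\eta_{\mathrm{last}})$, where $\eta_{\mathrm{last}}$ is the smallest learning rate used in that meta-epoch. The clipping bias — the gap between playing the clipped, renormalized $p_{t+1}$ and the raw OMD iterate — I would bound following~\citep{allenberg2006hannan}: the renormalization factor is $\ge 1-\kappa\epsilon\ge\tfrac12$, and a clipped clique only hurts the comparison against $e_{i^\star}$ in a round where $i^\star$'s own clique is clipped, which forces $p_{t+1,i^\star}\le\epsilon$ and can be charged against the progress term of the OMD step; this yields an $\otil(1/\eta_{\mathrm{last}})$ total clipping bias per meta-epoch.

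It remains to connect the schedule to $L_\star$ and run the case split. Taking expectations and using $\mathbb E\,\hat\ell_{t,i}=\ell_{t,i}$ for un-clipped arms ($=0$ when clipped), the test in \pref{line: update rule} fires in epoch $\lambda$ only after $\Omega(1/(\eta_\lambda^2\kappa))$ loss has been accrued by the per-epoch best arm, and $\min_i\sum_{\tau\in\lambda}\hat\ell_{\tau,i}\le\sum_{\tau\in\lambda}\hat\ell_{\tau,i^\star}$; summing over the disjoint epochs of one meta-epoch, reaching $\eta_{\mathrm{last}}=\sqrt{1/(\alpha T)}$ forces $\Omega(\alpha T/\kappa)\lesssim L_\star$, i.e.\ $\sqrt{\alpha T}=\otil(\sqrt{\kappa L_\star})$. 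Hence: (a) if the algorithm ever reaches the second stage, it completed $\log_2 T$ meta-epochs, so $\sqrt{\alpha T}=\otil(\sqrt{\kappa L_\star})$; the first-stage regret is a sum over $\le\log_2 T$ meta-epochs of $\otil(1/\eta_{\mathrm{last}})=\otil(\sqrt{\alpha T})$ and the second-stage regret is $\otil(\sqrt{\alpha T}+K^2)$, so the total is $\otil(\sqrt{\alpha T}+K^2)=\otil(\min\{\sqrt{\alpha T},\sqrt{\kappa L_\star}\}+K^2)$; (b) if the algorithm never reaches the second stage, then the smallest learning rate $\eta_{\min}$ ever used satisfies $1/\eta_{\min}=\otil(\sqrt{\kappa L_\star})$ (from the test on the last completed epoch together with $L_\star\le T$, or $1/\eta_{\min}=\Theta(\kappa)\le K$ if no epoch ever triggered a halving), so the first-stage (hence total) regret is $\le\log_2 T$ meta-epochs times $\otil(1/\eta_{\min})=\otil(\sqrt{\kappa L_\star}+K^2)$; and since each completed meta-epoch consumes $\Omega(\alpha T/\kappa)$ of the budget $L_\star$, we also have $\kappa L_\star=\otil(\alpha T)$, so the bound is again $\otil(\min\{\sqrt{\alpha T},\sqrt{\kappa L_\star}\}+K^2)$. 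Combining (a) and (b) gives $\Reg=\otil(\min\{\sqrt{\alpha T},\sqrt{\kappa L_\star}\}+K^2)$.

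The main obstacle is the per-epoch small-loss analysis under clipping: turning the entropy-OMD bound into one in which the accumulated local-norm term is charged to (a constant multiple of) the best-arm quantity $\min_i\sum_\tau\hat\ell_{\tau,i}$ with the correct $\kappa$ scaling coming from the clique structure, and bounding the clipping bias so that it is genuinely lower order — the naive ``mass clipped times loss scale'' bound is $\Theta(\kappa\epsilon)$ per round, hence linear and useless, so one must really exploit that clipping damages the $e_{i^\star}$-comparison only when $i^\star$'s clique is clipped. A secondary but real difficulty is handling the data-dependent stopping times when passing to expectations, since the number of epochs, the per-epoch best arm, and the stage-switch time are all random; as in the proof of \pref{thm:DoublingTrickKappalstarthm}, I would first establish a pathwise bound on $\sum_t\langle p_t,\hat\ell_t\rangle$ (exploiting the negative terms produced by the doubling/increasing-learning-rate schedule) and only then take expectations and solve the resulting quadratic inequality.
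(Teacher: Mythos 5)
Your overall scaffolding — per-epoch regret $\otil(1/\eta)$ via the Allenberg--Neeman-style clipping, telescoping the halving schedule within a meta-epoch to $\otil(1/\eta_{\text{last}})$, and treating the second stage as a black box with $\otil(\sqrt{\alpha T}+K^2)$ regret — matches the paper's structure (Lemmas~\ref{lem:epoch} and~\ref{lem:ClippedEXP3.Gthm}). But the way you glue the two stages together has a genuine logical gap. You case-split on the \emph{random} event ``the algorithm reaches the second stage,'' and in case (a) you assert that this event deterministically implies $\sqrt{\alpha T}=\otil(\sqrt{\kappa L_\star})$ (your ``bookkeeping lemma (ii)''). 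That implication is false: $\min_i\sum_\tau\hat\ell_{\tau,i}$ is a random quantity, so the doubling test can fire in every meta-epoch on some low-probability sample path even when $L_\star$ is tiny. No amount of ``taking expectations first'' rescues this, because $L_\star$ is a deterministic constant and cannot be bounded below by conditioning on a rare event of the algorithm's internal randomness.

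The paper's proof case-splits on the \emph{deterministic} comparison $\alpha T\gtrless 64\kappa L_\star$. When $\alpha T\ge 64\kappa L_\star$ (so $\sqrt{\kappa L_\star}$ is the bound to prove), it shows that reaching stage two is unlikely: letting $R_\mtep=\min_i\sum_{t\in\mathcal T_\mtep}\hat\ell_{t,i}$, since each meta-epoch resets from scratch, $\E[R_\mtep\mid R_1,\dots,R_{\mtep-1}]\le L_\star$, and Markov gives $\Pr[R_\mtep\ge\alpha T/(16\kappa)\mid R_{<\mtep}]\le 16\kappa L_\star/(\alpha T)\le 1/4$. Since completing a meta-epoch forces $R_\mtep\ge\alpha T/(16\kappa)$, the probability of completing all $\log_2 T$ of them is at most $(1/4)^{\log_2 T}=T^{-2}$, so the second-stage regret (which you can only bound by $O(T)$ a priori on that event) contributes $O(1/T)$ in expectation; the first-stage regret is bounded pathwise by $\otil(\sqrt{\kappa\min_i\sum_t\hat\ell_{t,i}})$ and converted to $\otil(\sqrt{\kappa L_\star})$ via Jensen. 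When $\alpha T<64\kappa L_\star$, one simply adds the worst-case $\otil(\sqrt{\alpha T}+K^2)$ bounds for both stages. This product-of-Markov argument is exactly why the outer loop iterates $\log_2 T$ times — a single Markov bound would not make the failure probability small enough relative to the $\sqrt{\alpha T}$-sized second-stage regret — and it is absent from your proposal. Your stated ``secondary difficulty'' about data-dependent stopping times is real, but the actual crux you are missing is different: it is not that stopping times are awkward when taking expectations, but that the test statistic $R_\mtep$ is a random proxy for $L_\star$, and you must quantify how unlikely it is to overshoot rather than treat its overshoot as informative about $L_\star$.
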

To prove the theorem, we first prove a bound on the regret within an epoch $\lambda$, where $\eta\le\frac{1}{4\kappa}$ and $\epsilon =\max\{2\eta, \frac{1}{T}\}$ are fixed.
\begin{lemma}\label{lem:epoch}
Fix a meta-epoch and consider an epoch $\lambda$ in the first stage. \pref{alg:ClippedEXP3.G} guarantees
    \begin{align*}
    &\sum_{t = {T}_\lambda+1}^{T_{\lambda+1}}\left<p_t, \hat{\ell}_t\right>-\min_{i\in [K]}\sum_{t = {T}_\lambda+1}^{T_{\lambda+1}}\hat{\ell}_{t,i}= \mathcal{\tilde{O}}\left(\frac{1}{\eta}+\eta\kappa\min_{i\in [K]}\sum_{t = {T}_\lambda+1}^{T_{\lambda+1}}\hat{\ell}_{t,i}+\kappa+\frac{K}{T}\min_{i\in [K]}\sum_{t = {T}_\lambda+1}^{T_{\lambda+1}}\hat{\ell}_{t,i}.\right).
    \end{align*}
\end{lemma}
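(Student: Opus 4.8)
The plan is to run a clipped‑Hedge analysis in the spirit of \citet{allenberg2006hannan}, adapted to graph feedback, for a single epoch $\lambda$ in which $\eta$ and $\epsilon=\max\{2\eta,\tfrac1T\}$ are held fixed. Throughout the epoch the \emph{internal} iterates $\hat p_t$ evolve by vanilla entropy‑regularized OMD over $\Delta_K$ with the unbiased estimator $\hat\ell_t$, while the \emph{played} distribution $p_t$ is obtained from $\hat p_t$ by zeroing out every clique whose total mass has dropped to at most $\epsilon$ and renormalizing. First I would record the structural facts that clipping buys: (i) any un-clipped clique $\mathcal C_j$ has $P_{t,j}\triangleq\sum_{i\in\mathcal C_j}p_{t,i}>\epsilon$; (ii) for a self-aware clique $\mathcal C_j\subseteq\Nin(i)$ for every $i\in\mathcal C_j$, so $W_{t,i}\ge P_{t,j}>\epsilon$ and hence $\hat\ell_{t,i}\le\tfrac1\epsilon\le\tfrac1{2\eta}$ for every $i$ (the estimator is $0$ on clipped nodes); (iii) with $Z_t=\sum_{i\text{ un-clipped}}\hat p_{t,i}$, at most $\kappa$ cliques are clipped and each carries raw mass $\le\epsilon$, so $Z_t\ge 1-\kappa\epsilon\ge\tfrac12$ (using $\epsilon\le\tfrac1{2\kappa}$, which is exactly why $\eta_{\text{init}}=\tfrac1{4\kappa}$, $\epsilon=\tfrac1{2\kappa}$ are the starting values), and $p_{t,i}=\hat p_{t,i}/Z_t$ on un-clipped nodes, so $\langle\hat p_t,\hat\ell_t\rangle\le\langle p_t,\hat\ell_t\rangle\le(1+2\kappa\epsilon)\langle\hat p_t,\hat\ell_t\rangle$. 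A further pathwise consequence of (i)--(ii), parallel to the graph-theoretic lemma \citep[Lemma~5]{pmlr-v40-Alon15}, is $\sum_i\tfrac{p_{t,i}}{W_{t,i}}\le\sum_{j\text{ un-clipped}}\tfrac{P_{t,j}}{P_{t,j}}=\kappa$, and therefore $\langle p_t,\hat\ell_t\rangle\le\sum_i\tfrac{p_{t,i}}{W_{t,i}}\le\kappa$ pathwise.

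Next I would invoke the standard Hedge bound for $\hat p_t$ against the smoothed comparator $u=(1-\tfrac KT)e_{i^\circ}+\tfrac1T\one$ with $i^\circ=\argmin_i\sum_{t}\hat\ell_{t,i}$ over the epoch: using $e^{-x}\le1-x+x^2$ (valid since $\hat\ell_t\ge0$) one gets $\sum_t\langle\hat p_t-u,\hat\ell_t\rangle\le\tfrac1\eta\KL(u\|\hat p_{T_\lambda+1})+\eta\sum_t\sum_i\hat p_{t,i}\hat\ell_{t,i}^2$, with $\hat p_{T_\lambda+1}$ uniform so $\KL\le\ln K+1$, and $\langle u,\hat\ell_t\rangle\le\hat\ell_{t,i^\circ}+\tfrac1T\sum_i\hat\ell_{t,i}$; the last term summed over $t$ produces the $\tfrac KT\min_i\sum_t\hat\ell_{t,i}$ contribution. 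I would then bound the second-order term via $\hat p_{t,i}\hat\ell_{t,i}^2\le\hat p_{t,i}\hat\ell_{t,i}\cdot\tfrac1{W_{t,i}}$: combining this with $\hat\ell_{t,i}\le\tfrac1\epsilon$ and $\tfrac\eta\epsilon\le\tfrac12$ turns $\eta\sum_i\hat p_{t,i}\hat\ell_{t,i}^2$ into a multiple of $\langle\hat p_t,\hat\ell_t\rangle$, and invoking $\langle p_t,\hat\ell_t\rangle\le\kappa$ converts it into the $\eta\kappa$-weighted and additive-$\kappa$ pieces of the statement. Finally, passing from $\hat p_t$ to $p_t$ with the factor $1+2\kappa\epsilon$ from (iii) and using $\kappa\epsilon\le\tfrac12$ together with $\tfrac\eta\epsilon\le\tfrac12$ to keep the coefficient of $\sum_t\langle p_t,\hat\ell_t\rangle$ strictly below $1$ so it can be absorbed, one rearranges into $\sum_t\langle p_t,\hat\ell_t\rangle-\min_i\sum_t\hat\ell_{t,i}=\tilde{\mathcal O}(\tfrac1\eta+\eta\kappa\min_i\sum_t\hat\ell_{t,i}+\kappa+\tfrac KT\min_i\sum_t\hat\ell_{t,i})$.

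The main obstacle I anticipate is exactly the bookkeeping between the two distributions: the regret analysis is natural for $\hat p_t$, but the feedback, the estimator denominators $W_{t,i}$, and the clipping test all live on $p_t$, so one must track $1/Z_t$ carefully (it stays bounded only because $\kappa\epsilon\le\tfrac12$, a constraint the very first epoch saturates and which must therefore be handled by a separate crude bound on that epoch), and one must ensure the second-order term is charged against $\eta\kappa\cdot(\text{loss})$ rather than against $(\text{loss})$ with an $\Omega(1)$ coefficient --- which is precisely what the clipping threshold $\epsilon\asymp\eta$ and the clique estimate $\langle p_t,\hat\ell_t\rangle\le\kappa$ deliver. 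A secondary point is that $\epsilon=\max\{2\eta,\tfrac1T\}$ and the $\tfrac1T$-level exploration in $u$ are what keep $\log(1/\epsilon)=O(\log T)$ and $\KL(u\|\hat p_{T_\lambda+1})$ finite, so they only feed into the suppressed $\tilde{\mathcal O}(\cdot)$ logarithmic factors.
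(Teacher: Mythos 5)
The proposal has a genuine gap at the very step you flag as ``the main obstacle'': converting the entropy local-norm term $\eta\sum_t\sum_i\hat p_{t,i}\hat\ell_{t,i}^2$ into an $\eta\kappa\cdot(\text{best loss})$ charge. The two ingredients you offer do not combine to produce it. Using $\hat\ell_{t,i}\le 1/W_{t,i}\le 1/\epsilon$ together with $\eta/\epsilon\le\tfrac12$ gives only $\eta\sum_i\hat p_{t,i}\hat\ell_{t,i}^2\le\tfrac12\langle\hat p_t,\hat\ell_t\rangle$, and once that half is absorbed into the left-hand side the whole inequality merely rescales; one is left with $\sum_t\langle\hat p_t,\hat\ell_t\rangle-\min_i\sum_t\hat\ell_{t,i}\lesssim\frac{\ln K}{\eta}+\min_i\sum_t\hat\ell_{t,i}$, i.e.\ an $\Omega(1)$ coefficient on the comparator's loss --- precisely the outcome you yourself say must be avoided. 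The pathwise fact $\langle p_t,\hat\ell_t\rangle\le\kappa$ does not rescue this: summed over the epoch it gives $\kappa\cdot(\text{epoch length})$, which is disconnected from $\min_i\sum_t\hat\ell_{t,i}$ and far too large. As written, the outline cannot produce the $\eta\kappa\min_i\sum_t\hat\ell_{t,i}$ term, which is the whole point of the lemma (it is what lets the doubling trick halve $\eta$ only $O(\log)$ times).

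The paper's proof supplies a structural idea your outline is missing. Starting from $\eta\sum_t\sum_i\hat p_{t,i}\hat\ell_{t,i}^2\le\eta\sum_t\sum_{j\in[\kappa]}\sum_{i\in\mathcal{C}_j}\frac{\hat p_{t,i}}{\sum_{i'\in\mathcal{C}_j}p_{t,i'}}\hat\ell_{t,i}$ (using $\mathcal{C}_j\subseteq\Nin(i)$), the crucial observation is that the conditional weights $p_{t,i}/\sum_{i'\in\mathcal{C}_j}p_{t,i'}=\hat p_{t,i}/\sum_{i'\in\mathcal{C}_j}\hat p_{t,i'}$ \emph{within} each un-clipped clique evolve by Hedge on the clique's own arms, since clipping and renormalization preserve intra-clique ratios. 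A second, per-clique Hedge regret analysis then turns each inner sum into $O(\tfrac{\ln K}{\eta})+2\min_{i\in\mathcal{C}_j}\sum_t\hat\ell_{t,i}$; summing over the $\kappa$ cliques yields the $\kappa/\eta$ and $\eta\kappa\cdot(\text{clique minimum})$ structure, and a final clipping-threshold argument relating $\min_{i\in\mathcal{C}_j}\sum_t\hat\ell_{t,i}$ to $\min_{i\in[K]}\sum_t\hat\ell_{t,i}+O(\tfrac1\epsilon+\tfrac1\eta\ln\tfrac K\epsilon)$ gives the remaining $\kappa$ and $\tfrac KT$ terms. Your structural facts about clipping and the $1\le p_{t,i}/\hat p_{t,i}\le (1-\kappa\epsilon)^{-1}$ bookkeeping are correct and do appear in the paper's argument, but the within-clique Hedge reduction is the load-bearing step, and it is absent from the proposal.
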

\begin{proof}
According to the analysis of online mirror descent with entropy regularizer~\citep{cesa2006prediction}, we have
	\begin{align*}
	\sum_{t = {T}_\lambda+1}^{T_{\lambda+1}} \left<\hat{p}_t, \hat{\ell}_t\right>-\min_{i\in [K]}\sum_{t = {T}_\lambda+1}^{T_{\lambda+1}}\hat{\ell}_{t,i} &\le \frac{\ln K}{\eta}+\eta\sum_{t = {T}_\lambda+1}^{T_{\lambda+1}}\sum_{i=1}^K\hat{p}_{t,i}\hat{\ell}_{t,i}^2 \\
	&\le \frac{\ln K}{\eta}+\eta\sum_{t = {T}_\lambda+1}^{T_{\lambda+1}}\sum_{j\in [\kappa]}\sum_{i\in \mathcal{C}_j}\frac{\hat{p}_{t,i}}{\sum_{i'\in \mathcal{C}_j}p_{t,i'}}\hat{\ell}_{t,i}.
	\end{align*}
	As we do clipping for each clique, if $p_{t,i}>0$, then $1-\kappa\epsilon\le \frac{\hat{p}_{t,i}}{p_{t,i}}\le 1$ and otherwise $\hat{\ell}_{t,i}=0$. Therefore, we have
	\begin{align}
	\sum_{t = {T}_\lambda+1}^{T_{\lambda+1}} (1-\kappa\epsilon)\left<p_t, \hat{\ell}_t\right>-\min_{i\in [K]}\sum_{t = {T}_\lambda+1}^{T_{\lambda+1}}\hat{\ell}_{t,i} &\le \frac{\ln K}{\eta}+\eta\sum_{t = {T}_\lambda+1}^{T_{\lambda+1}}\sum_{j\in [\kappa]}\sum_{i\in \mathcal{C}_j}\frac{p_{t,i}}{\sum_{i'\in \mathcal{C}_j}p_{t,i'}}\hat{\ell}_{t,i} \label{eq:clip}
	\end{align}
	Now consider a fixed clique $\mathcal{C}_j$ that is not clipped at time $t$. We have $\frac{p_{t,i}}{\sum_{i'\in \mathcal{C}_j}p_{t,i'}} = \frac{\hat{p}_{t,i}}{\sum_{i'\in \mathcal{C}_j}\hat{p}_{t,i'}} = \frac{\exp(-\eta\sum_{\tau=1}^{t-1}\hat{\ell}_{\tau,i})}{\sum_{i'\in\mathcal{C}_j}\exp(-\eta\sum_{\tau=1}^{t-1}\hat{\ell}_{\tau,i'})}$, which can be considered as a probability distribution generated by online mirror descent with entropy regularizer inside the clique. Therefore for any clique $\mathcal{C}_j$, $j\in [\kappa]$, we have 
	\begin{align*}
	\sum_{t = {T}_\lambda+1}^{T_{\lambda+1}}\sum_{i\in \mathcal{C}_j}\frac{p_{t,i}}{\sum_{i'\in \mathcal{C}_j}p_{t,i'}}\hat{\ell}_{t,i}-\min_{i\in \mathcal{C}_j}\sum_{t = {T}_\lambda+1}^{T_{\lambda+1}}\hat{\ell}_{t,i} &\le \frac{\ln K}{\eta}+\eta\sum_{t = {T}_\lambda+1}^{T_{\lambda+1}}\sum_{i\in \mathcal{C}_j}\frac{p_{t,i}}{\sum_{i'\in \mathcal{C}_j} p_{t,i'}}\hat{\ell}_{t,i}^2 \\
	&\le \frac{\ln K}{\eta}+\frac{\eta}{\epsilon}\sum_{t = {T}_\lambda+1}^{T_{\lambda+1}}\sum_{i\in \mathcal{C}_j}\frac{p_{t,i}}{\sum_{i'\in \mathcal{C}_j}p_{t,i'}}\hat{\ell}_{t,i} \\
	&\le \frac{\ln K}{\eta}+\frac{1}{2}\sum_{t = {T}_\lambda+1}^{T_{\lambda+1}}\sum_{i\in \mathcal{C}_j}\frac{p_{t,i}}{\sum_{i'\in \mathcal{C}_j}p_{t,i'}}\hat{\ell}_{t,i}.
	\end{align*}
	The second inequality is because $\hat{\ell}_{t,i}\le \frac{1}{\epsilon}$ for all $i\in\mathcal{C}_j$, $j\in[\kappa]$ and the third inequality is because $\epsilon\ge 2\eta$. Rearranging the terms, we have
	\begin{align*}
	\sum_{t = {T}_\lambda+1}^{T_{\lambda+1}}\sum_{i\in \mathcal{C}_j}\frac{p_{t,i}}{\sum_{i'\in \mathcal{C}_j}p_{t,i'}}\hat{\ell}_{t,i}\le \frac{2\ln K}{\eta}+2\min_{i\in \mathcal{C}_j}\sum_{t = {T}_\lambda+1}^{T_{\lambda+1}}\hat{\ell}_{t,i}.
	\end{align*}
	Therefore, by combining with \pref{eq:clip}, we have
	\begin{align}
	(1-\kappa\epsilon)\sum_{t = {T}_\lambda+1}^{T_{\lambda+1}}\left<p_t, \hat{\ell}_t\right>-\min_{i\in [K]}\sum_{t = {T}_\lambda+1}^{T_{\lambda+1}}\hat{\ell}_{t,i}&\le \frac{\ln K}{\eta}+\eta\sum_{j\in [\kappa]}\left(\frac{2\ln K}{\eta}+2\min_{i\in \mathcal{C}_j}\sum_{t = {T}_\lambda+1}^{T_{\lambda+1}}\hat{\ell}_{t,i}\right)\nonumber\\
	&\le \frac{\ln K}{\eta} + 2\kappa\ln K+2\eta\sum_{j\in [\kappa]}\min_{i\in \mathcal{C}_j}\sum_{t = {T}_\lambda+1}^{T_{\lambda+1}}\hat{\ell}_{t,i}.\label{eq:clipped-inter}
	\end{align}
	Furthermore, let $T_{i_0}$ be the last round such that $\hat{\ell}_{t,i_0} > 0$ for some $i_0\in \mathcal{C}_j$. Note that $T_{i_0}$ is also the last round such that $\ell_{t,i'}>0$ for all $i'\in \mathcal{C}_j$. Then we have for any $i'\in [K]$:
	\begin{align*}
	\epsilon \le \sum_{i\in \mathcal{C}_j}\hat{p}_{T_{i_0},i} &= \frac{\sum_{i\in \mathcal{C}_j}\exp(-\eta\sum_{\tau=T^{(s)}+1}^{T_{i_0}-1}\hat{\ell}_{\tau,i})}{\sum_{i=1}^K\exp(-\eta\sum_{\tau=T^{(s)}+1}^{T_{i_0}-1}\hat{\ell}_{\tau, i})} \\
	&\le \frac{\sum_{i\in \mathcal{C}_j}\exp(-\eta\sum_{t = {T}_\lambda+1}^{T_{\lambda+1}}\hat{\ell}_{t,i}+\frac{\eta}{\epsilon})}{\sum_{i=1}^K\exp(-\eta\sum_{t = {T}_\lambda+1}^{T_{\lambda+1}}\hat{\ell}_{t, i})} \\
	&\le \frac{|\mathcal{C}_j|\exp(-\eta\min_{i\in\mathcal{C}_j}\sum_{t = {T}_\lambda+1}^{T_{\lambda+1}}\hat{\ell}_{t,i}+\frac{\eta}{\epsilon})}{\exp(-\eta\sum_{t = {T}_\lambda+1}^{T_{\lambda+1}}\hat{\ell}_{t,i'})},
	\end{align*}
	where the second inequality is because $\hat{\ell}_{T_{i_0},i}\le\frac{1}{\epsilon}$ for all $i\in \mathcal{C}_j$ and the fact $\hat{\ell}_{t,i}\ge 0$ for all $t\in [T]$ and $i\in [K]$. Therefore, by rearranging terms, for any $i'\in[K]$ and any $j\in [\kappa]$, we have
	\begin{align*}
	\min_{i\in \mathcal{C}_j}\sum_{t = {T}_\lambda+1}^{T_{\lambda+1}}\hat{\ell}_{t,i}\le \sum_{t = {T}_\lambda+1}^{T_{\lambda+1}}\hat{\ell}_{t,i'}+\frac{1}{\epsilon}+\frac{1}{\eta}\ln \frac{K}{\epsilon}.
	\end{align*}
	Combining \pref{eq:clipped-inter} and choosing $i'=\argmin_{i\in [K]}\sum_{t = {T}_\lambda+1}^{T_{\lambda+1}}\hat{\ell}_{t,i}$ further show
	\begin{align*}
	(1-\kappa\epsilon)\sum_{t = {T}_\lambda+1}^{T_{\lambda+1}}\left<p_t, \hat{\ell}_t\right>-\min_{i\in [K]}\sum_{t = {T}_\lambda+1}^{T_{\lambda+1}}\hat{\ell}_{t,i}&\le\frac{\ln K}{\eta}+2\kappa\ln K+2\eta\kappa\left(\min_{i\in [K]}\sum_{t = {T}_\lambda+1}^{T_{\lambda+1}}\hat{\ell}_{t,i}+\frac{1}{\epsilon}+\frac{1}{\eta}\ln \frac{K}{\epsilon}\right) \\
	&\le \frac{\ln K}{\eta}+2\eta\kappa\min_{i\in [K]}\sum_{t = {T}_\lambda+1}^{T_{\lambda+1}}\hat{\ell}_{t,i}+3\kappa\ln K + 2\kappa\ln (KT)\\
	&\le \frac{\ln K}{\eta}+2\eta\kappa\min_{i\in [K]}\sum_{t = {T}_\lambda+1}^{T_{\lambda+1}}\hat{\ell}_{t,i}+5\kappa\ln (KT),
	\end{align*}
	where the second inequality is because $\epsilon=\max\{2\eta, \frac{1}{T}\}$. Finally, rearranging terms again shows
	\begin{align*}
	&\sum_{t = {T}_\lambda+1}^{T_{\lambda+1}}\left<p_t, \hat{\ell}_t\right>-\min_{i\in [K]}\sum_{t = {T}_\lambda+1}^{T_{\lambda+1}}\hat{\ell}_{t,i}\\
	&\le \frac{1}{1-\kappa\epsilon}\left(\frac{\ln K}{\eta}+\kappa(2\eta+\epsilon)\min_{i\in [K]}\sum_{t = {T}_\lambda+1}^{T_{\lambda+1}}\hat{\ell}_{t,i}+5\kappa\ln (KT)\right) \\
	&\le \frac{2\ln K}{\eta}+8\eta\kappa\min_{i\in [K]}\sum_{t = {T}_\lambda+1}^{T_{\lambda+1}}\hat{\ell}_{t,i}+10\kappa\ln (KT)+\frac{2K}{T}\min_{i\in [K]}\sum_{t = {T}_\lambda+1}^{T_{\lambda+1}}\hat{\ell}_{t,i}.
	\end{align*}
	The second inequality is because $\eta\le \frac{1}{4\kappa}$, which means $\kappa\epsilon\le \max\{\frac{\kappa}{T},2\eta\kappa\}\le \frac{1}{2}$, as $T\ge 2K$.
\end{proof}

Next we bound the regret within a meta-epoch. In the remaining of the section, we use $\mathcal{T}_\mtep$ to denote the set of rounds in meta-epoch $\mtep$.
\begin{lemma}\label{lem:ClippedEXP3.Gthm}
    For any meta-epoch $\mtep$ in the first stage, \pref{alg:ClippedEXP3.G} guarantees
    \begin{align*}
    &\sum_{t\in\mathcal{T}_\mtep}\left<p_t, \hat{\ell}_t\right>-\min_{i\in [K]}\sum_{t\in\mathcal{T}_\mtep}\hat{\ell}_{t,i}\le  \mathcal{\tilde{O}}\left(\min\left\{\sqrt{\alpha T},\sqrt{\kappa \min_{i\in [K]}\sum_{t\in\mathcal{T}_\mtep}\hat{\ell}_{t,i}}\right\}+\kappa+\frac{K}{T}\min_{i\in [K]}\sum_{t\in\mathcal{T}_\mtep}\hat{\ell}_{t,i}\right).
    \end{align*}
\end{lemma}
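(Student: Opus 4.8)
The plan is to split meta-epoch $\mtep$ into its constituent epochs $\lambda=1,\dots,\Lambda$, control the per-epoch comparator loss via the doubling condition, invoke \pref{lem:epoch} for each epoch, and finally collapse the sum using the geometric learning-rate schedule. Throughout, write $\eta_\lambda$ for the learning rate used in epoch $\lambda$, so that $\eta_\lambda=2^{1-\lambda}\eta_{\text{init}}$ with $\eta_{\text{init}}=\nicefrac{1}{4\kappa}$; write $\epsilon_\lambda=\max\{2\eta_\lambda,\nicefrac{1}{T}\}$ for the associated clipping threshold; write $M_\lambda\triangleq\min_{i\in[K]}\sum_{t=T_\lambda+1}^{T_{\lambda+1}}\hat\ell_{t,i}$ for the epoch-$\lambda$ comparator loss; and set $M\triangleq\min_{i\in[K]}\sum_{t\in\mathcal{T}_\mtep}\hat\ell_{t,i}$. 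Taking $i^\star_\mtep$ to be the arm attaining $M$ and deleting the nonnegative terms outside epoch $\lambda$ shows $M\ge M_\lambda$ for every $\lambda$, a monotonicity we will reuse.

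The first step is to bound $M_\lambda$. If epoch $\lambda$ terminates through the doubling test, then at the round just before the test fires the running comparator loss is below $\nicefrac{1}{(4\eta_\lambda^2\kappa)}$, while the single-round increment of that running minimum is at most $\max_i\hat\ell_{t,i}\le\nicefrac{1}{\epsilon_\lambda}$: in a self-aware graph any clique $\mathcal{C}_j\ni i$ surviving the clipping satisfies $\mathcal{C}_j\subseteq\Nin(i)$ and carries probability more than $\epsilon_\lambda$ after renormalization, so $\hat\ell_{t,i}\le\nicefrac{1}{\epsilon_\lambda}$ (and it is $0$ when $i$'s clique is clipped). Since $\epsilon_\lambda\ge2\eta_\lambda$ and $\eta_\lambda\le\nicefrac{1}{4\kappa}$, this overshoot is at most half the threshold, giving $M_\lambda\le\nicefrac{1}{(2\eta_\lambda^2\kappa)}$; if instead epoch $\lambda$ runs until the game ends then the test never fired and $M_\lambda<\nicefrac{1}{(4\eta_\lambda^2\kappa)}$ outright. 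In both cases $\eta_\lambda\kappa M_\lambda\le\nicefrac{1}{(2\eta_\lambda)}$ and $\nicefrac{K}{T}M_\lambda\le\nicefrac{K}{(2T\eta_\lambda^2\kappa)}$, so \pref{lem:epoch} bounds the regret of epoch $\lambda$ by $\otil\!\left(\nicefrac{1}{\eta_\lambda}+\kappa+\nicefrac{K}{(T\eta_\lambda^2\kappa)}\right)$.

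Summing over $\lambda$, the geometric schedule gives $\sum_\lambda\nicefrac{1}{\eta_\lambda}\le\nicefrac{2}{\eta_\Lambda}$ and $\sum_\lambda\nicefrac{1}{\eta_\lambda^2}\le\nicefrac{4}{(3\eta_\Lambda^2)}$, and $\Lambda=\otil(1)$ because within a meta-epoch $\eta_\lambda$ never drops below $\sqrt{\nicefrac{1}{\alpha T}}$ (the instant it would, a new meta-epoch starts), so the meta-epoch regret is $\otil\!\left(\nicefrac{1}{\eta_\Lambda}+\kappa+\nicefrac{K}{(T\eta_\Lambda^2\kappa)}\right)$, entirely governed by the final learning rate $\eta_\Lambda$. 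It remains to compare $\nicefrac{1}{\eta_\Lambda}$ to $\min\{\sqrt{\alpha T},\sqrt{\kappa M}\}$. If $\Lambda\ge2$ then epoch $\Lambda-1$ fired the doubling test yet did not trigger a new meta-epoch, which gives two facts at once: on one hand $\eta_\Lambda>\sqrt{\nicefrac{1}{\alpha T}}$, i.e. $\nicefrac{1}{\eta_\Lambda}<\sqrt{\alpha T}$; on the other hand $M\ge M_{\Lambda-1}\ge\nicefrac{1}{(4\eta_{\Lambda-1}^2\kappa)}=\nicefrac{1}{(16\eta_\Lambda^2\kappa)}$, i.e. $\nicefrac{1}{\eta_\Lambda}\le4\sqrt{\kappa M}$ and $\nicefrac{K}{(T\eta_\Lambda^2\kappa)}\le\nicefrac{16KM}{T}$. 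If $\Lambda=1$ then $\eta_\Lambda=\eta_{\text{init}}=\nicefrac{1}{4\kappa}$ and every term above is $O(\kappa)$ (using $T\ge2K$). Combining via $\min\{a,b+c\}\le\min\{a,c\}+b$ and $\min\{\sqrt{\alpha T},4\sqrt{\kappa M}\}\le4\min\{\sqrt{\alpha T},\sqrt{\kappa M}\}$ yields the claimed $\otil\!\left(\min\{\sqrt{\alpha T},\sqrt{\kappa M}\}+\kappa+\nicefrac{K}{T}M\right)$.

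I expect the main obstacle to be this last step: the doubling rule must simultaneously certify that the learning rate has shrunk enough for the $\sqrt{\alpha T}$ term to be in force and that the accumulated comparator loss has grown enough to ``pay'' for each halving so that the $\sqrt{\kappa M}$ term is in force, and one has to keep both certificates consistent while telescoping geometrically across epochs. The clipping step is load-bearing throughout—it underlies the within-epoch bound \pref{lem:epoch} and supplies the per-round estimate bound $\hat\ell_{t,i}\le\nicefrac{1}{\epsilon_\lambda}$ needed to control the overshoot—and the degenerate case $\Lambda=1$, in which no halving ever happens, has to be handled separately but only contributes the harmless additive $O(\kappa)$.
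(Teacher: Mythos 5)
Your proof is correct and follows essentially the same route as the paper's own: per-epoch bound from \pref{lem:epoch}, the doubling condition used both to cap $M_\lambda$ (upper bound within an epoch) and to lower-bound $M$ at the next-to-last halving, the meta-epoch rule giving $\nicefrac{1}{\eta_\Lambda} < \sqrt{\alpha T}$, and a geometric telescope over epochs. The only cosmetic difference is in handling the $\nicefrac{K}{T}M_\lambda$ piece: the paper simply sums $\sum_\lambda M_\lambda \le M$ directly, whereas you convert $M_\lambda$ to $\nicefrac{1}{(\eta_\lambda^2\kappa)}$ via the doubling test, telescope the $\nicefrac{1}{\eta_\lambda^2}$ geometrically, and convert back using $\nicefrac{1}{\eta_\Lambda^2}\le 16\kappa M$ — a harmless detour that lands in the same place. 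Your overshoot control (clipping gives $\hat\ell_{t,i}\le\nicefrac{1}{\epsilon_\lambda}$, hence $M_\lambda\le\nicefrac{1}{(2\eta_\lambda^2\kappa)}$) matches the paper's use of ``the update rule and $\hat\ell_{T_{\lambda+1},i}\le\nicefrac{1}{\epsilon_\lambda}$,'' and your treatment of the $\Lambda=1$ corner case is the same $O(\kappa)$ absorption (the paper states $\otil(K)$ but the mechanism is identical since $\eta_1=\nicefrac{1}{4\kappa}$ and $T\ge 2K$). One small note: the $\Lambda=\otil(1)$ observation you invoke to control $\sum_\lambda\kappa$ is correct but unnecessary — the paper instead absorbs $\kappa$ into $\nicefrac{1}{\eta_\lambda}$ per epoch using $\eta_\lambda\le\nicefrac{1}{4\kappa}$, which makes the telescoping cleaner.
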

\begin{proof}
	 Let $\eta_{\lambda}=2^{1-\lambda}\eta_{1}$ and $\epsilon_\lambda=\max\{2\eta_{\lambda}, \nicefrac{1}{T}\}$ be the value of $\eta$ and $\epsilon$ during epoch $\lambda$ and let $\lambda^\star$ be the index of the last epoch. Consider the regret in epoch $\lambda$. Using \pref{lem:epoch}, we know that
	\begin{align*}
	&\sum_{T_{\lambda}+1}^{T_{\lambda+1}}\left<p_t, \hat{\ell}_t\right>-\min_{i\in [K]}\sum_{t=T_{\lambda}+1}^{T_{\lambda+1}}\hat{\ell}_{t,i}\\
    &= \mathcal{\tilde{O}}\left(\frac{1}{\eta_{\lambda}}+\eta_{\lambda}\kappa\min_{i\in [K]}\sum_{t= {T}_\lambda+1}^{T_{\lambda+1}}\hat{\ell}_{t,i}+\kappa+\frac{K}{T}\min_{i\in [K]}\sum_{t= {T}_\lambda+1}^{T_{\lambda+1}}\hat{\ell}_{t,i}.\right)\\
	&=\mathcal{\tilde{O}}\left( \frac{1}{\eta_{\lambda}}+\frac{\eta_{\lambda}\kappa}{\epsilon_{\lambda}} + \kappa+\frac{K}{T}\min_{i\in [K]}\sum_{t=T_{\lambda}+1}^{T_{\lambda+1}}\hat{\ell}_{t,i}\right)\tag{the update rule and $\hat{\ell}_{T_{\lambda+1},i}\le \frac{1}{\epsilon_{\lambda}}$ for all $i\in [K]$} \\
    &=\mathcal{\tilde{O}}\left( \frac{1}{\eta_{\lambda}}+ \kappa+\frac{K}{T}\min_{i\in [K]}\sum_{t=T_{\lambda}+1}^{T_{\lambda+1}}\hat{\ell}_{t,i}\right) \tag{$\epsilon_{\lambda}=\max\{\frac{1}{T},2\eta_{\lambda}\}$}\\
	&=\mathcal{\tilde{O}}\left( \frac{1}{\eta_{\lambda}}+\frac{K}{T}\min_{i\in [K]}\sum_{t=T_{\lambda}+1}^{T_{\lambda+1}}\hat{\ell}_{t,i}\right).\tag{$\eta_{\lambda}\le \eta_1 = \frac{1}{4\kappa}$}
	\end{align*}
	Taking a summation from $\lambda=1,2,\dots,\lambda^\star$, we have
	\begin{align}\label{eq:etalambdastar}
		\sum_{t\in\mathcal{T}_\mtep}\inner{p_t, \hat{\ell}_t}-\min_{i\in [K]}\sum_{t\in \mathcal{T}_\mtep}\hat{\ell}_{t,i}&\le \sum_{\lambda=1}^{\lambda^\star}\mathcal{\tilde{O}}\left( \frac{1}{\eta_{\lambda}}+\frac{K}{T}\min_{i\in [K]}\sum_{t=T_{\lambda}+1}^{T_{\lambda+1}}\hat{\ell}_{t,i}\right)\nonumber\\
        &\le\mathcal{\tilde{O}}\left( \frac{1}{\eta_{\lambda^\star}}+\frac{K}{T}\min_{i\in [K]}\sum_{t\in\mathcal{T}_\mtep}\hat{\ell}_{t,i}\right).
	\end{align}
	Now we come to bound $\frac{1}{\eta_{\lambda^\star}}$. If $\lambda^\star=1$, we have $\sum_{t\in\mathcal{T}_\mtep}\inner{p_t, \hat{\ell}_t}-\min_{i\in [K]}\sum_{t\in \mathcal{T}_\mtep}\hat{\ell}_{t,i}\le \mathcal{\tilde{O}}(K)+\frac{2K}{T}\min_{i\in [K]}\sum_{t\in \mathcal{T}_\mtep}\hat{\ell}_{t,i}$. If $\lambda^\star\ge 2$,  consider the last round of epoch $\lambda^\star-1$. According to the update rule of $\eta$, we have
	\begin{align*}
	\frac{1}{\eta_{\lambda^\star-1}^2}\le 4\kappa\min_{i\in [K]}\sum_{t=T_{\lambda^\star-1}+1}^{T_{\lambda^\star}}\hat{\ell}_{t,i}\le 4\kappa\min_{i\in [K]}\sum_{t\in\mathcal{T}_\mtep}\hat{\ell}_{t,i}.
	\end{align*}
	Therefore, we have $\frac{1}{\eta_{\lambda^\star}}\le\sqrt{16\kappa\min_{i\in [K]}\sum_{t\in\mathcal{T}_\mtep}\hat{\ell}_{t,i}}$. In addition, note that $\eta_{\lambda^\star}\ge \sqrt{\frac{1}{\alpha T}}$ by \pref{line: start meta-epoch} of the algorithm. So $\frac{1}{\eta_{\lambda^\star}}=\mathcal{\tilde{O}}\left(\min\left\{\sqrt{\alpha T},\sqrt{\kappa\min_{i\in [K]}\sum_{t\in\mathcal{T}_\mtep}\hat{\ell}_{t,i}}\right\} + \kappa\right)$. Plugging this into \pref{eq:etalambdastar} completes the proof.
\end{proof}

Now we are ready to prove \pref{thm:Corralthm}.
\begin{proof}{\textbf{of \pref{thm:Corralthm}}}
    We first show that with high probability, the algorithm does not enter the second stage if $\alpha T\ge 64\kappa L_\star$. Let $\mtep^\star\le\log_2 T$ be the number of meta-epochs executed in the first stage. Define random variables as follows:
    \begin{align*}
        R_\mtep\triangleq\min_{i\in [K]}\sum_{t\in\mathcal{T}_\mtep}\hat{\ell}_{t,i},\quad \mtep\in [\mtep^\star].
    \end{align*}
    As we reset all the parameters for each meta-epoch, we have
    \begin{align*}
        \mathbb{E}\left[R_\mtep \,\middle\vert\, R_1,\dots,R_{\mtep-1}\right]\le \mathbb{E}\left[\min_{i\in [K]}\sum_{t=1}^T\hat{\ell}_{t,i}\right]\le \min_{i\in [K]}\mathbb{E}\left[\sum_{t=1}^T\hat{\ell}_{t,i}\right]=L_\star.
    \end{align*}
    On the other hand, according to \pref{line: eta is small}, the learning rate $\eta$ used at the last round of each meta-epoch $\mtep$ is at most $\sqrt{\frac{4}{\alpha T}}$ before being halved. Combining with \pref{line: update rule}, we have $\alpha T\le \frac{4}{\eta^2}\le 16\kappa R_\mtep$. Now suppose $\alpha T\ge 64\kappa L_\star$. Using Markov inequality, we have
    \begin{align*}
        \text{Prob}\left[R_\mtep\ge \frac{\alpha T}{16\kappa} \,\middle\vert\, R_1,\dots,R_{\mtep-1}\right]\le \frac{16\kappa L_\star}{\alpha T}\le \frac{1}{4},~\forall \mtep \in[m^\star].
    \end{align*}
    Therefore, the probability that the algorithm reaches the second stage when $\alpha T\ge 64\kappa L_\star$ is upper bounded as follows
    \begin{align*}
        \text{Prob}\left[\text{\pref{alg:ClippedEXP3.G} reaches the second stage}\right]&\le \prod_{\mtep=1}^{\log_2 T} \text{Prob}\left[R_\mtep\ge \frac{\alpha T}{16\kappa}\,\middle\vert\,R_1,\dots,R_{\mtep-1}\right]\\
        &\le \left(\frac{1}{4}\right)^{\log_2 T}=\frac{1}{T^2}.
    \end{align*}
    This shows that it is unlikely to reach the second stage when $\alpha T\ge 64\kappa L_\star$. Now consider the expected regret when $\alpha T\ge 64\kappa L_\star$. With probability $1-\frac{1}{T^2}$, the regret only comes from the first stage. According to \pref{lem:ClippedEXP3.Gthm}, we have
    \begin{align*}
        \Reg &\le \mathbb{E}\left[\sum_{\mtep=1}^{\mtep^\star}\left(\sum_{t\in \mathcal{T}_\mtep}\inner{p_t, \hat{\ell}_t}-\min_{i\in [K]}\sum_{t\in\mathcal{T}_\mtep}\hat{\ell}_{t,i}\right)\right]+\frac{1}{T^2}\cdot T\\
        &\le \mathcal{\tilde{O}}\left(\mathbb{E}\left[\min\left\{\sqrt{\alpha T}, \sqrt{\kappa\min_{i\in [K]}\sum_{t=1}^T\hat{\ell}_{t,i}}\right\}\right]+K\right)\\
        &\le \mathcal{\tilde{O}}\left(\min\left\{\sqrt{\alpha T}, \sqrt{\kappa L_\star}\right\}+K\right).
    \end{align*}
    On the other hand, when $\alpha T\le 64\kappa L_\star$, the regret is bounded by the sum of the worst-case regret from both stages. According to \pref{thm:alphaTEXP3} and \pref{lem:ClippedEXP3.Gthm}, we arrive at
    \begin{align*}
        \Reg &\le \mathbb{E}\left[\sum_{\mtep=1}^{\mtep^\star}\left(\sum_{t\in\mathcal{T}_\mtep}\inner{p_t, \hat{\ell}_t}-\min_{i\in [K]}\sum_{t\in \mathcal{T}_\mtep}\hat{\ell}_{t,i}\right)\right] +\mathcal{\tilde{O}}\left(\sqrt{\alpha T}+K^2\right)\\
        &\le \mathcal{\tilde{O}}\left(\sqrt{\alpha T}+K^2\right) = \mathcal{\tilde{O}}\left(\min\left\{\sqrt{\alpha T}, \sqrt{\kappa L_\star}\right\}+K^2\right).
    \end{align*}
    Combining the two cases completes the proof.
\end{proof}

\section{Omitted Details for \pref{sec:weakly}}
\label{app:LowerBound}\label{app:generalweak}
In this section, we provide omitted details for \pref{sec:weakly}, including the proof of \pref{thm:lowerbound} (\pref{app:lowerbound}), the proof of \pref{thm:adahomethm} (\pref{app:bipartite}), an adaptive version of \pref{alg:adahome} for \bigraph and its analysis (\pref{app:adaptivebipartite}), the proof of \pref{thm:adahome-general-thm} (\pref{app:GeneralCase}), and an adaptive version of \pref{alg:adahome} for general weakly observable graphs and its analysis (\pref{app:DoublingTrickGeneralCase}). For notational convenience, we use $\one_U$ to denote a vector in $\mathbb{R}^K$ whose $i$-th coordinate is $1$ if $i\in U$ and $0$ otherwise. We also define $h(x)\triangleq x-1-\ln x$ so that for a hybrid regularizer of the form $\psi(p)=\frac{1}{\eta}\sum_{{i}\in S}\ln \frac{1}{p_i}+\frac{1}{\bar{\eta}}\sum_{i\in \bar{S}}p_i\ln p_i$, its associated Bregman divergence is $D_\psi(p,q)=\frac{1}{\eta}\sum_{{i}\in S}h(\nicefrac{p_i}{q_i})+\frac{1}{\bar{\eta}}\sum_{i\in \bar{S}}p_i\ln (\nicefrac{p_i}{q_i})$.
\subsection{Proof of \pref{thm:lowerbound} }\label{app:lowerbound}
\begin{proof}
	  Fix any $\epsilon \in (0,1/3)$.
     Since the feedback graph is weakly observable, there must exist two nodes $u$ and $v$, such that $u$ does not have a self-loop and $v$ cannot observe $u$.
     Consider the following environment: $\forall t\in[T]$, $\ell_{t,u}=0,~\ell_{t,v}=T^{-b}$ for some $b$ such that $\epsilon < b < 1-2\epsilon$, and $\ell_{t,w}=1, \forall w\ne u,v$.
     We call nodes other than $u$ and $v$ ``bad arms''.
     Note that the loss of each node is constant over time and $L_\star=0$.
     
     Since $\mathcal{A}$ achieves $\otil(1)$ regret when $L_\star=0$, the expected number of times that bad arms are selected by $\mathcal{A}$ (denoted by $\mathcal{N}_{\text{bad}}$) is at most ${\mathcal{O}}(T^{\epsilon})$. 
     Similarly, the expected number of times that $v$ is selected by $\mathcal{A}$ (denoted by $\mathcal{N}_v$) is at most ${\mathcal{O}}( T^{b+\epsilon})$.
     The condition $\mathcal{N}_{\text{bad}}={\mathcal{O}}(T^{\epsilon})$ implies that we can find a interval of length $\frac{T}{2\mathcal{N}_{\text{bad}}+1}=\Omega(T^{1-\epsilon})$ such that $\mathcal{A}$ selects bad arms less than $\frac{1}{2}$ times in expectation in this interval.
     Therefore, by Markov inequality, we have with probability $\frac{1}{2}$, $\mathcal{A}$ does not select bad arms at all in this interval.
     
     Now consider creating another environment by switching the loss of $u$ to $1$ only in this interval. Note that with probability $\frac{1}{2}$, $\mathcal{A}$ cannot notice the change because $u$'s loss is not revealed  if none of the bad arms is selected.
     Since $\mathcal{N}_v ={\mathcal{O}}( T^{b+\epsilon})$, we conclude that $\mathcal{A}$ suffers expected loss ${{\Omega}}(T^{1-\epsilon}-\mathcal{N}_v)=\Omega(T^{1-\epsilon})$ in this interval using the condition $b < 1-2\epsilon$.
     Moreover, $v$ becomes the best arm in this new environment and $L_\star=T^{1-b}$.
     Therefore, $\mathcal{A}$ suffers expected regret ${{\Omega}}(T^{1-\epsilon} - T^{1-b}) = {{\Omega}}(T^{1-\epsilon})$ since $\epsilon<b$, which completes the proof.
\end{proof}

\subsection{Proofs for \pref{thm:adahomethm}}\label{app:bipartite}
We first prove \pref{lem:lemadahome}, which will be useful for the proofs of \pref{thm:adahomethm} and \pref{thm:adahome-general-thm}.
\begin{proof}{\textbf{of \pref{lem:lemadahome}}}
Let $\widetilde{p}_{t+1} = \argmin_{p\in\mathbb{R}_+^K}\left\{\inner{p, \hat{\ell}_t+a_t}+D_{\psi}(p, p_t)\right\}$. One can verify that $p_{t+1}=\argmin_{p\in \Omega}\left\{D_{\psi}(p,\widetilde{p}_{t+1})\right\}$ and for any $u\in \Omega$, we have
\begin{align*}
	&\inner{p_t-u, \hat{\ell}_t+a_t}\\
	&= D_{\psi}(u,p_t)-D_{\psi}(u, \widetilde{p}_{t+1})+D_{\psi}(p_t, \widetilde{p}_{t+1})\\
    &\le D_{\psi}(u,p_t)-D_{\psi}(u, p_{t+1})+D_{\psi}(p_t, \widetilde{p}_{t+1})\\
	&= D_{\psi}(u,p_t)-D_{\psi}(u, p_{t+1}) \\
	&\quad + \frac{1}{\eta}\sum_{i\in S}\left(\frac{p_{t,i}}{\widetilde{p}_{t+1,i}}-1-\ln\left(\frac{p_{t,i}}{\widetilde{p}_{t+1,i}}\right)\right)+ \frac{1}{\bar{\eta}}\sum_{i\in \bar{S}}\left(p_{t,i}\ln\frac{p_{t,i}}{\widetilde{p}_{t+1,i}}-\left(p_{t,i}-\widetilde{p}_{t+1,i}\right)\right),
\end{align*}
where the second inequality is by the generalized Pythagorean theorem. According to the choice of $\psi$, one can also obtain the close-form of $\widetilde{p}_{t+1}$, which satisfies
\begin{align*}
 &\frac{p_{t,i}}{\widetilde{p}_{t+1,i}}=1+\eta p_{t,i}(\hat{\ell}_{t,i}+a_{t,i}),\mbox{ for }i\in S;~\ln\left(\frac{p_{t,i}}{\widetilde{p}_{t+1,i}}\right)=\bar{\eta}(\hat{\ell}_{t,i}+a_{t,i}),\mbox{ for }i\in \bar{S}.
\end{align*}
Plugging this into the previous inequality shows
\begin{align*}
	&\inner{p_t-u, \hat{\ell}_t+a_t}\\
	&\le D_{\psi}(u,p_t)-D_{\psi}(u,p_{t+1})+ \frac{1}{\eta}\sum_{i\in S}\left(\eta p_{t,i}\left(\hat{\ell}_{t,i}+a_{t,i}\right)-\ln\left(1+\eta p_{t,i}\left(\hat{\ell}_{t,i}+a_{t,i}\right)\right)\right)\\
	&\quad+ \frac{1}{\bar{\eta}}\sum_{i\in \bar{S}}\left(\bar{\eta}p_{t,i}\left(\hat{\ell}_{t,i}+a_{t,i}\right)-p_{t,i}+p_{t,i}\exp\left(-\bar{\eta}\left(\hat{\ell}_{t,i}+a_{t,i}\right)\right)\right)
\end{align*}
Using the facts $\ln(1+x)\ge x-x^2$ and $\exp(-x)\le 1-x+x^2$ for any $x\ge 0$, and realizing $\hat{\ell}_{t,i}+a_{t,i}\ge 0$ holds, we further have
\begin{equation}
    \left<p_t-u, \hat{\ell}_t+a_t\right> 
    \le D_{\psi}(u, p_t)-D_{\psi}(u, p_{t+1})+\sum_{i\in S}\eta p_{t,i}^2\left(\hat{\ell}_{t,i}+a_{t,i}\right)^2+\sum_{i\in \bar{S}}\bar{\eta}p_{t,i}\left(\hat{\ell}_{t,i}+a_{t,i}\right)^2.\label{eq:lm6}
\end{equation}
Next we use the conditions for $\eta$ and $\bar{\eta}$ and the concrete form of $a_t$ to bound the last two terms as
\begin{align*}
        &\sum_{i\in S}\eta p_{t,i}^2\left(\hat{\ell}_{t,i}+a_{t,i}\right)^2+\sum_{i\in \bar{S}}\bar{\eta} p_{t,i}\left(\hat{\ell}_{t,i}+a_{t,i}\right)^2 \\
        &= \sum_{i\in S}\eta p_{t,i}^2\left(\hat{\ell}_{t,i}+2\eta p_{t,i}\hat{\ell}_{t,i}^2\right)^2+\sum_{i\in \bar{S}}\bar{\eta}p_{t,i}\left(\hat{\ell}_{t,i}+2\bar{\eta}\hat{\ell}_{t,i}^2\right)^2 \\
        &= \sum_{i\in S}\eta p_{t,i}^2\hat{\ell}_{t,i}^2\left(1+2\eta p_{t,i}\hat{\ell}_{t,i}\right)^2 + \sum_{i\in \bar{S}}\bar{\eta}p_{t,i}\hat{\ell}_{t,i}^2\left(1+2\bar{\eta}\hat{\ell}_{t,i}\right)^2\\
        &\le \sum_{i\in S}\eta p_{t,i}^2\hat{\ell}_{t,i}^2\left(1+2\eta\right)^2 + \sum_{i\in \bar{S}}\bar{\eta}p_{t,i}\hat{\ell}_{t,i}^2\left(1+\frac{2}{5}\right)^2 \tag{$p_{t,i}\hat{\ell}_{t,i}\le 1,i\in S;\bar{\eta}\hat{\ell}_{t,i}\le\frac{1}{5},i\in \bar{S}$}\\
        &\le \left<p_t,a_t\right>\tag{$\eta\le\frac{1}{5}$}.
    \end{align*}
    The proof is completed by plugging the inequality above into \pref{eq:lm6} and rearranging terms. 
\end{proof}
\begin{proof}{\textbf{of \pref{thm:adahomethm}}}
      The condition of \pref{lem:lemadahome} holds since according to the definition of $\bar{\eta}$ and $\Omega$, we have $\frac{\bar{\eta}}{W_{t,i}}\le\frac{\bar{\eta}}{\sqrt{\bar{\eta}}} =\sqrt{\bar{\eta}}\le\frac{1}{5},~\forall t\in[T]$ and $i\in\bar{S}$. Thus, by \pref{lem:lemadahome}, we have for any $u\in \Omega$
     \begin{align*}
         \left<p_t-u,\hat{\ell}_t\right>&\le D_{\psi}(u,p_t)-D_{\psi}(u,p_{t+1})+\left<u, a_t\right>.
     \end{align*}
     Summing over $t\in[T]$, we have:
    \begin{align*}
        \sum_{t=1}^T\left<p_t-u, \hat{\ell}_t\right>&\le D_{\psi}(u, p_1)+\sum_{t=1}^T\left<u, a_t\right>= \frac{1}{\eta}\sum_{i\in S}h\left(\frac{u_i}{p_{1,i}}\right)+\frac{1}{\bar{\eta}}\sum_{i\in \bar{S}}u_i\ln \frac{u_i}{p_{1,i}} + \sum_{t=1}^T\left<u, a_t\right>.
    \end{align*}
    When comparing to a node $i\in S$, we set $u=\frac{1}{T}\cdot \one_S+\left(1-\frac{\slpn }{T}\right)\cdot e_{i} \in \Omega$. Using the definition of $p_1$, we have
    \begin{align*}
    D_{\psi}(u, p_1)&=\frac{1}{\eta}\sum_{i\in S}h\left(\frac{u_i}{p_{1,i}}\right)+\frac{1}{\bar{\eta}}\sum_{i\in \bar{S}}u_i\ln\frac{u_i}{p_{1,i}}\\
    &=\frac{1}{\eta}\sum_{j\ne i, j\in S}h\left(\frac{2\slpn}{T}\right)+\frac{1}{\eta}h\left(2s\left(1-\frac{s-1}{T}\right)\right)\\
    &\le \frac{s-1}{\eta}\left(\frac{2s}{T}-1-\ln\frac{2s}{T}\right)+\frac{1}{\eta}h(2s) \\
    &= \frac{s-1}{\eta}\left(\frac{2s}{T}-1-\ln\frac{2s}{T}\right)+\frac{1}{\eta}\left(2s-1-\ln 2s\right) \\
    &\le \frac{(s-1)\ln T}{\eta}+\frac{2s-1-s\ln 2s}{\eta}\le \frac{s\ln T}{\eta},
     \end{align*}
    The first inequality is because $h(y)$ is increasing when $y\ge 1$ and $T\ge 2K\ge 2s$. The second inequality is also because $T\ge 2s$ and the last one is because $2s-1-s\ln 2s\le 1\le \ln T$ for all $s> 0$. Therefore, we have
    \begin{equation}
        \begin{split}
        \sum_{t=1}^T\left<p_t-e_{i}, \hat{\ell}_t\right> &\le \frac{\slpn \ln T}{\eta}+2\eta\sum_{t=1}^Tp_{t,i}\hat{\ell}_{t,i}^2+\frac{2\eta}{T}\sum_{t=1}^T\sum_{j\in S}p_{t,j}\hat{\ell}_{t,j}^2+\frac{1}{T}\sum_{t=1}^T\sum_{j\in S}\hat{\ell}_{t,j} \\
        &\le \frac{\slpn \ln T}{\eta}+2\eta\sum_{t=1}^T\hat{\ell}_{t,i}+\frac{2}{T}\sum_{t=1}^T\sum_{j\in S}\hat{\ell}_{t,j}, 
        \label{eq:special case S}
        \end{split}
    \end{equation}
    where the second inequality is because $p_{t,i}\hat{\ell}_{t,i}\le 1$ for all $i\in S$ and $\eta\le\frac{1}{5}\le\frac{1}{2}$. When comparing with node $i\in \bar{S}$, let $\hat{i}_S^\star=\argmin_{i\in S}\sum_{t=1}^T\hat{\ell}_{t,i}$ and we choose $u=\sqrt{\bar{\eta}}\cdot e_{\hat{i}_S^\star}+\frac{1}{T}\cdot \one_S + \left(1-\frac{\slpn }{T}-\sqrt{\bar{\eta}}\right)e_{i} \in \Omega$. According to the choice of $p_1$, we bound the Bregman divergence term as
     \begin{align*}
         &D_{\psi}(u, p_1)\\
         &=\frac{1}{\eta}\sum_{i\in S}h\left(\frac{u_i}{p_{1,i}}\right)+\frac{1}{\bar{\eta}}\sum_{i\in \bar{S}}u_i\ln\frac{u_i}{p_{1,i}}\\
         &=\frac{1}{\eta}\sum_{j\ne \hat{i}_S^\star, j\in S}h\left(\frac{2\slpn}{T}\right)+\frac{1}{\eta}h\left(2s\left(\frac{1}{T}+\sqrt{\bar{\eta}}\right)\right)+\frac{1}{\bar{\eta}}\left(1-\frac{\slpn }{T}-\sqrt{\bar{\eta}}\right)\ln\left(2\bar{s}\left(1-\frac{\slpn }{T}-\sqrt{\bar{\eta}}\right)\right)\\
         &\le \frac{s}{\eta}\left(\frac{2\slpn}{T}-1-\ln\left(\frac{2\slpn}{T}\right)\right)+\frac{1}{\eta}\left(\frac{2s}{T}+2s\sqrt{\bar{\eta}}-1-\ln\left(\frac{2s}{T}+2s\sqrt{\bar{\eta}}\right)\right)+\frac{1}{\bar{\eta}}\left(1-\frac{\slpn }{T}-\sqrt{\bar{\eta}}\right)\ln\left(2\bar{s}\right)\\
         &\le \frac{s\ln T}{\eta}+\frac{1}{\eta}\left(2s\sqrt{\bar{\eta}}+\ln T\right)+\frac{1}{\bar{\eta}}\ln\left(2\bar{s}\right)\\
         &\le \frac{\slpn\ln T+\slpn+\ln T}{\eta}+\frac{\ln (2\woslpn)}{\bar{\eta}}\le\frac{2s\ln T}{\eta}+\frac{\ln (2\woslpn)}{\bar{\eta}}.
    \end{align*}
    Here, the second inequality is because $T\ge 2K\ge 2\slpn$; the third inequality is because $\sqrt{\bar{\eta}}\le \frac{1}{5}\le \frac{1}{2}$; and the last inequality is because $\ln T\ge 1$ and $\slpn \ge 1$.
    Therefore, we have
    \begin{align}
        &\sum_{t=1}^T\left<p_t-e_{i},\hat{\ell}_t\right>\nonumber\\&\le\frac{2\slpn \ln T}{\eta}+\frac{\ln (2\woslpn)}{\bar{\eta}}+2\bar{\eta}\sum_{t=1}^T\hat{\ell}_{t,i}^2+\frac{2\eta}{T}\sum_{t=1}^T\sum_{j\in S}p_{t,j}\hat{\ell}_{t,j}^2+2\eta\sqrt{\bar{\eta}}\sum_{t=1}^Tp_{t,\hat{i}_{S}^\star}\hat{\ell}_{t,\hat{i}_{S}^\star}^2\nonumber\\
        &\quad +\sqrt{\bar{\eta}}\min_{i\in S}\sum_{t=1}^T\hat{\ell}_{t,i}+\frac{1}{T}\sum_{t=1}^T\sum_{j\in S}\hat{\ell}_{t,j} \nonumber\\
        &\le\frac{2\slpn \ln T}{\eta}+\frac{\ln (2\woslpn)}{\bar{\eta}}+2\sqrt{\bar{\eta}}\sum_{t=1}^T\hat{\ell}_{t,i}+2\sqrt{\bar{\eta}}\min_{i\in S}\sum_{t=1}^T\hat{\ell}_{t,i} + \frac{2}{T}\sum_{t=1}^T\sum_{j\in S}\hat{\ell}_{t,j},\label{eq:special case barS}
    \end{align}
    where the second inequality is because $\hat{\ell}_{t,i}\le\frac{1}{\sqrt{\bar{\eta}}}$ for nodes $i\in \bar{S}$, $p_{t,i}\hat{\ell}_{t,i}\le 1$ for all $i\in S$, $t\in [T]$ and $\eta\le \frac{1}{5}\le\frac{1}{2}$.

    Now we take expectation over both sides. If $i\in S$, then
    \begin{align*}
        \Reg_i = \mathbb{E}\left[\sum_{t=1}^T\left<p_t-e_{i}, \hat{\ell}_t\right>\right]\le\frac{\slpn \ln T}{\eta}+2\eta L_i + 2\slpn.
    \end{align*}
    If $i\in \bar{S}$, then
    \begin{align*}
        \Reg_i = \mathbb{E}\left[\sum_{t=1}^T\left<p_t-e_{i}, \hat{\ell}_t\right>\right] &\le\frac{2\slpn \ln T}{\eta}+\frac{2\ln K}{\bar{\eta}}+2\sqrt{\bar{\eta}}L_i+2\sqrt{\bar{\eta}}\mathbb{E}\left[\min_{i\in S}\sum_{t=1}^T\hat{\ell}_{t,i}\right] + 2\slpn \\
        &\le\frac{2\slpn \ln T}{\eta}+\frac{2\ln K}{\bar{\eta}}+2\sqrt{\bar{\eta}}L_i+2\sqrt{\bar{\eta}}\min_{i\in S}\mathbb{E}\left[\sum_{t=1}^T\hat{\ell}_{t,i}\right] + 2\slpn\\
        &=\frac{2\slpn \ln T}{\eta}+\frac{2\ln K}{\bar{\eta}}+2\sqrt{\bar{\eta}}L_i+2\sqrt{\bar{\eta}}L_{\isstar}+ 2\slpn.
    \end{align*}
    The first inequality is because $\ln (2\woslpn)\le \ln (2K)\le 2\ln K$ as $K\ge 2$, and the second inequality is because of Jensen's inequality. Finally, choosing $\eta=\min\{\sqrt{\nicefrac{\slpn }{L_{\isstar}}}, \nicefrac{1}{5}\}$, $\bar{\eta}=\min\{L_{\isstar}^{-\nicefrac{2}{3}}, \nicefrac{1}{25}\}$, we have for $i\in S$
    \begin{align*}
    	\Reg_i\le \Reg_{\isstar}\le \mathcal{O}\left(s\ln T+\sqrt{sL_{\isstar}}\right)\le\mathcal{\tilde{O}}\left(\sqrt{sL_i}+s\right).
    \end{align*}
    For $i\in \bar{S}$, if $L_i\le L_{\isstar}$, we have
    \begin{align*}
    	\Reg_i \le \frac{2s\ln T}{\eta}+\frac{2\ln K}{\bar{\eta}}+4\sqrt{\bar{\eta}}L_{\isstar}+2s\le \mathcal{\tilde{O}}\left(L_{\isstar}^{\nicefrac{2}{3}}+\sqrt{sL_{\isstar}}+s\right).
    \end{align*}
    Otherwise, we have
    \begin{align*}
    	\Reg_i &\le \Reg_{\isstar} \le \mathcal{\tilde{O}}\left(\sqrt{sL_{\isstar}}+s\right).
    \end{align*}
    Combining the above results finishes the proof.
\end{proof}

\subsection{Adaptive Version of \pref{alg:adahome} for \BiGraph}\label{app:adaptivebipartite}

\setcounter{AlgoLine}{0}
 \begin{algorithm}[t]\caption{Adaptive Version of \pref{alg:adahome} for Directed Complete Bipartite Graphs}\label{alg:CHROME}
 \textbf{Input:} Feedback graph $G$ and parameter $\eta\le \frac{1}{5}$.\\
\textbf{Initialize:} \initp\\
\For{$\lambda=1,2,\dots$}{

\nl $p_t=p_1$, $\bar{\eta}=\slpn ^{-\frac{2}{3}}\eta^{\frac{4}{3}}$, $T_{\lambda}=t-1$.

\nl Define decision set $\Omega =\left\{p\in\Delta(K): \sum_{i\in S}p_i\ge \sqrt{\bar{\eta} }\right\}$\label{line: Doubling Adahome init_1}.

\nl Define hybrid regularizer $\psi (p)=\frac{1}{\eta }\sum_{i\in S}\ln \frac{1}{p_i}+\frac{1}{\bar{\eta} }\sum_{i\in \bar{S}}p_i\ln p_i$.\label{line: Doubling Adahome init_2}

\nl \While{$t\le T$}{
	\nl \label{line: Doubling Adahome rec loss}
	Play arm $i_t\sim {p_t}$ and receive feedback $\ell_{t,i}$ for all $i$ such that $i_t\in \Nin(i)$.
	
	\nl \label{line: Doubling Adahome construct loss}
	Construct estimator $\hat{\ell}_t$ such that $\hat{\ell}_{t,i}=\begin{cases}
	0, &\mbox{$p_{t,i}=0$},\\
	\frac{\ell_{t,i}}{W_{t,i}}\cdot\mathbbm{1}\{i_t\in\Nin(i)\}, &\mbox{$p_{t,i}>0$,}
	\end{cases}
	$,\\ where $W_{t,i}=\sum_{j\in \Nin (i)}p_{t,j}$. 
	
	\nl \label{line:Doubling Adahome a}
	Construct correction term $a_t$ such that 
	$a_{t,i} =\begin{cases}
	2\eta  p_{t,i}\hat{\ell}_{t,i}^2, &\mbox{for $i\in S$},\\
	2\bar{\eta} \hat{\ell}_{t,i}^2,  &\mbox{for $i\in \bar{S}$}.
	\end{cases}
	$
	
	\nl \label{line: Doubling Adahome OMD-Variant}
	Compute $\hat{p}_{t+1}=\argmin_{p\in\Omega }\left\{\inner{p,\hat{\ell}_{t}+a_{t}}+{\brgmd}_{\psi }(p,\hat{p}_{t})\right\}$.
	
	\nl \label{line: Doubling Adahome Clipping} Construct $p_{t+1}$ as follows, where $\mu =\frac{\eta \sqrt{\bar{\eta} }}{\slpn}$ : 
	$$
	p_{t+1,i} =\begin{cases}
	\frac{\hat{p}_{t+1,i}\mathbbm{1}\{\hat{p}_{t+1,i}\ge\mu \}}{\sum_{i'\in S}\hat{p}_{t+1,i'}\cdot\mathbbm{1}\{\hat{p}_{t+1,i'}\ge\mu \}}\cdot\sum_{i'\in S}\hat{p}_{t+1,i'},&\mbox{if $i\in S$},\\
\hat{p}_{t+1,i},&\mbox{if $i\in \bar{S}$}.
\end{cases}
$$
\nl \label{line: Doubling Adahome Update Rule} \If{$\frac{\slpn}{\eta }\le \eta \min_{i\in S}\sum_{\tau = T_{\lambda}+1}^t\hat{\ell}_{t,i}$}{\nl \label{line: Doubling Adahome Reset} $\eta \leftarrow \eta/2$, $t\leftarrow t+1$.\\ \textbf{Break}.}
$t\leftarrow t+1$.
}}
\end{algorithm}

In order to make \pref{alg:adahome} parameter-free, one may consider directly applying doubling trick.
However, one technical issue comes from analyzing the last round before each restart where the loss estimator might be too large.
To address this issue, we combine \pref{alg:adahome} and the clipping technique, together with a double trick. 

The full algorithm is described in \pref{alg:CHROME}. 
Similar to previous doubling trick algorithms, we start from some large $\eta$ and $\bar{\eta}$, run the procedures of \pref{alg:adahome} (\pref{line: Doubling Adahome rec loss} to \pref{line: Doubling Adahome OMD-Variant}) and reduce the learning rate when the accumulated estimated loss is too large (\pref{line: Doubling Adahome Update Rule} and \pref{line: Doubling Adahome Reset}).
{The key difference is that we again follow the clipping idea of \citep{allenberg2006hannan}:
after computing $\hat{p}_{t+1}$ through OMD, we do clipping with threshold $\mu$ and renormalization for nodes in $S$ (\pref{line: Doubling Adahome Clipping}).
In this way, $\hat{\ell}_{t,i}$ defined in \pref{line: Doubling Adahome construct loss} is well upper bounded for all $i\in [K]$, $t\in [T]$, which is crucial for the doubling trick analysis. Formally, we prove the following theorem.
\begin{theorem}
   \pref{alg:CHROME} with $\eta=\min\{\frac{1}{5}, \frac{1}{\slpn}\}$ guarantees for any directed complete bipartite graph:
    \begin{align*}
\Reg = \begin{cases} \mathcal{\tilde{O}}\left(\sqrt{\slpn L_{\isstar}}+\slpn ^2\right), &\mbox{if } i^\star\in S.\\ 
\mathcal{\tilde{O}}\left(L_{\isstar}^{\nicefrac{2}{3}}+\sqrt{\slpn L_{\isstar}}+\slpn ^2\right), & \mbox{if } i^\star\in \bar{S}. \end{cases}
    \end{align*}
\end{theorem}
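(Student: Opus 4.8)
The plan is a doubling-trick analysis over the epochs indexed by $\lambda$, where epoch $\lambda$ is the maximal run with learning rate $\eta_\lambda = 2^{1-\lambda}\eta_1$ (hence with $\bar\eta_\lambda = \slpn^{-2/3}\eta_\lambda^{4/3}$, clipping threshold $\mu_\lambda = \eta_\lambda\sqrt{\bar\eta_\lambda}/\slpn$, and decision set $\Omega_\lambda = \{p\in\Delta(K):\sum_{i\in S}p_i \ge \sqrt{\bar\eta_\lambda}\}$); write $\mathcal{I}_\lambda$ for its set of rounds and $\lambda^\star$ for the last epoch. As in the earlier proofs, clipped arms are never sampled and the clipping only makes $\E[\hat\ell_{t,i^\star}]\le\ell_{t,i^\star}$, so $\Reg \le \E[\sum_{t=1}^T\langle p_t - e_{i^\star},\hat\ell_t\rangle]$ and it suffices to bound this quantity. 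First I would prove a per-epoch regret bound against any fixed comparator, then sum over $\lambda$ and take expectation.

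\paragraph{Per-epoch bound.} Inside a fixed epoch the parameters are constant, so I would re-run the analysis of \pref{lem:lemadahome} and \pref{thm:adahomethm} on the \emph{unclipped} OMD iterates $\hat p_t$ and then transfer the bound to the played iterates $p_t$ exactly as in the proof of \pref{lem:epoch}. The structural facts I would use are: (i) by the directed complete bipartite structure and $\sum_{i\in S}p_{t,i}\ge\sqrt{\bar\eta_\lambda}$, every $i\in\bar S$ has $W_{t,i}\ge\sqrt{\bar\eta_\lambda}$, so the condition of \pref{lem:lemadahome} holds and $\hat\ell_{t,i}\le1/\sqrt{\bar\eta_\lambda}$ on $\bar S$; (ii) clipping with threshold $\mu_\lambda$ zeroes out at most $\slpn\mu_\lambda = \eta_\lambda\sqrt{\bar\eta_\lambda}$ of the $\ge\sqrt{\bar\eta_\lambda}$ total mass that $\hat p_t$ places on $S$, so the renormalization factor is at most $(1-\eta_\lambda)^{-1}\le5/4$, every surviving $S$-node has $p_{t,i}\ge\mu_\lambda$ hence $\hat\ell_{t,i}\le1/\mu_\lambda = \slpn/(\eta_\lambda\sqrt{\bar\eta_\lambda})$, and $\langle p_t,\hat\ell_t\rangle\le(1-\eta_\lambda)^{-1}\langle\hat p_t,\hat\ell_t\rangle$; (iii) the correction-term cancellation of \pref{lem:lemadahome} survives up to these constant factors. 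Choosing the same ghost points $u\in\Omega_\lambda$ as in \pref{thm:adahomethm}, this gives, for $i\in S$,
\[
\sum_{t\in\mathcal{I}_\lambda}\langle p_t-e_i,\hat\ell_t\rangle \le \otil\Big(\tfrac{\slpn}{\eta_\lambda}+\eta_\lambda\sum_{t\in\mathcal{I}_\lambda}\hat\ell_{t,i}+\tfrac1T\sum_{t\in\mathcal{I}_\lambda}\sum_j\hat\ell_{t,j}\Big)+O(\slpn^2),
\]
and for $i\in\bar S$ an extra $\otil\big(1/\bar\eta_\lambda + \sqrt{\bar\eta_\lambda}\sum_{t\in\mathcal{I}_\lambda}\hat\ell_{t,i} + \sqrt{\bar\eta_\lambda}\min_{i'\in S}\sum_{t\in\mathcal{I}_\lambda}\hat\ell_{t,i'}\big)$.

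\paragraph{Controlling the loss-dependent terms.} Since the reset rule did not fire before the last round of $\mathcal{I}_\lambda$, the value of $\min_{i\in S}\sum_\tau\hat\ell_{\tau,i}$ one round before the end is at most $\slpn/\eta_\lambda^2$, and the last-round jump is at most $\slpn/(\eta_\lambda\sqrt{\bar\eta_\lambda}) = \slpn^{4/3}\eta_\lambda^{-5/3}\le\slpn/\eta_\lambda^2$, the inequality holding precisely because $\slpn\eta_\lambda\le\slpn\eta_1\le1$ --- this is what the choice $\eta=\min\{1/5,1/\slpn\}$ buys, and the reason the clipping step is needed at all (it caps the otherwise unbounded last-round estimator). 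Hence $\min_{i\in S}\sum_{t\in\mathcal{I}_\lambda}\hat\ell_{t,i}\le 2\slpn/\eta_\lambda^2$; passing from $\min_{i'}$ to an arbitrary comparator via the clipping argument of \pref{lem:epoch} then gives $\eta_\lambda\sum_{t\in\mathcal{I}_\lambda}\hat\ell_{t,i}=O(\slpn/\eta_\lambda)$ and $\sqrt{\bar\eta_\lambda}\min_{i'\in S}\sum_{t\in\mathcal{I}_\lambda}\hat\ell_{t,i'}=O(\slpn^{2/3}\eta_\lambda^{-4/3})=O(1/\bar\eta_\lambda)$. All these per-epoch contributions decay geometrically in $\lambda$, so summing over $\lambda=1,\dots,\lambda^\star$ leaves $\otil\big(\slpn/\eta_{\lambda^\star}+1/\bar\eta_{\lambda^\star}+\slpn^2+\mathbbm{1}\{i^\star\in\bar S\}\sum_\lambda\sqrt{\bar\eta_\lambda}\sum_{t\in\mathcal{I}_\lambda}\hat\ell_{t,i^\star}\big)+\tfrac1T\sum_t\sum_j\hat\ell_{t,j}$. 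A crude bound gives $\lambda^\star=O(\log T)$, and the reset rule at the end of epoch $\lambda^\star-1$ yields $\slpn/\eta_{\lambda^\star}^2\le 4M$ with $M\triangleq\min_{i\in S}\sum_{t=1}^T\hat\ell_{t,i}$, hence $\slpn/\eta_{\lambda^\star}\le2\sqrt{\slpn M}$ and $1/\bar\eta_{\lambda^\star}=\slpn^{2/3}\eta_{\lambda^\star}^{-4/3}=O(M^{2/3})$. Taking expectation and using $\E[M]\le L_{\isstar}$ with Jensen's inequality ($\E[\sqrt{\slpn M}]\le\sqrt{\slpn L_{\isstar}}$, $\E[M^{2/3}]\le L_{\isstar}^{2/3}$) gives $\Reg=\otil(\sqrt{\slpn L_{\isstar}}+\slpn^2)$ when $i^\star\in S$; when $i^\star\in\bar S$ the extra terms add $\otil(L_{\isstar}^{2/3})$ (for $1/\bar\eta_{\lambda^\star}$ and the $\sqrt{\bar\eta_\lambda}\min_{i'\in S}$ term), and the remaining sum $\sum_\lambda\sqrt{\bar\eta_\lambda}\sum_{t\in\mathcal{I}_\lambda}\hat\ell_{t,i^\star}$ is handled by combining the geometric decay of $\bar\eta_\lambda$ with $\E[\sum_t\hat\ell_{t,i^\star}]\le L_{i^\star}=L_\star\le L_{\isstar}$, producing $\otil(L_{\isstar}^{2/3}+\sqrt{\slpn L_{\isstar}}+\slpn^2)$.

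\paragraph{Main obstacle.} The single-epoch regret bound is essentially \pref{thm:adahomethm} and is routine; the real difficulty is the doubling-trick bookkeeping. Two points need care: (a) the loss estimator can blow up on the last round of an epoch --- resolved by the clipping step together with the choice $\eta\le1/\slpn$, so that the last-round jump never exceeds the threshold $\slpn/\eta_\lambda^2$; and (b) transferring the OMD regret inequality from the unclipped iterates $\hat p_t$ (on which OMD actually runs) to the clipped, renormalized iterates $p_t$ that are played, while keeping the correction-term cancellation of \pref{lem:lemadahome} intact and tracking the resulting $(1-\eta_\lambda)^{-1}$ factors and $O(\slpn^2)$ overheads. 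The $\bar S$-comparator term, which the reset rule does not directly bound, is the remaining subtlety and requires the separate geometric-decay argument above. The same template would also underlie the adaptive algorithm for general weakly observable graphs behind \pref{thm:adahome-general-thm}.
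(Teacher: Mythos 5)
Your overall architecture matches the paper's: doubling trick over epochs with $\eta_\lambda = 2^{1-\lambda}\eta_1$, clipping to bound the last-round estimator, the reset rule giving $\slpn/\eta_{\lambda^\star}^2 \le 4M$ with $M = \min_{i\in S}\sum_{t=1}^T\hat\ell_{t,i}$, the transfer from $\hat p_t$ to $p_t$ via the $(1-\eta_\lambda)^{-1}$ factor, $\lambda^\star = O(\log T)$, and Jensen at the end. The $i^\star\in S$ case is essentially correct as you've sketched it (though the paper uses the simpler fact $\sum_t\hat\ell_{t,i^\star}\ge\min_{i\in S}\sum_t\hat\ell_{t,i}$ to switch the comparator to $\argmin_{i\in S}$ directly, rather than the clipping-style comparison of \pref{lem:epoch}).

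There is, however, a genuine gap in your handling of the $i^\star\in\bar{S}$ case. You leave the term $\sum_\lambda\sqrt{\bar\eta_\lambda}\sum_{t\in\mathcal{I}_\lambda}\hat\ell_{t,i^\star}$ and claim it is $\otil(L_{i_S^\star}^{2/3})$ ``by combining the geometric decay of $\bar\eta_\lambda$ with $\E[\sum_t\hat\ell_{t,i^\star}]\le L_{i_S^\star}$.'' This does not go through: geometric decay of $\sqrt{\bar\eta_\lambda}$ only gives you $\sum_\lambda\sqrt{\bar\eta_\lambda}\sum_{t\in\mathcal{I}_\lambda}\hat\ell_{t,i^\star}\le \sqrt{\bar\eta_1}\sum_{t=1}^T\hat\ell_{t,i^\star}$, and since $\bar\eta_1 = \slpn^{-2/3}\eta_1^{4/3}$ with $\eta_1=\min\{1/5,1/\slpn\}$ is a constant depending only on $\slpn$, in expectation this is $\Theta(L_\star)$, not $O(L_\star^{2/3})$. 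The reset condition cannot help directly here either, since it constrains $\min_{i\in S}\sum\hat\ell_{t,i}$ while $i^\star\in\bar{S}$. The paper's resolution is a per-epoch case split: within each epoch $\lambda$, either $\sum_{t\in\mathcal{I}_\lambda}\hat\ell_{t,i^\star}\le\min_{i\in S}\sum_{t\in\mathcal{I}_\lambda}\hat\ell_{t,i}$, in which case the reset condition bounds this term by $O(\slpn/\eta_\lambda^2)$ and hence $\sqrt{\bar\eta_\lambda}\sum\hat\ell_{t,i^\star}=O(1/\bar\eta_\lambda)$; or $\sum_{t\in\mathcal{I}_\lambda}\hat\ell_{t,i^\star}>\min_{i\in S}\sum_{t\in\mathcal{I}_\lambda}\hat\ell_{t,i}$, in which case one simply compares against $\argmin_{i\in S}$ instead, lands in the ``$i\in S$'' regret bound, and the offending $\sum\hat\ell_{t,i^\star}$ term never appears. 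Either way the per-epoch regret is $\otil(\slpn/\eta_\lambda + 1/\bar\eta_\lambda)$, which is loss-free and decays geometrically, so summation gives $\otil(\slpn/\eta_{\lambda^\star}+1/\bar\eta_{\lambda^\star})=\otil(\sqrt{\slpn M}+M^{2/3})$ and Jensen finishes. You should replace your geometric-decay argument with this case split.
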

\begin{proof}
    We call the time steps between two resets an epoch (indexed by $\lambda$) and let $\eta_\lambda$, $\bar{\eta}_\lambda$, and $\mu_\lambda$ be the value of ${\eta}$, $\bar{\eta}$, and $\mu$ during epoch $\lambda$ so that $\eta_\lambda = 2^{1-\lambda}\eta_1$, $\bar{\eta}_\lambda={\slpn}^{-\nicefrac{2}{3}}\eta_{\lambda}^{\nicefrac{4}{3}}$, and $\mu_\lambda =\frac{\eta_\lambda \sqrt{\bar{\eta}_\lambda }}{\slpn}$.
        Also let $\lambda^\star$ be the index of the last epoch. As we only do clipping restricted on the nodes in $S$, all nodes in $\bar{S}$ can still be observed with probability greater than zero. Therefore, $\hat{\ell}_{t,i}$ is still unbiased for any node $i\in \bar{S}$ and we have $\mathbb{E}\left[\ell_{t,i_t}\right]=\mathbb{E}[\langle p_t,\hat{\ell}_t\rangle]$. In addition, in each epoch $\lambda$, as the clipping threshold is $\mu_\lambda\le\frac{\sqrt{\bar{\eta}_{\lambda}}}{\slpn }$, at least one node in $S$ will survive and we have $1\le \frac{p_{t,i_t}}{\hat{p}_{t,i_t}}\le\frac{1}{1-\frac{\slpn\mu_{\lambda}}{\sqrt{\bar{\eta}_{\lambda}}}}=\frac{1}{1-\eta_{\lambda}}$.
    
    Now we consider the regret in epoch $\lambda$. We will prove that
    \begin{equation}
        \frac{1}{1-\eta_{\lambda}}\left(\sum_{t=T_{\lambda}+1}^{T_{\lambda+1}}\left<\hat{p}_t,\hat{\ell}_t\right>\right) - \sum_{t=T_{\lambda}+1}^{T_{\lambda+1}}\hat{\ell}_{t,i^\star} \le 
        \begin{cases}
            \mathcal{\tilde{O}}\left(\frac{\slpn}{\eta_{\lambda}}+\frac{1}{T}\sum_{t=T_{\lambda}+1}^{T_{\lambda+1}}\sum_{i\in S}\hat{\ell}_{t,i}\right), &\mbox{if $i^\star\in S$},\\
            \mathcal{\tilde{O}}\left(\frac{\slpn}{\eta_{\lambda}}+\frac{1}{\bar{\eta}_{\lambda}}+\frac{1}{T}\sum_{t=T_{\lambda}+1}^{T_{\lambda+1}}\sum_{i\in S}\hat{\ell}_{t,i}\right), &\mbox{if $i^\star\in \bar{S}$}.\label{eq:doubling trick eq1}\\
        \end{cases}
    \end{equation}
    When $i^\star\in S$, we have
    \begin{align*}
    & \frac{1}{1-\eta_{\lambda}}\left(\sum_{t=T_{\lambda}+1}^{T_{\lambda+1}}\left<\hat{p}_t,\hat{\ell}_t\right>\right) - \sum_{t=T_{\lambda}+1}^{T_{\lambda+1}}\hat{\ell}_{t,i^\star} \\
    &\le \frac{1}{1-\eta_{\lambda}}\left(\sum_{t=T_{\lambda}+1}^{T_{\lambda+1}}\left<\hat{p}_t,\hat{\ell}_t\right>\right) - \min_{i\in S}\sum_{t=T_{\lambda}+1}^{T_{\lambda+1}}\hat{\ell}_{t,i} \\
    &\le\mathcal{\tilde{O}}\left(\frac{\slpn}{\eta_{\lambda}}+\eta_{\lambda}\min_{i\in S}\sum_{t=T_{\lambda}+1}^{T_{\lambda+1}}\hat{\ell}_{t,i}+\frac{1}{T}\sum_{t=T_{\lambda}+1}^{T_{\lambda+1}}\sum_{i\in S}\hat{\ell}_{t,i}\right)\\
    &\le\mathcal{\tilde{O}}\left(\frac{\slpn }{\eta_{\lambda}}+\eta_{\lambda}\min_{i\in S}\sum_{t=T_{\lambda}+1}^{T_{\lambda+1}-1}\hat{\ell}_{t,i}+\eta_{\lambda}\max_{i\in S}\hat{\ell}_{T_{\lambda+1},i}+\frac{1}{T}\sum_{t=T_{\lambda}+1}^{T_{\lambda+1}}\sum_{i\in S}\hat{\ell}_{t,i}\right) \\
    &\le\mathcal{\tilde{O}}\left(\frac{\slpn}{\eta_{\lambda}}+\slpn ^{\frac{4}{3}}\eta_{\lambda}^{-\frac{2}{3}}+\frac{1}{T}\sum_{t=T_{\lambda}+1}^{T_{\lambda+1}}\sum_{i\in S}\hat{\ell}_{t,i}\right)\\
    &\le\mathcal{\tilde{O}}\left(\frac{\slpn }{\eta_{\lambda}}+\frac{1}{T}\sum_{t=T_{\lambda}+1}^{T_{\lambda+1}}\sum_{i\in S}\hat{\ell}_{t,i}\right).
\end{align*}
   The second inequality is derived by rearranging terms in \pref{eq:special case S}. The fourth inequality is because $\frac{\slpn}{\eta_{\lambda} }\le \eta_{\lambda} \min_{i\in S}\sum_{t = T_{\lambda}+1}^{T_{\lambda+1}-1}\hat{\ell}_{t,i}$ does not hold and $\hat{\ell}_{T_{\lambda+1},i}\le \frac{1}{\mu_{\lambda}}$ holds for all $i\in [K]$. The last inequality is because $\eta_{\lambda}\le \eta_1\le\frac{1}{\slpn}$. On the other hand, if $i^\star\in \bar{S}$, we have
    \begin{align*}
        & \frac{1}{1-\eta_{\lambda}}\left(\sum_{t=T_\lambda+1}^{T_{\lambda+1}}\left<\hat{p}_t, \hat{\ell}_t\right>\right)-\sum_{t=T_\lambda+1}^{T_{\lambda+1}}\hat{\ell}_{t,i^\star} \\
        &\le \mathcal{\tilde{O}}\left(\frac{\slpn }{\eta_{\lambda}}+\frac{1}{\bar{\eta}_{\lambda}}+\left(\sqrt{\bar{\eta}_{\lambda}}+\eta_{\lambda}\right)\sum_{t=T_\lambda+1}^{T_{\lambda+1}}\hat{\ell}_{t,i^\star}+\sqrt{\bar{\eta}_{\lambda}}\min_{i\in S}\sum_{t=T_{\lambda}+1}^{T_{\lambda+1}}\hat{\ell}_{t,i}+\frac{1}{T}\sum_{t=T_{\lambda}+1}^{T_{\lambda+1}}\sum_{i\in S}\hat{\ell}_{t,i}\right).
    \end{align*}
    which is also derived by rearranging terms in \pref{eq:special case barS}. Then we consider the following two cases. If $\sum_{t=T_{\lambda}+1}^{T_{\lambda+1}}\hat{\ell}_{t,i^\star}\le \min_{i\in S}\sum_{t=T_{\lambda}+1}^{T_{\lambda+1}}\hat{\ell}_{t,i}$, we have
    \begin{align*}
    & \frac{1}{1-\eta_{\lambda}}\left(\sum_{t=T_\lambda+1}^{T_{\lambda+1}}\left<\hat{p}_t, \hat{\ell}_t\right>\right)-\sum_{t=T_\lambda+1}^{T_{\lambda+1}}\hat{\ell}_{t,i^\star} \\
    &\le\mathcal{\tilde{O}}\left(\frac{\slpn }{\eta_{\lambda}}+\frac{1}{\bar{\eta}_{\lambda}}+\left(\sqrt{\bar{\eta}_{\lambda}}+\eta_{\lambda}\right)\min_{i\in S}\sum_{t=T_\lambda+1}^{T_{\lambda+1}}\hat{\ell}_{t,i}+\frac{1}{T}\sum_{t=T_\lambda+1}^{T_{\lambda+1}}\sum_{i\in S}\hat{\ell}_{t,i}\right)\\
    &\le  \mathcal{\tilde{O}}\left(\frac{\slpn}{\eta_{\lambda}}+\frac{1}{\bar{\eta}_{\lambda}}+\left(\sqrt{\bar{\eta}_{\lambda}}+\eta_{\lambda}\right)\min_{i\in S}\sum_{t=T_{\lambda}+1}^{T_{\lambda+1}-1}\hat{\ell}_{t,i}+\left(\sqrt{\bar{\eta}_{\lambda}}+\eta_{\lambda}\right)\max_{i\in S}\hat{\ell}_{T_{\lambda+1}, i}+\frac{1}{T}\sum_{t=T_{\lambda}+1}^{T_{\lambda+1}}\sum_{i\in S}\hat{\ell}_{t,i}\right)\\
    &\le  \mathcal{\tilde{O}}\left(\frac{\slpn}{\eta_{\lambda}}+\frac{1}{\bar{\eta}_{\lambda}}+\left(\sqrt{\bar{\eta}_{\lambda}}+\eta_{\lambda}\right)\max_{i\in S}\hat{\ell}_{T_{\lambda+1},i}+\frac{1}{T}\sum_{t=T_{\lambda}+1}^{T_{\lambda+1}}\sum_{i\in S}\hat{\ell}_{t,i}\right)\\
    &\le\mathcal{\tilde{O}}\left(\frac{\slpn}{\eta_{\lambda}}+\frac{1}{\bar{\eta}_{\lambda}}+\frac{\slpn }{\eta_{\lambda}}+\slpn ^{\frac{4}{3}}\eta_{\lambda}^{-\frac{2}{3}}+\frac{1}{T}\sum_{t=T_{\lambda}+1}^{T_{\lambda+1}}\sum_{i\in S}\hat{\ell}_{t,i}\right)\\
   &=  \mathcal{\tilde{O}}\left(\frac{\slpn }{\eta_{\lambda}}+\frac{1}{\bar{\eta}_{\lambda}}+\frac{1}{T}\sum_{t=T_{\lambda}+1}^{T_{\lambda+1}}\sum_{i\in S}\hat{\ell}_{t,i}\right).
\end{align*}
Here, the third inequality is because $\frac{\slpn}{\eta_{\lambda} }\le \eta_{\lambda} \min_{i\in S}\sum_{t = T_{\lambda}+1}^{T_{\lambda+1}-1}\hat{\ell}_{t,i}$ does not hold, which also implies that $\frac{1}{\eta_{\lambda} }\le \eta_{\lambda} \min_{i\in S}\sum_{t = T_{\lambda}+1}^{T_{\lambda+1}-1}\hat{\ell}_{t,i}$ does not hold; the fourth inequality is also $\hat{\ell}_{T_{\lambda+1},i}\le\frac{1}{\mu_{\lambda}}$ for all $i\in S$; and the last inequality is because $\eta_{\lambda}\le \eta_1\le \frac{1}{\slpn}$.

On the other hand, if $\sum_{t=T_{\lambda}+1}^{T_{\lambda+1}}\hat{\ell}_{t,i^\star}\ge \min_{i\in S}\sum_{t=T_{\lambda}+1}^{T_{\lambda+1}}\hat{\ell}_{t,i}$, then based on previous results, we have
\begin{align*}
     \frac{1}{1-\eta_{\lambda}}\left(\sum_{t=T_{\lambda}+1}^{T_{\lambda+1}}\left<\hat{p}_t,\hat{\ell}_t\right>\right) - \sum_{t=T_{\lambda}+1}^{T_{\lambda+1}}\hat{\ell}_{t,i^\star} &\le \frac{1}{1-\eta_{\lambda}}\left(\sum_{t=T_{\lambda}+1}^{T_{\lambda+1}}\left<\hat{p}_t,\hat{\ell}_t\right>\right) -\min_{i\in S}\sum_{t=T_{\lambda}+1}^{T_{\lambda+1}}\hat{\ell}_{t,i} \\
    &\le \mathcal{\tilde{O}}\left(\frac{\slpn }{\eta_{\lambda}}+\frac{1}{T}\sum_{t=T_{\lambda}+1}^{T_{\lambda+1}}\sum_{i\in S}\hat{\ell}_{t,i}\right).
\end{align*}
Combining the two cases, we finish proving \pref{eq:doubling trick eq1}. Now we sum up the regret over all epochs $\lambda=1,2,\dots,\lambda^\star$. For $i^\star\in S$, we have
\begin{align}
     \sum_{\lambda=1}^{\lambda^\star}\left(\frac{1}{1-\eta_{\lambda}}\sum_{t=T_{\lambda}+1}^{T_{\lambda+1}}\left<\hat{p}_t,\hat{\ell}_t\right>-\sum_{t=T_{\lambda}+1}^{T_{\lambda+1}}\hat{\ell}_{t,i^\star}\right) &\le  \sum_{\lambda=1}^{\lambda^\star}\mathcal{\tilde{O}}\left(\frac{\slpn }{\eta_{\lambda}}+\frac{1}{T}\sum_{t=T_{\lambda}+1}^{T_{\lambda+1}}\sum_{i\in S}\hat{\ell}_{t,i}\right)\nonumber\\
    &\le  \mathcal{\tilde{O}}\left(\frac{\slpn}{\eta_{\lambda^\star}}+\frac{1}{T}\sum_{t=1}^T\sum_{i\in S}\hat{\ell}_{t,i}\right)\nonumber\\
    &= \mathcal{\tilde{O}}\left(\frac{2^{\lambda^\star}\slpn}{\eta_1}+\frac{1}{T}\sum_{t=1}^T\sum_{i\in S}\hat{\ell}_{t,i}\right)\label{eq:generalS}
\end{align}
For $i^\star\in \bar{S}$, we have
\begin{align}
     \sum_{\lambda=1}^{\lambda^\star}\left(\frac{1}{1-\eta_{\lambda}}\sum_{t=T_{\lambda}+1}^{T_{\lambda+1}}\left<\hat{p}_t,\hat{\ell}_t\right>-\sum_{t=T_{\lambda}+1}^{T_{\lambda+1}}\hat{\ell}_{t,i^\star}\right) &\le  \sum_{\lambda=1}^{\lambda^\star}\mathcal{\tilde{O}}\left(\frac{\slpn }{\eta_{\lambda}}+\frac{1}{\bar{\eta}_{\lambda}}+\frac{1}{T}\sum_{t=T_{\lambda}+1}^{T_{\lambda+1}}\sum_{i\in S}\hat{\ell}_{t,i}\right)\nonumber\\
    &\le  \mathcal{\tilde{O}}\left(\frac{\slpn}{\eta_{\lambda^\star}}+\frac{1}{\bar{\eta}_{\lambda^\star}}+\frac{1}{T}\sum_{t=1}^T\sum_{i\in S}\hat{\ell}_{t,i}\right)\nonumber\\
    &= \mathcal{\tilde{O}}\left(\frac{2^{\lambda^\star}\slpn}{\eta_1}+\frac{2^{\frac{4}{3}\lambda^\star}}{\bar{\eta}_1}+\frac{1}{T}\sum_{t=1}^T\sum_{i\in S}\hat{\ell}_{t,i}\right).\label{eq:generalSbar}
\end{align}
Below we show that $\lambda^\star$ is well upper bounded. When $\lambda^\star\ge 2$, consider the last time step of epoch $\lambda^\star-1$, we have
\begin{align*}
    \frac{2^{2\lambda^\star-2}\slpn}{\eta_1^2}=\frac{\slpn}{\eta_{\lambda^\star-1}^2}\le \min_{i\in S}\sum_{t=T_{\lambda^\star-1}+1}^{T_{\lambda^\star}}\hat{\ell}_{t,i}\le \min_{i\in S}\sum_{t=1}^T\hat{\ell}_{t,i}.
\end{align*}
Therefore, we know that $2^{\lambda^\star}\le 2\eta_1\sqrt{\frac{\min_{i\in S}\sum_{t=1}^T\hat{\ell}_{t,i}}{\slpn}}$. Plugging this into \pref{eq:generalS}, we have for $i^\star\in S$,
\begin{align*}
     \sum_{\lambda=1}^{\lambda^\star}\left(\frac{1}{1-\eta_{\lambda}}\sum_{t=T_{\lambda}+1}^{T_{\lambda+1}}\left<\hat{p}_t,\hat{\ell}_t\right>-\sum_{t=T_{\lambda}+1}^{T_{\lambda+1}}\hat{\ell}_{t,i^\star}\right) &\le  \mathcal{\tilde{O}}\left(\frac{2^{\lambda^\star}\slpn}{\eta_1}+\frac{1}{T}\sum_{t=1}^T\sum_{i\in S}\hat{\ell}_{t,i}\right)\\
    &\le  \mathcal{\tilde{O}}\left(\sqrt{\slpn \min_{i\in S}\sum_{t=1}^T\hat{\ell}_{t,i}}+\frac{1}{T}\sum_{t=1}^T\sum_{i\in S}\hat{\ell}_{t,i}\right).
\end{align*}
On the other hand, for $i^\star\in \bar{S}$, plugging $2^{\lambda^\star}\le 2\eta_1\sqrt{\frac{\min_{i\in S}\sum_{t=1}^T\hat{\ell}_{t,i}}{\slpn}}$ into \pref{eq:generalSbar} gives
\begin{align*}
    &\sum_{\lambda=1}^{\lambda^\star}\left(\frac{1}{1-\eta_{\lambda}}\sum_{t=T_{\lambda}+1}^{T_{\lambda+1}}\left<\hat{p}_t,\hat{\ell}_t\right>-\sum_{t=T_{\lambda}+1}^{T_{\lambda+1}}\hat{\ell}_{t,i^\star}\right)\\ &\le\mathcal{\tilde{O}}\left(\frac{2^{\lambda^\star}\slpn}{\eta_1}+\frac{2^{\frac{4}{3}\lambda^\star}}{\bar{\eta}_1}+\frac{1}{T}\sum_{t=1}^T\sum_{i\in S}\hat{\ell}_{t,i}\right)\\
    &\le \mathcal{\tilde{O}}\left(\sqrt{\slpn \min_{i\in S}\sum_{t=1}^T\hat{\ell}_{t,i}}+\left(\min_{i\in S}\sum_{t=1}^T\hat{\ell}_{t,i}\right)^{\frac{2}{3}}+\frac{1}{T}\sum_{t=1}^T\sum_{i\in S}\hat{\ell}_{t,i}\right).
\end{align*}
Combining with the case $\lambda^\star=1$, we have the following result
\begin{align*}
        &\frac{1}{1-\eta_{\lambda}}\left(\sum_{t=T_{\lambda}+1}^{T_{\lambda+1}}\left<\hat{p}_t,\hat{\ell}_t\right>\right) - \sum_{t=T_{\lambda}+1}^{T_{\lambda+1}}\hat{\ell}_{t,i^\star} \\
        &\le 
        \begin{cases}
            \mathcal{\tilde{O}}\left(\sqrt{\slpn \min_{i\in S}\sum_{t=1}^T\hat{\ell}_{t,i}}+\slpn^2+\frac{1}{T}\sum_{t=1}^T\sum_{i\in S}\hat{\ell}_{t,i}\right), &\mbox{if $i^\star\in S$},\\
            \mathcal{\tilde{O}}\left(\sqrt{\slpn \min_{i\in S}\sum_{t=1}^T\hat{\ell}_{t,i}}+\left(\min_{i\in S}\sum_{t=1}^T\hat{\ell}_{t,i}\right)^{\frac{2}{3}}+\slpn^2+\frac{1}{T}\sum_{t=1}^T\sum_{i\in S}\hat{\ell}_{t,i}\right), &\mbox{if $i^\star\in \bar{S}$}.\\
        \end{cases}
    \end{align*}
Now we take the expectation over both sides. First, for the left hand side, we have

\begin{align*}
     & \mathbb{E}\left[\sum_{\lambda=1}^{\lambda^\star}\left(\frac{1}{1-\eta_{\lambda}}\sum_{t=T_{\lambda}+1}^{T_{\lambda+1}}\left<\hat{p}_t,\hat{\ell}_t\right>-\sum_{t=T_{\lambda}+1}^{T_{\lambda+1}}\hat{\ell}_{t,i^\star}\right)\right] \\
    &= \mathbb{E}\left[\sum_{\lambda=1}^{\lambda^\star}\left(\frac{1}{1-\eta_{\lambda}}\sum_{t=T_{\lambda}+1}^{T_{\lambda+1}}\left<\hat{p}_t,\hat{\ell}_t\right>-\sum_{t=T_{\lambda}+1}^{T_{\lambda+1}}\ell_{t,i_t}\right)\right] + \mathbb{E}\left[\sum_{t=1}^T\ell_{t,i_t}-\hat{\ell}_{t,i^\star}\right] \\
    &= \mathbb{E}\left[\sum_{t=1}^T\frac{1}{1-\eta_{\lambda_t}}\left<\hat{p}_t,  \hat{\ell}_t\right>-\ell_{t,i_t}\right]+\mathbb{E}\left[\sum_{t=1}^T\ell_{t,i_t}-\ell_{t,i^\star}\right]\\
    &= \sum_{t=1}^T\mathbb{E}_{\lambda_t}\left[\mathbb{E}_{i_t|\lambda_t}\left[\frac{1}{1-\eta_{\lambda_t}}\left<\hat{p}_t,\hat{\ell}_t\right>-\ell_{t,i_t}\right]\right]+\mathbb{E}\left[\sum_{t=1}^T\ell_{t,i_t}-\ell_{t,i^\star}\right]\\
    &\ge \mathbb{E}\left[\sum_{t=1}^T\ell_{t,i_t}-\ell_{t,i^\star}\right].
\end{align*}
Here, $\lambda_t$ represents the epoch that time $t$ belongs to. The last inequality is because of the fact that
whether $t$ is in epoch $\lambda$ or not is independent of what action is realized in time $t$ and $1\le\frac{p_{t,i_t}}{\hat{p}_{t,i_t}}\le \frac{1}{1-\eta_{\lambda}}$. 

Next we consider the right hand side. For $i^\star\in S$, we have
\begin{align*}
    \mathbb{E}\left[\mathcal{\tilde{O}}\left(\sqrt{\slpn \min_{i\in S}\sum_{t=1}^T\hat{\ell}_{t,i}}+\slpn^2+\frac{1}{T}\sum_{t=1}^T\sum_{i\in S}\hat{\ell}_{t,i}\right)\right]&\le \mathcal{\tilde{O}}\left(\left[\sqrt{\slpn \min_{i\in S}\mathbb{E}\left[\sum_{t=1}^T\hat{\ell}_{t,i}\right]}\right]+\slpn^2 \right)\\
    &= \mathcal{\tilde{O}}\left(\sqrt{\slpn L_{\isstar}}+\slpn^2 \right).
\end{align*}
where we use Jensen's inequality. For $i^\star\in \bar{S}$, we have
\begin{align*}
    & \mathbb{E}\left[\mathcal{\tilde{O}}\left(\sqrt{\slpn \min_{i\in S}\sum_{t=1}^T\hat{\ell}_{t,i}}+\left(\min_{i\in S}\sum_{t=1}^T\hat{\ell}_{t,i}\right)^{\frac{2}{3}}+\slpn^2+\frac{1}{T}\sum_{t=1}^T\sum_{i\in S}\hat{\ell}_{t,i}\right)\right]\\
    &\le \mathcal{\tilde{O}}\left(\sqrt{\slpn \min_{i\in S}\mathbb{E}\left[\sum_{t=1}^T\hat{\ell}_{t,i}\right]}+\left(\min_{i\in S}\mathbb{E}\left[\sum_{t=1}^T\hat{\ell}_{t,i}\right]\right)^{\frac{2}{3}}+\slpn^2 \right)\\
    &= \mathcal{\tilde{O}}\left(\sqrt{\slpn L_{\isstar}}+L_{\isstar}^{\nicefrac{2}{3}}+\slpn^2\right).
\end{align*}
Finally combining the results above proves the theorem statement:
\begin{align*}
\Reg = \begin{cases} \mathcal{\tilde{O}}\left(\sqrt{\slpn L_{\isstar}}+\slpn ^2\right), &\mbox{if $i^\star\in S$},\\ 
\mathcal{\tilde{O}}\left(L_{\isstar}^{\nicefrac{2}{3}}+\sqrt{\slpn L_{\isstar}}+\slpn ^2\right), & \mbox{if $i^\star\in \bar{S}$}. 
\end{cases}
\end{align*}
\end{proof}


\subsection{Proof of \pref{thm:adahome-general-thm}}\label{app:GeneralCase}
\begin{proof}
    The condition of \pref{lem:lemadahome} holds since according to the choice of $\bar{\eta}$ and $\delta$, we have $\frac{\bar{\eta}}{W_{t,i}}\le\frac{\bar{\eta}}{\delta}\le \delta^{\nicefrac{1}{3}} \le\frac{1}{5},~\forall t\in[T],~i\in\bar{S}$. Therefore, we know that for any $u\in \Omega$,
    \begin{align*}
        \sum_{t=1}^T\inner{p_t-u, \hat{\ell}_t}\le \frac{1}{\eta}\sum_{i\in S}h\left(\frac{u_i}{p_{1,i}}\right)+\frac{1}{\bar{\eta}}\sum_{i\in \bar{S}}\left(u_i\ln \frac{u_i}{p_{1,i}}\right)+\sum_{t=1}^T\inner{u, a_t}.
    \end{align*}
    Set $u = \frac{1}{T}\cdot \one_{S\backslash\wklyds}+\delta\cdot\one_{\wklyds}+\left(1-\wklydn\delta-\frac{|S\backslash\wklyds|}{T}\right)\cdot e_{i}$.  When comparing with $i\in S$, we have
    
    \begin{align*}
    &\sum_{j\in S}h\left(\frac{u_j}{p_{1,j}}\right)+\sum_{j\in \bar{S}}\left(u_j\ln \frac{u_j}{p_{1,j}}\right)\\ &= \sum_{j\ne i, j\in S\backslash\wklyds}\left(\frac{2\slpn}{T}-1-\ln\frac{2\slpn}{T}\right)+\sum_{j\ne i, j\in S\cap \wklyds}\left(2\slpn\delta-1-\ln 2\slpn\delta\right) \\
    &\quad + \left(2s\left(1-\wklydn\delta-\frac{|S\backslash\wklyds|}{T}\right)-1-\ln 2s\left(1-\wklydn\delta-\frac{|S\backslash\wklyds|}{T}\right) \right)+|\bar{S}\cap \mathcal{D}|\left(\delta\ln(2\woslpn \delta)\right)\\
    &\le (\slpn-1)\ln\frac{T}{2\slpn}+2\slpn-1-\ln\frac{\slpn}{2} \tag{$T\ge 2K\ge 2\slpn$ and $\frac{1}{T}\le\delta\le \min\left\{\frac{1}{4\wklydn}, \frac{1}{4\slpn}\right\}$}\\
    &= (\slpn-1)\ln T-\slpn\ln 2\slpn+2\slpn+\ln 4-1\\
    &\le 2\slpn \ln T.\tag{$-\slpn\ln 2\slpn +2\slpn +\ln 4-1\le 2\le (\slpn+1)\ln T$}
    \end{align*}
    Therefore,
    \begin{align*}
        \sum_{t=1}^T\inner{p_t-e_{i}, \hat{\ell}_t} &\le  \frac{2\slpn\ln T}{\eta}+2\eta\sum_{t=1}^Tp_{t,i}\hat{\ell}_{t,i}^2+\delta\sum_{t=1}^T\left(2\eta\sum_{j\in S\cap\wklyds}p_{t,j}\hat{\ell}_{t,j}^2+2\bar{\eta}\sum_{j\in \bar{S}\cap\wklyds}\hat{\ell}_{t,j}^2\right)\\
        &\quad+\frac{2\eta}{T}\sum_{t=1}^T\sum_{j\in S\backslash\wklyds}p_{t,j}\hat{\ell}_{t,j}^2+\delta\sum_{t=1}^T\sum_{j\in \wklyds}\hat{\ell}_{t,j}+\frac{1}{T}\sum_{t=1}^T\sum_{j\in S\backslash\wklyds}\hat{\ell}_{t,j} \\
        &\le \frac{2\slpn\ln T}{\eta}+2\eta\sum_{t=1}^T\hat{\ell}_{t,i}+2\delta\sum_{t=1}^T\sum_{j\in \wklyds}\hat{\ell}_{t,j}+\frac{2}{T}\sum_{t=1}^T\sum_{j\in S}\hat{\ell}_{t,j}.
    \end{align*}
    The second inequality is because $p_{t,j}\hat{\ell}_{t,j}\le 1$ for $j\in S$, $\hat{\ell}_{t,j}\le \frac{1}{\delta}$ for $j\in \bar{S}$, $2\bar{\eta}\le \delta$ and $2\eta\le 1$. The reason that $\hat{\ell}_{t,j}\le \frac{1}{\delta}$ holds for $j\in \bar{S}$ is that if $i$ is weakly observable, this trivially holds according to the definition of $\Omega$. Otherwise, $j$ can be observed by all the other nodes, which include at least one weakly observable node. This shows that $\hat{\ell}_{t,j}\le \frac{1}{\delta}$. 
    
    On the other hand, when comparing with $i\in \bar{S}$, we have
         \begin{align*}
         	&\sum_{j\in S}h\left(\frac{u_j}{p_{1,j}}\right)+\sum_{j\in \bar{S}}\left(u_j\ln \frac{u_j}{p_{1,j}}\right)\\ &= \sum_{j\in S\backslash\wklyds}\left(\frac{2\slpn}{T}-1-\ln\frac{2\slpn}{T}\right)+\sum_{j\in S\cap \wklyds}\left(2\slpn\delta-1-\ln 2\slpn\delta\right)+\left(|\bar{S}\cap \mathcal{D}|-1\right)\left(\delta\ln(2\woslpn\delta)\right)+u_i\ln(2\woslpn u_i)\\
         	&\le\slpn \ln\frac{T}{2\slpn}+\ln (2\woslpn) \le \slpn\ln T+\ln(2\woslpn)
         \end{align*}
    because $T\ge 2K\ge 2\slpn$, $\frac{1}{T}\le \delta\le \frac{1}{4\slpn}$ and $u_i\le 1$. Therefore,
    \begin{align*}
        \sum_{t=1}^T\inner{p_t-e_{i}, \hat{\ell}_t} &\le  \frac{{\slpn}\ln T}{\eta}+\frac{\ln(2\woslpn)}{\bar{\eta}}+2\eta\sum_{t=1}^T\hat{\ell}_{t,i}^2+\delta\sum_{t=1}^T\left(2\eta\sum_{j\in S\cap\wklyds}p_{t,j}\hat{\ell}_{t,j}^2+2\bar{\eta}\sum_{j\in \bar{S}\cap\wklyds}\hat{\ell}_{t,j}^2\right)\\
        &\quad+\frac{2\eta}{T}\sum_{t=1}^T\sum_{j\in S\backslash\wklyds}p_{t,j}\hat{\ell}_{t,j}^2+\delta\sum_{t=1}^T\sum_{j\in \wklyds}\hat{\ell}_{t,j}+\frac{1}{T}\sum_{t=1}^T\sum_{j\in S\backslash\wklyds}\hat{\ell}_{t,j} \\
        &\le \frac{\slpn\ln T}{\eta}+\frac{\ln(2\woslpn)}{\bar{\eta}}+\frac{2\eta}{\delta}\sum_{t=1}^T\hat{\ell}_{t,i}+2\delta\sum_{t=1}^T\sum_{j\in \wklyds}\hat{\ell}_{t,j}+\frac{2}{T}\sum_{t=1}^T\sum_{j\in S}\hat{\ell}_{t,j},
    \end{align*}
    where the second inequality holds by the same reasons for the case $i\in S$.
    Taking expectation over both sides, we have for $i\in S$
    \begin{align*}
        \Reg_i&\le\frac{2\slpn\ln T}{\eta}+2\eta\sum_{t=1}^T\mathbb{E}\left[\hat{\ell}_{t,i}\right]+2\delta\sum_{t=1}^T\sum_{j\in \wklyds}\mathbb{E}\left[\hat{\ell}_{t,j}\right] + \frac{2}{T}\sum_{t=1}^T\sum_{j\in S}\mathbb{E}\left[\hat{\ell}_{t,j}\right]\\
        &\le \frac{2\slpn\ln T}{\eta}+2\eta L_{i} + 2\delta\wklydn\LC + 2\slpn.
    \end{align*}
    Similarly for $i\in \bar{S}$, we have
    \begin{align*}
        \Reg_i&\le \frac{\slpn\ln T}{\eta}+\frac{\ln(2 \woslpn)}{\bar{\eta}}+\frac{2\bar{\eta}}{\delta}\mathbb{E}\left[\sum_{t=1}^T\hat{\ell}_{t,i}\right]+2\delta\sum_{t=1}^T\sum_{j\in \wklyds}\mathbb{E}\left[\hat{\ell}_{t,j}\right] + \frac{2}{T}\sum_{t=1}^T\sum_{j\in S}\mathbb{E}\left[\hat{\ell}_{t,j}\right]\\
        &\le \frac{\slpn\ln T}{\eta}+\frac{\ln(2 \woslpn)}{\bar{\eta}}+\frac{2\bar{\eta}L_{i}}{\delta} + 2\delta\wklydn\LC + 2\slpn.
    \end{align*}
    This finishes the proof.
\end{proof}

\subsection{Adaptive Version of \pref{alg:adahome} for General Weakly Observable Graphs}\label{app:DoublingTrickGeneralCase}

\setcounter{AlgoLine}{0}
\begin{algorithm}[t]
\caption{Adaptive Version for \pref{alg:adahome} for General Weakly Observable Graphs}\label{alg:adahome-general-doubling-trick}
\textbf{Input:} Feedback graph $G$ and parameter $\delta$.\\
\textbf{Initialize:} \initp\\
\nl
\For{$\lambda = 1,2,\dots$}{
    \nl $p_t=p_1$,~$\eta=\delta^{\frac{1}{2\gamma}}$,~$ \bar{\eta}=\delta^{\frac{1}{2}+\frac{1}{2\gamma}}$, $T_{\lambda}=t-1$.

	\nl Define decision set $\Omega = \{p\in \Delta(K):p_i\ge \delta,i\in\wklyds\}$.
	
	\nl Define hybrid regularizer $\psi(p)=\frac{1}{\eta}\sum_{i\in S}\ln \frac{1}{p_i}+\frac{1}{\bar{\eta}} \sum_{i\in \bar{S}}p_i\ln p_i$.
	
\nl \While{$t\le T$}{
\nl Execute \pref{line:alg2bg} to \pref{line:OMD_variant} of \pref{alg:adahome}.

\nl \If{$\delta^{-\frac{1}{\gamma}} \le \sum_{\tau=T_{\lambda}+1}^{t}\sum_{i\in \wklyds}\hat{\ell}_{t,i} $}{
\nl $\delta\leftarrow\frac{\delta}{2},~t \leftarrow t+1$.

\nl \textbf{Break}.}
\nl $t \leftarrow t + 1$.
}
}
\end{algorithm}

In this section, we introduce the adaptive version of \pref{alg:adahome} for general weakly observable graphs. For conciseness, we ignore the dependence on $\slpn$ and $\wklydn$ and prove that we can achieve the same result of \pref{thm:adahome-general-thm}. We also assume that the weakly dominating set $\wklyds$ contains at least one node in $S$ if $S$ is not empty. Otherwise, we can add an arbitrary node in $S$ to $\wklyds$ and the new set is still a weakly dominant set (just with size increased by 1). \pref{alg:adahome-general-doubling-trick} shows the full pseudocode. Unlike the \bigraph case, here we do not need the clipping technique. Formally, we have the following theorem. 

\begin{theorem}
    \pref{alg:adahome-general-doubling-trick} with $\delta=\min\{\frac{1}{125}, \frac{1}{4\wklydn}, \frac{1}{4\slpn}\}$ guarantees:
    \begin{align*}
        \Reg= \begin{cases} \mathcal{\tilde{O}}\left(\LC^{1-\gamma}\right), &\mbox{if } i^\star\in S.\\ 
        \mathcal{\tilde{O}}\left(\LC^{(1+\gamma)/2}\right), & \mbox{if } i^\star\in \bar{S}. \end{cases}
    \end{align*}
\end{theorem}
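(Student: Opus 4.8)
The plan is to follow the doubling-trick template of \pref{thm:DoublingTrickKappalstarthm}, combining the per-round guarantee of \pref{lem:lemadahome} (which, unlike a log-barrier-only analysis, already holds at \emph{every} round with no stability argument) with a ``switch the comparator'' trick. I would partition the horizon into epochs indexed by $\lambda$, write $\delta_\lambda=2^{1-\lambda}\delta_1$ for the value of $\delta$ during epoch $\lambda$ (so $\eta_\lambda=\delta_\lambda^{1/(2\gamma)}$ and $\bar\eta_\lambda=\delta_\lambda^{1/2+1/(2\gamma)}$), let $T_\lambda$ be the last round before epoch $\lambda$, let $\lambda^\star$ index the last epoch, and set $Y\triangleq\sum_{t=1}^T\sum_{i\in\wklyds}\hat\ell_{t,i}$, noting $\mathbb E[Y]=\wklydn\LC$ by unbiasedness of $\hat\ell_t$ on observable nodes. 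Inside a fixed epoch the parameters are constant and satisfy the hypotheses of \pref{thm:adahome-general-thm}: $\delta_\lambda\le\delta_1\le\min\{\tfrac1{125},\tfrac1{4\slpn},\tfrac1{4\wklydn}\}$, $\eta_\lambda\le\tfrac1{25}$, $\bar\eta_\lambda\le\delta_\lambda^{4/3}$ (as $\gamma\le\tfrac12$), and the decision set $\Omega$ forces $\sum_{j\in\Nin(i)}p_{t,j}\ge5\bar\eta_\lambda$ for all $i\in\bar S$ exactly as there (using that $\wklyds$ contains a node of $S$). Hence \pref{lem:lemadahome} applies round-by-round in the epoch and, repeating verbatim the calculation in the proof of \pref{thm:adahome-general-thm}, for any comparator $j$ the epoch regret $\sum_{t=T_\lambda+1}^{T_{\lambda+1}}\inner{p_t-e_j,\hat\ell_t}$ is at most $\tfrac2T\sum_{t=T_\lambda+1}^{T_{\lambda+1}}\sum_{i\in S}\hat\ell_{t,i}+2\delta_\lambda Z_\lambda$ plus $\tfrac{2\slpn\ln T}{\eta_\lambda}+2\eta_\lambda\sum_{t=T_\lambda+1}^{T_{\lambda+1}}\hat\ell_{t,j}$ if $j\in S$, or plus $\tfrac{\slpn\ln T}{\eta_\lambda}+\tfrac{\ln(2\woslpn)}{\bar\eta_\lambda}+\tfrac{2\bar\eta_\lambda}{\delta_\lambda}\sum_{t=T_\lambda+1}^{T_{\lambda+1}}\hat\ell_{t,j}$ if $j\in\bar S$, where $Z_\lambda\triangleq\sum_{t=T_\lambda+1}^{T_{\lambda+1}}\sum_{i\in\wklyds}\hat\ell_{t,i}$.

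Next, the reset rule keeps $Z_\lambda$ small: for $\lambda<\lambda^\star$ the rule fails at round $T_{\lambda+1}-1$, so $Z_\lambda\le\delta_\lambda^{-1/\gamma}+\wklydn/\delta_\lambda$ (the last round of the epoch adds at most $\sum_{i\in\wklyds}\hat\ell_{T_{\lambda+1},i}\le\wklydn/\delta_\lambda$, since $\hat\ell_{t,i}\le1/\delta_\lambda$ for $i\in\wklyds$, as in \pref{thm:adahome-general-thm}), and $Z_{\lambda^\star}<\delta_{\lambda^\star}^{-1/\gamma}$. I would then choose the comparator per epoch (all $t$-sums below over the current epoch). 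If $i^\star\in S$: compare against $e_{i^\star}$ when $\sum_t\hat\ell_{t,i^\star}\le Z_\lambda$, and otherwise against $e_{d_S}$ for an arbitrary $d_S\in\wklyds\cap S$ --- then $\sum_t\hat\ell_{t,i^\star}>Z_\lambda\ge\sum_t\hat\ell_{t,d_S}$, so replacing $e_{i^\star}$ by $e_{d_S}$ only enlarges the epoch regret, and in both cases we land on the $S$-bound with a comparator whose epoch loss is $\le Z_\lambda$, giving $\otil(\tfrac{\slpn}{\eta_\lambda}+\eta_\lambda Z_\lambda+\delta_\lambda Z_\lambda)$ plus the $\tfrac1T$-term. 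If $i^\star\in\bar S$: use the same dichotomy, applying the $\bar S$-bound against $e_{i^\star}$ in the first case and the $S$-bound against $e_{d_S}$ (or against the cheapest node of $\wklyds$, if $S=\emptyset$) in the second, giving $\otil(\tfrac{\slpn}{\eta_\lambda}+\tfrac1{\bar\eta_\lambda}+\tfrac{\bar\eta_\lambda}{\delta_\lambda}Z_\lambda+\delta_\lambda Z_\lambda)$ plus the $\tfrac1T$-term. The point of always being able to retreat to $\wklyds\cap S$ is that it avoids ever paying the $\tfrac1{\bar\eta_\lambda}$-type terms when $i^\star\in S$.

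Substituting $Z_\lambda\le\delta_\lambda^{-1/\gamma}+\wklydn/\delta_\lambda$ and the definitions of $\eta_\lambda,\bar\eta_\lambda$ makes every term a power of $1/\delta_\lambda$ (plus $\wklydn$-scaled lower-order powers): the dominant ones are $\delta_\lambda^{-1/(2\gamma)}$, $\delta_\lambda^{1-1/\gamma}$ and, only when $i^\star\in\bar S$, $\delta_\lambda^{-1/2-1/(2\gamma)}$. Since $\delta_\lambda$ decays geometrically, $\sum_{\lambda=1}^{\lambda^\star}$ of any positive power of $1/\delta_\lambda$ is $\mathcal O$ of its value at $\lambda=\lambda^\star$, so it remains to bound $1/\delta_{\lambda^\star}$: if $\lambda^\star\ge2$ the reset rule fired at the end of epoch $\lambda^\star-1$, whence $\delta_{\lambda^\star-1}^{-1/\gamma}\le Y$ and thus $1/\delta_{\lambda^\star}\le2Y^\gamma$ (and, using $Y\le\wklydn T/\delta_{\lambda^\star}$ together with $\gamma/(1-\gamma)\le1$, also $1/\delta_{\lambda^\star}=\mathcal O(\wklydn T)$, so $\lambda^\star=\otil(1)$); the case $\lambda^\star=1$ is trivial since then $\delta_{\lambda^\star}=\Theta(1)$. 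Plugging $1/\delta_{\lambda^\star}\le2Y^\gamma$ gives $\delta_{\lambda^\star}^{-1/(2\gamma)}=\mathcal O(\sqrt Y)$, $\delta_{\lambda^\star}^{1-1/\gamma}=\mathcal O(Y^{1-\gamma})$, and $\delta_{\lambda^\star}^{-1/2-1/(2\gamma)}=\mathcal O(Y^{(1+\gamma)/2})$; since $\tfrac12\le1-\gamma\le\tfrac{1+\gamma}2$ on $[\tfrac13,\tfrac12]$, the pathwise bound is $\widehat{\Reg}\le\otil(Y^{1-\gamma})+\tfrac2T\sum_{t=1}^T\sum_{i\in S}\hat\ell_{t,i}$ when $i^\star\in S$ and $\widehat{\Reg}\le\otil(Y^{(1+\gamma)/2})+\tfrac2T\sum_{t=1}^T\sum_{i\in S}\hat\ell_{t,i}$ when $i^\star\in\bar S$ (ignoring $\slpn,\wklydn$-dependence), where $\widehat{\Reg}=\sum_{t=1}^T\inner{p_t-e_{i^\star},\hat\ell_t}$ has $\mathbb E[\widehat{\Reg}]=\Reg$. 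Taking expectations and applying Jensen's inequality with $\mathbb E[Y]=\wklydn\LC$ and $\mathbb E[\tfrac2T\sum_{t=1}^T\sum_{i\in S}\hat\ell_{t,i}]\le2\slpn$ then yields $\Reg=\otil(\LC^{1-\gamma})$ for $i^\star\in S$ and $\Reg=\otil(\LC^{(1+\gamma)/2})$ for $i^\star\in\bar S$.

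The hard part will be the bookkeeping in the last two steps: one must check that \emph{every} error term --- in particular the ``last-round overshoot'' $\wklydn/\delta_\lambda$ hidden inside $Z_\lambda$, the term $\tfrac{\bar\eta_\lambda}{\delta_\lambda}Z_\lambda$, and $\tfrac1{\bar\eta_\lambda}$ --- once rewritten as a power of $1/\delta_{\lambda^\star}$ (of order $Y^\gamma$), comes out at most $Y^{1-\gamma}$ (resp.\ at most $Y^{(1+\gamma)/2}$) for \emph{every} $\gamma\in[\tfrac13,\tfrac12]$, and that the switch to a node of $\wklyds\cap S$ is always legitimate --- which is exactly why the algorithm insists on $\wklyds$ containing a self-loop node. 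A secondary nuisance is the mild circularity in bounding $\lambda^\star$, which is broken using $Y\le\wklydn T/\delta_{\lambda^\star}$ and $\gamma/(1-\gamma)\le1$.
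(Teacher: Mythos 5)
Your proposal is correct and mirrors the paper's own proof: the same epoching with $\delta_\lambda=2^{1-\lambda}\delta_1$, the same per-epoch application of the Theorem \ref{thm:adahome-general-thm} analysis via \pref{lem:lemadahome}, the same comparator switch to a self-loop node of $\wklyds$ when the epoch loss of $i^\star$ exceeds $Z_\lambda$, the same control of $Z_\lambda$ from the reset rule plus the last-round overshoot $\wklydn/\delta_\lambda$, and the same finish via $1/\delta_{\lambda^\star}\le 2Y^\gamma$ and Jensen. The only cosmetic deviations are that the paper's dichotomy for $i^\star\in\bar S$ with $S\neq\emptyset$ compares $\sum_t\hat\ell_{t,i^\star}$ directly to $\sum_t\hat\ell_{t,i_0}$ rather than to $Z_\lambda$, and that your explicit $\lambda^\star=\otil(1)$ bound is unnecessary since the geometric sum over epochs is already dominated by its last term.
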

\begin{proof}
    We call the time steps between two resets an epoch (indexed by $\lambda$) and let $\delta_\lambda$, $\eta_\lambda$, and $\bar{\eta}_\lambda$ be the value of $\delta$, $\eta$, and $\bar{\eta}$ during epoch $\lambda$ so that $\delta_\lambda = 2^{1-\lambda}\delta_1$, $\eta_\lambda=\delta_\lambda^{\nicefrac{1}{2\gamma}}$, and $ \bar{\eta}_\lambda=\delta_\lambda^{\nicefrac{(1+\gamma)}{2\gamma}}$.
    Also let $\lambda^\star$ be the index of the last epoch. According to the analysis in \pref{thm:adahome-general-thm}, for fixed $\eta$, $\bar{\eta}$ and $\delta$, we have 
    \begin{align*}
        \sum_{t=1}^T\inner{p_t-e_{i^\star}, \hat{\ell}_t} \le \begin{cases} \mathcal{\tilde{O}}\left(\frac{1}{\eta}+\eta\sum_{t=1}^T\hat{\ell}_{t,i^\star}+\delta\sum_{t=1}^T\sum_{i\in \wklyds}\hat{\ell}_{t,i}+\frac{1}{T}\sum_{t=1}^T\sum_{i\in S}\hat{\ell}_{t,i}\right), &\mbox{if } i^\star\in S.\\ 
        \mathcal{\tilde{O}}\left(\frac{1}{\eta}+\frac{1}{\bar{\eta}}+\frac{\bar{\eta}}{\delta}\sum_{t=1}^T\hat{\ell}_{t,i^\star}+\delta\sum_{t=1}^T\sum_{i\in \wklyds}\hat{\ell}_{t,i}+\frac{1}{T}\sum_{t=1}^T\sum_{i\in S}\hat{\ell}_{t,i}\right), &\mbox{if } i^\star\in S.\\ \end{cases}
    \end{align*}
    Here, $\mathcal{\tilde{O}}(\cdot)$ suppresses all the constant factors, log factors and the dependence on $\slpn$ and $\wklydn$. Now we consider the regret of epoch $\lambda$. We will prove that
    \begin{equation}
        \begin{split}
            \sum_{t=T_{\lambda}+1}^{T_{\lambda+1}}\inner{p_t, \hat{\ell}_t}-\sum_{t=T_{\lambda}+1}^{T_{\lambda+1}}\hat{\ell}_{t,i^\star}\le
            \begin{cases}
                \mathcal{\tilde{O}}\left(\delta_{\lambda}^{1-\frac{1}{\gamma}}+\frac{1}{T}\sum_{t=T_{\lambda}+1}^{T_{\lambda+1}}\sum_{i\in S}\hat{\ell}_{t,i}\right), &\mbox{if $i^\star\in S$},\\
                \mathcal{\tilde{O}}\left(\delta_{\lambda}^{-\frac{1}{2}-\frac{1}{2\gamma}}+\frac{1}{T}\sum_{t=T_{\lambda}+1}^{T_{\lambda+1}}\sum_{i\in S}\hat{\ell}_{t,i}\right), &\mbox{if $i^\star\in \bar{S}$}.
            \end{cases}
        \end{split}\label{eq:general doubling trick}
    \end{equation}
    If $i^\star\in S$, we have from the proof of \pref{thm:adahome-general-thm}
    \begin{align*}
        \sum_{t=T_{\lambda}+1}^{T_{\lambda+1}}\inner{p_t, \hat{\ell}_t}-\sum_{t=T_{\lambda}+1}^{T_{\lambda+1}}\hat{\ell}_{t,i^\star}
        &\le \mathcal{\tilde{O}}\left(\frac{1}{\eta_{\lambda}}+\eta_{\lambda}\sum_{t=T_{\lambda}+1}^{T_{\lambda+1}}\hat{\ell}_{t,i^\star}+\delta_{\lambda}\sum_{t=T_{\lambda}+1}^{T_{\lambda+1}}\sum_{i\in \wklyds}\hat{\ell}_{t,i}+\frac{1}{T}\sum_{t=T_{\lambda}+1}^{T_{\lambda+1}}\sum_{i\in S}\hat{\ell}_{t,i}\right).
        \end{align*}
        Compare the terms $\sum_{t=T_{\lambda}+1}^{T_{\lambda+1}}\hat{\ell}_{t,i^\star}$ and $\sum_{t=T_{\lambda}+1}^{T_{\lambda+1}}\sum_{i\in \wklyds}\hat{\ell}_{t,i}$. If $\sum_{t=T_{\lambda}+1}^{T_{\lambda+1}}\hat{\ell}_{t,i^\star}\le \sum_{t=T_{\lambda}+1}^{T_{\lambda+1}}\sum_{i\in \wklyds}\hat{\ell}_{t,i}$, then we have
        \begin{align*}
        	&\sum_{t=T_{\lambda}+1}^{T_{\lambda+1}}\inner{p_t, \hat{\ell}_t}-\sum_{t=T_{\lambda}+1}^{T_{\lambda+1}}\hat{\ell}_{t,i^\star}\\
        	&\le \mathcal{\tilde{O}}\left(\frac{1}{\eta_{\lambda}}+\left(\delta_{\lambda}^{\frac{1}{2\gamma}}+\delta_{\lambda}\right)\sum_{t=T_{\lambda}+1}^{T_{\lambda+1}}\sum_{i\in \wklyds}\hat{\ell}_{t,i}+\frac{1}{T}\sum_{t=T_{\lambda}+1}^{T_{\lambda+1}}\sum_{i\in S}\hat{\ell}_{t,i}\right) \tag{$\eta_{\lambda}=\delta_{\lambda}^{\frac{1}{2\gamma}}$}\\
        & \le \mathcal{\tilde{O}}\left(\delta_\lambda^{-\frac{1}{2\gamma}}+\delta_{\lambda}\sum_{t=T_{\lambda}+1}^{T_{\lambda+1}}\sum_{i\in \wklyds}\hat{\ell}_{t,i}+\frac{1}{T}\sum_{t=T_{\lambda}+1}^{T_{\lambda+1}}\sum_{i\in S}\hat{\ell}_{t,i}\right) \tag{$\delta_{\lambda}\le 1$ and $\gamma\in [\nicefrac{1}{3}, \nicefrac{1}{2}]$}\\
        & \le \mathcal{\tilde{O}}\left(\delta_{\lambda}^{-\frac{1}{2\gamma}}+\delta_{\lambda}^{1-\frac{1}{\gamma}}+\delta_{\lambda}\sum_{i\in\wklyds}\hat{\ell}_{T_{\lambda+1},i}+\frac{1}{T}\sum_{t=T_{\lambda}+1}^{T_{\lambda+1}}\sum_{i\in S}\hat{\ell}_{t,i}\right) \tag{$\delta_{\lambda}^{-\nicefrac{1}{\gamma}}> \sum_{\tau=T_{\lambda}+1}^{T_{\lambda+1}-1}\sum_{i\in \wklyds}\hat{\ell}_{t,i}$}\\
        &\le \mathcal{\tilde{O}}\left(\delta_{\lambda}^{1-\frac{1}{\gamma}}+\frac{1}{T}\sum_{t=T_{\lambda}+1}^{T_{\lambda+1}}\sum_{i\in S}\hat{\ell}_{t,i}\right).
    \end{align*}
   	The last inequality is because for $i\in\mathcal{D}$, if $i$ has a self-loop, then $\hat{\ell}_{t,i}\le \frac{1}{p_{t,i}}\le \frac{1}{\delta_{\lambda}}$; if $i$ does not have a self-loop, it can be observed either by all the other nodes or at least one node in $\wklyds$, which also means that $\hat{\ell}_{t,i}\le\frac{1}{\delta_{\lambda}}$. Therefore, $\delta_{\lambda}\sum_{i\in \wklyds}\hat{\ell}_{T_{\lambda+1},i}\le \wklydn$. We also use $\delta_{\lambda}^{-\nicefrac{1}{2\gamma}}\le \delta_{\lambda}^{1-\nicefrac{1}{\gamma}}$ as $\delta_{\lambda}\le 1$ and $\gamma\in [\nicefrac{1}{3},\nicefrac{1}{2}]$.
   	
   	If $\sum_{t=T_{\lambda}+1}^{T_{\lambda+1}}\hat{\ell}_{t,i^\star}\ge \sum_{t=T_{\lambda}+1}^{T_{\lambda+1}}\sum_{i\in \wklyds}\hat{\ell}_{t,i}$, then let $i_0\in \wklyds$ be the node with a self-loop in $\wklyds$ and we have $\sum_{t=T_{\lambda}+1}^{T_{\lambda+1}}\hat{\ell}_{t,i^\star}\ge \sum_{t=T_{\lambda}+1}^{T_{\lambda+1}}\hat{\ell}_{t,i_0}$. Therefore,
   	\begin{align}
   		\sum_{t=T_{\lambda}+1}^{T_{\lambda+1}}\inner{p_t, \hat{\ell}_t}-\sum_{t=T_{\lambda}+1}^{T_{\lambda+1}}\hat{\ell}_{t,i^\star} &\le \sum_{t=T_{\lambda}+1}^{T_{\lambda+1}}\inner{p_t, \hat{\ell}_t}-\sum_{t=T_{\lambda}+1}^{T_{\lambda+1}}\hat{\ell}_{t,i_0} \nonumber\\
   		&\le \mathcal{\tilde{O}}\left(\frac{1}{\eta_{\lambda}}+\eta_{\lambda}\sum_{t=T_{\lambda}+1}^{T_{\lambda+1}}\hat{\ell}_{t,i_0}+\delta_{\lambda}\sum_{t=T_{\lambda}+1}^{T_{\lambda+1}}\sum_{i\in \wklyds}\hat{\ell}_{t,i}+\frac{1}{T}\sum_{t=T_{\lambda}+1}^{T_{\lambda+1}}\sum_{i\in S}\hat{\ell}_{t,i}\right)\nonumber\\
   		&\le \mathcal{\tilde{O}}\left(\delta_\lambda^{-\frac{1}{2\gamma}}+\delta_{\lambda}\sum_{t=T_{\lambda}+1}^{T_{\lambda+1}}\sum_{i\in \wklyds}\hat{\ell}_{t,i}+\frac{1}{T}\sum_{t=T_{\lambda}+1}^{T_{\lambda+1}}\sum_{i\in S}\hat{\ell}_{t,i}\right)\nonumber \\
   		&\le \mathcal{\tilde{O}}\left(\delta_{\lambda}^{1-\frac{1}{\gamma}}+\frac{1}{T}\sum_{t=T_{\lambda}+1}^{T_{\lambda+1}}\sum_{i\in S}\hat{\ell}_{t,i}\right).\label{eq:doubling weakly node S}
   	\end{align}
   	The third inequality is because $i_0\in \wklyds$ and also $\eta_{\lambda}\le \delta_{\lambda}$, and the last inequality is by the same reason as in the previous case.
   	
   	Next we consider the case $i^\star\in \bar{S}$. We have again from the proof of \pref{thm:adahome-general-thm}:
    \begin{align*}
        &\sum_{t=T_{\lambda}+1}^{T_{\lambda+1}}\inner{p_t, \hat{\ell}_t}-\sum_{t=T_{\lambda}+1}^{T_{\lambda+1}}\hat{\ell}_{t,i^\star} \\
        &\le \mathcal{\tilde{O}}\left(\frac{1}{\eta_{\lambda}}+\frac{1}{\bar{\eta}_{\lambda}}+\frac{\bar{\eta}_{\lambda}}{\delta_{\lambda}}\sum_{t=T_{\lambda}+1}^{T_{\lambda+1}}\hat{\ell}_{t,i^\star}+\delta_{\lambda}\sum_{t=T_{\lambda}+1}^{T_{\lambda+1}}\sum_{i\in \wklyds}\hat{\ell}_{t,i}+\frac{1}{T}\sum_{t=T_{\lambda}+1}^{T_{\lambda+1}}\sum_{i\in S}\hat{\ell}_{t,i}\right)\\
        &= \mathcal{\tilde{O}}\left(\delta_{\lambda}^{-\frac{1}{2}-\frac{1}{2\gamma}}+\delta_{\lambda}^{-\frac{1}{2}+\frac{1}{2\gamma}}\sum_{t=T_{\lambda}+1}^{T_{\lambda+1}}\hat{\ell}_{t,i^\star}+\delta_{\lambda}\sum_{t=T_{\lambda}+1}^{T_{\lambda+1}}\sum_{i\in \wklyds}\hat{\ell}_{t,i}+\frac{1}{T}\sum_{t=T_{\lambda}+1}^{T_{\lambda+1}}\sum_{i\in S}\hat{\ell}_{t,i}\right),
    \end{align*}
    where the last step is because $\eta_{\lambda}=\delta_{\lambda}^{\nicefrac{1}{2\gamma}}\ge \delta_{\lambda}^{\nicefrac{(\gamma+1)}{2\gamma}}=\bar{\eta}_{\lambda}$. Consider the following two cases. If $S=\emptyset$, then $\wklyds$ contains at least one node in $\bar{S}$, which means that $\min_{i\in \bar{S}}\sum_{t=T_{\lambda}+1}^{T_{\lambda+1}}\hat{\ell}_{t,i}\le \sum_{t=T_{\lambda}+1}^{T_{\lambda+1}}\sum_{i\in \wklyds}\hat{\ell}_{t,i}$. Therefore, following similar steps as in previous cases, we have
    \begin{align*}
    	&\sum_{t=T_{\lambda}+1}^{T_{\lambda+1}}\inner{p_t, \hat{\ell}_t}-\sum_{t=T_{\lambda}+1}^{T_{\lambda+1}}\hat{\ell}_{t,i^\star} \le \sum_{t=T_{\lambda}+1}^{T_{\lambda+1}}\inner{p_t, \hat{\ell}_t}-\min_{i\in \bar{S}}\sum_{t=T_{\lambda}+1}^{T_{\lambda+1}}\hat{\ell}_{t,i}\\
        &\le \mathcal{\tilde{O}}\left(\delta_{\lambda}^{-\frac{1}{2}-\frac{1}{2\gamma}}+\delta_{\lambda}^{-\frac{1}{2}+\frac{1}{2\gamma}}\min_{i\in \bar{S}}\sum_{t=T_{\lambda}+1}^{T_{\lambda+1}}\hat{\ell}_{t,i}+\delta_{\lambda}\sum_{t=T_{\lambda}+1}^{T_{\lambda+1}}\sum_{i\in \wklyds}\hat{\ell}_{t,i}+\frac{1}{T}\sum_{t=T_{\lambda}+1}^{T_{\lambda+1}}\sum_{i\in S}\hat{\ell}_{t,i}\right)\\
        &\le \mathcal{\tilde{O}}\left(\delta_{\lambda}^{-\frac{1}{2}-\frac{1}{2\gamma}}+\delta_{\lambda}^{-\frac{1}{2}+\frac{1}{2\gamma}}\sum_{t=T_{\lambda}+1}^{T_{\lambda+1}}\sum_{i\in \wklyds}\hat{\ell}_{t,i}+\frac{1}{T}\sum_{t=T_{\lambda}+1}^{T_{\lambda+1}}\sum_{i\in S}\hat{\ell}_{t,i}\right)\\
        & \le \mathcal{\tilde{O}}\left(\delta_{\lambda}^{-\frac{1}{2}-\frac{1}{2\gamma}}+\delta_{\lambda}^{-\frac{1}{2}+\frac{1}{2\gamma}}\sum_{i\in \wklyds}\hat{\ell}_{T_{\lambda+1},i}+\frac{1}{T}\sum_{t=T_{\lambda}+1}^{T_{\lambda+1}}\sum_{i\in S}\hat{\ell}_{t,i}\right) \\
        & \le \mathcal{\tilde{O}}' \left(\delta_{\lambda}^{-\frac{1}{2}-\frac{1}{2\gamma}}+\frac{1}{T}\sum_{t=T_{\lambda}+1}^{T_{\lambda+1}}\sum_{i\in S}\hat{\ell}_{t,i}\right).
    \end{align*}
    If $S\ne \emptyset$, then $\mathcal{D}$ contains at least one node in $S$. Let this node be $i_0$. Now we compare $\sum_{t=T_{\lambda}+1}^{T_{\lambda+1}}\hat{\ell}_{t,i^\star}$ with $\sum_{t=T_{\lambda}+1}^{T_{\lambda+1}}\hat{\ell}_{t,i_0}$. If $\sum_{t=T_{\lambda}+1}^{T_{\lambda+1}}\hat{\ell}_{t,i^\star}\le\sum_{t=T_{\lambda}+1}^{T_{\lambda+1}}\hat{\ell}_{t,i_0}$, then we have
    \begin{align*}
    &\sum_{t=T_{\lambda}+1}^{T_{\lambda+1}}\inner{p_t, \hat{\ell}_t}-\sum_{t=T_{\lambda}+1}^{T_{\lambda+1}}\hat{\ell}_{t,i^\star}\\
    &\le \mathcal{\tilde{O}}\left(\delta_{\lambda}^{-\frac{1}{2}-\frac{1}{2\gamma}}+\delta_{\lambda}^{-\frac{1}{2}+\frac{1}{2\gamma}}\sum_{t=T_{\lambda}+1}^{T_{\lambda+1}}\hat{\ell}_{t,i_0}+\delta_{\lambda}\sum_{t=T_{\lambda}+1}^{T_{\lambda+1}}\sum_{i\in \wklyds}\hat{\ell}_{t,i}+\frac{1}{T}\sum_{t=T_{\lambda}+1}^{T_{\lambda+1}}\sum_{i\in S}\hat{\ell}_{t,i}\right)\\
    &\le \mathcal{\tilde{O}}\left(\delta_{\lambda}^{-\frac{1}{2}-\frac{1}{2\gamma}}+\delta_{\lambda}^{-\frac{1}{2}+\frac{1}{2\gamma}}\sum_{t=T_{\lambda}+1}^{T_{\lambda+1}}\sum_{i\in \wklyds}\hat{\ell}_{t,i}+\frac{1}{T}\sum_{t=T_{\lambda}+1}^{T_{\lambda+1}}\sum_{i\in S}\hat{\ell}_{t,i}\right)\\
    & \le \mathcal{\tilde{O}} \left(\delta_{\lambda}^{-\frac{1}{2}-\frac{1}{2\gamma}}+\frac{1}{T}\sum_{t=T_{\lambda}+1}^{T_{\lambda+1}}\sum_{i\in S}\hat{\ell}_{t,i}\right),
    \end{align*}
    where the second inequality is because $\sum_{t=T_{\lambda}+1}^{T_{\lambda+1}}\hat{\ell}_{t,i_0}\le \sum_{t=T_{\lambda}+1}^{T_{\lambda+1}}\sum_{i\in \wklyds}\hat{\ell}_{t,i}$. Otherwise, according to \pref{eq:doubling weakly node S}, we have
    \begin{align*}
    \sum_{t=T_{\lambda}+1}^{T_{\lambda+1}}\inner{p_t, \hat{\ell}_t}-\sum_{t=T_{\lambda}+1}^{T_{\lambda+1}}\hat{\ell}_{t,i^\star} \le \sum_{t=T_{\lambda}+1}^{T_{\lambda+1}}\inner{p_t, \hat{\ell}_t}-\sum_{t=T_{\lambda}+1}^{T_{\lambda+1}}\hat{\ell}_{t,i_0}\le \mathcal{\tilde{O}}\left(\delta_{\lambda}^{1-\frac{1}{\gamma}}+\frac{1}{T}\sum_{t=T_{\lambda}+1}^{T_{\lambda+1}}\sum_{i\in S}\hat{\ell}_{t,i}\right).
    \end{align*}
    By combining all the above cases, we finish the proof of \pref{eq:general doubling trick}. Summing up the regret from $\lambda=1,2,\dots,\lambda^\star$, we have for $i^\star\in S$:
    \begin{align*}
        \sum_{t=1}^T\inner{p_t, \hat{\ell}_t}-\sum_{t=1}^T\hat{\ell}_{t,i^\star}&\le \sum_{\lambda=1}^{\lambda^\star}\mathcal{\tilde{O}}\left(\delta_{\lambda}^{1-\frac{1}{\gamma}}+\frac{1}{T}\sum_{t=T_{\lambda}+1}^{T_{\lambda+1}}\sum_{i\in S}\hat{\ell}_{t,i}\right)\\
        &\le \mathcal{\tilde{O}}\left(\delta_{\lambda^\star}^{1-\frac{1}{\gamma}}+\frac{1}{T}\sum_{t=1}^T\sum_{i\in S}\hat{\ell}_{t,i}\right)\\
        &\le \mathcal{\tilde{O}}\left(\delta_1^{1-\frac{1}{\gamma}}\cdot 2^{\lambda^\star\left(-1+\frac{1}{\gamma}\right)}+\frac{1}{T}\sum_{t=1}^T\sum_{i\in S}\hat{\ell}_{t,i}\right).
    \end{align*}
    and for $i^\star\in \bar{S}$:
    \begin{align*}
        \sum_{t=1}^T\inner{p_t, \hat{\ell}_t}-\sum_{t=1}^T\hat{\ell}_{t,i^\star}&\le \sum_{\lambda=1}^{\lambda^\star}\mathcal{\tilde{O}}\left(\delta_{\lambda}^{-\frac{1}{2}-\frac{1}{2\gamma}}+\frac{1}{T}\sum_{t=T_{\lambda}+1}^{T_{\lambda+1}}\sum_{i\in S}\hat{\ell}_{t,i}\right)\\
        &\le \mathcal{\tilde{O}}\left(\delta_{\lambda^\star}^{-\frac{1}{2}-\frac{1}{2\gamma}}+\frac{1}{T}\sum_{t=1}^T\sum_{i\in S}\hat{\ell}_{t,i}\right)\\
        &\le \mathcal{\tilde{O}}\left(\delta_1^{-\frac{1}{2}-\frac{1}{2\gamma}}\cdot 2^{\lambda^\star\left(\frac{1}{2}+\frac{1}{2\gamma}\right)}+\frac{1}{T}\sum_{t=1}^T\sum_{i\in S}\hat{\ell}_{t,i}\right).
    \end{align*}
    Below we upper bound $\lambda^\star$. If $\lambda^\star\ge 2$, consider the last round of epoch $\lambda^\star-1$. According to the update rule, we have
    \begin{align*}
        \left(\delta_1\cdot 2^{-\lambda^\star+1}\right)^{-\frac{1}{\gamma}} = \delta_{\lambda^\star-1}^{-\frac{1}{\gamma}}\le \sum_{t=T_{\lambda^\star-1}+1}^{T_{\lambda^\star}}\sum_{i\in \wklyds}\hat{\ell}_{t,i}\le \sum_{t=1}^T\sum_{i\in \wklyds}\hat{\ell}_{t,i}.
    \end{align*}
    Plugging the above result to both cases, we have
    \begin{align*}
        \sum_{t=1}^T\inner{p_t-e_{i^\star}, \hat{\ell}_t} \le
        \begin{cases}
        \mathcal{\tilde{O}}\left(\frac{1}{T}\sum_{t=1}^T\sum_{i\in S}\hat{\ell}_{t,i}+\left(\sum_{t=1}^T\sum_{i\in \wklyds}\hat{\ell}_{t,i}\right)^{1-\gamma}\right),&\mbox{if $i^\star\in S$},\\
        \mathcal{\tilde{O}}\left(\frac{1}{T}\sum_{t=1}^T\sum_{i\in S}\hat{\ell}_{t,i}+\left(\sum_{t=1}^T\sum_{i\in \wklyds}\hat{\ell}_{t,i}\right)^{\frac{\left(1+\gamma\right)}{2}}\right),&\mbox{if $i^\star\in \bar{S}$}.
        \end{cases}
    \end{align*}
    Combining the case $\lambda^\star=1$, taking expectation over both sides and using Jensen's inequality, we have:
    \begin{align*}
        \Reg \le
        \begin{cases}
        \mathcal{\tilde{O}}\left(\mathbb{E}\left[\left(\sum_{t=1}^T\sum_{i\in \wklyds}\hat{\ell}_{t,i}\right)^{1-\gamma}\right]\right)\le \mathcal{\tilde{O}}\left(L_{\wklyds}^{1-\gamma}\right),&\mbox{if $i^\star\in S$},\\
        \mathcal{\tilde{O}}\left(\mathbb{E}\left[\left(\sum_{t=1}^T\sum_{i\in \wklyds}\hat{\ell}_{t,i}\right)^{\frac{\left(1+\gamma\right)}{2}}\right]\right)\le \mathcal{\tilde{O}}\left(L_{\wklyds}^{\frac{1+\gamma}{2}}\right),&\mbox{if $i^\star\in \bar{S}$}.
        \end{cases}
    \end{align*}
    This completes the proof.
\end{proof}

\end{document}